%%%%%%%% ICML 2025 EXAMPLE LATEX SUBMISSION FILE %%%%%%%%%%%%%%%%%

\documentclass{article}

% Recommended, but optional, packages for figures and better typesetting:
\usepackage{microtype}
\usepackage{graphicx}
\usepackage{subfigure}
\usepackage{booktabs} % for professional tables

% hyperref makes hyperlinks in the resulting PDF.
% If your build breaks (sometimes temporarily if a hyperlink spans a page)
% please comment out the following usepackage line and replace
% \usepackage{icml2025} with \usepackage[nohyperref]{icml2025} above.
\usepackage{hyperref}

% Attempt to make hyperref and algorithmic work together better:

% Use the following line for the initial blind version submitted for review:
 \usepackage[accepted]{icml2025}

% If accepted, instead use the following line for the camera-ready submission:
% \usepackage[accepted]{icml2025}

% For theorems and such
\usepackage{amsmath}
\usepackage{amssymb}
\usepackage{mathtools}
\usepackage{amsthm}
\usepackage{xspace}
\usepackage{wrapfig}
\usepackage{pifont}
\usepackage{color}
\usepackage{xfrac}
\usepackage{flushend}
\usepackage{multicol}
\usepackage{enumitem}
\usepackage{tikz}
\usepackage{multirow}
\usetikzlibrary{tikzmark}
\usepackage{algorithmic}
\usepackage{appendix}
\usepackage{lipsum}
\usepackage{tablefootnote}
\usepackage{threeparttable, array, float}
\usepackage{nicefrac}
\usepackage{tcolorbox}

% if you use cleveref..
\usepackage[capitalize,noabbrev]{cleveref}

%%%%%%%%%%%%%%%%%%%%%%%%%%%%%%%%
% THEOREMS
%%%%%%%%%%%%%%%%%%%%%%%%%%%%%%%%
\theoremstyle{plain}
\newtheorem{theorem}{Theorem}[section]

\newtheorem{lemma}[theorem]{Lemma}
\newtheorem{corollary}[theorem]{Corollary}
\theoremstyle{definition}
\newtheorem{definition}[theorem]{Definition}

\theoremstyle{remark}

\definecolor{commentgrey}{rgb}{0.5, 0.5, 0.5}
\newcommand{\algcomment}[1]{\quad $\triangleright$ \textcolor{commentgrey}{\text{#1}}}

\newcommand{\sref}[2]{\hyperref[#2]{#1 \ref{#2}}}

\definecolor{darkgreen}{RGB}{0,100,0} 

\definecolor{Mypurple}{RGB}{128,0,128}

\newcommand{\adaml}[1]{  
	{\textcolor{purple}{(Adam L. says:  #1)}}{}}

\newcommand{\vpr}{\hat{v}}
\newcommand{\wpr}{\hat{\omega}}
\newcommand{\wpt}{\tilde{\omega}}

\newcommand{\OPT}{\texttt{OPT}\xspace}
\newcommand{\CR}{\text{CR}\xspace}
\newcommand{\ALG}{\texttt{ALG}\xspace}
\newcommand{\OFKP}{\texttt{OFKP}\xspace}
\newcommand{\OKP}{\texttt{OKP}\xspace}
\newcommand{\OIKP}{\texttt{OIKP}\xspace}
\newcommand{\ta}{\texttt{ZCL}\xspace} % threshold-based alg
\newcommand{\ppn}{\texttt{PP-n}\xspace} % trusted-prediction alg - basic
\newcommand{\ppb}{\texttt{PP-b}\xspace} % trusted-prediction alg - basic
\newcommand{\conv}{\texttt{Fr2Int}\xspace} % trusted-prediction alg - basic
\newcommand{\ppa}{\texttt{PP-a}\xspace} % trusted-prediction alg - advanced

\newcommand{\ipa}{\texttt{IPA}\xspace} % interval-trusted-prediction alg
 % interval-trusted-prediction alg
 % interval-trusted-prediction alg
 % meta-algorithm
\newcommand{\ma}{\texttt{MIX}\xspace} % meta-algorithm
\newcommand{\autorefplural}[2]{%
  \hyperref[#1]{Theorems}~\ref{#1}, ~\ref{#2}%
}

% begin appendix autoref patch [\autoref subsections in appendix](https://tex.stackexchange.com/questions/149807/autoref-subsections-in-appendix)
\usepackage{etoolbox}

\makeatletter
\renewcommand\paragraph{\@startsection{paragraph}{4}{\parindent}%
  {2pt}
  {-\parindent}
  {\normalfont\normalsize\bfseries}}
\makeatother

%%%%% NEW MATH DEFINITIONS %%%%%

\usepackage{amsmath,amsfonts,bm}

% Mark sections of captions for referring to divisions of figures

% Highlight a newly defined term

% \newcommand{\OFKP}{\mathsf{OFKP}}

% Figure reference, lower-case.

% Figure reference, capital. For start of sentence

% Section reference, lower-case.

% Section reference, capital.

% Reference to two sections.

% Reference to three sections.

% Reference to an equation, lower-case.
\def\eqref#1{equation~\ref{#1}}
% Reference to an equation, upper case

% A raw reference to an equation---avoid using if possible

% Reference to a chapter, lower-case.

% Reference to an equation, upper case.

% Reference to a range of chapters

% Reference to an algorithm, lower-case.

% Reference to an algorithm, upper case.

% Reference to a part, lower case

% Reference to a part, upper case

\def\ceil#1{\lceil #1 \rceil}

\def\1{\bm{1}}

% Random variables

% rm is already a command, just don't name any random variables m

% Random vectors

% Elements of random vectors

% Random matrices

% Elements of random matrices

% Vectors

% Elements of vectors

% Matrix

% Tensor
\DeclareMathAlphabet{\mathsfit}{\encodingdefault}{\sfdefault}{m}{sl}
\SetMathAlphabet{\mathsfit}{bold}{\encodingdefault}{\sfdefault}{bx}{n}

% Graph

% Sets

% Don't use a set called E, because this would be the same as our symbol
% for expectation.

% Entries of a matrix

% entries of a tensor
% Same font as tensor, without \bm wrapper

% The true underlying data generating distribution

% The empirical distribution defined by the training set

% The model distribution

% Stochastic autoencoder distributions

% \newcommand{\laplace}{\mathrm{Laplace}} % Laplace distribution

\newcommand{\E}{\mathbb{E}}

% Wolfram Mathworld says $L^2$ is for function spaces and $\ell^2$ is for vectors
% But then they seem to use $L^2$ for vectors throughout the site, and so does
% wikipedia.

 % See usage in notation.tex. Chosen to match Daphne's book.

\DeclareMathOperator*{\argmax}{arg\,max}

%Cameron additions.
\newcommand{\eqdef}{\mathbin{\stackrel{\rm def}{=}}}

\def\eqref#1{(\ref{#1})}

% Todonotes is useful during development; simply uncomment the next line
%    and comment out the line below the next line to turn off comments
%\usepackage[disable,textsize=tiny]{todonotes}
\usepackage[textsize=tiny]{todonotes}

% The \icmltitle you define below is probably too long as a header.
% Therefore, a short form for the running title is supplied here:
\icmltitlerunning{Near-Optimal Consistency-Robustness Trade-Offs for Learning-Augmented Online Knapsack Problems}

\begin{document}

\twocolumn[
\icmltitle{Near-Optimal Consistency-Robustness Trade-Offs \\for Learning-Augmented Online Knapsack Problems}

% It is OKAY to include author information, even for blind
% submissions: the style file will automatically remove it for you
% unless you've provided the [accepted] option to the icml2025
% package.

% List of affiliations: The first argument should be a (short)
% identifier you will use later to specify author affiliations
% Academic affiliations should list Department, University, City, Region, Country
% Industry affiliations should list Company, City, Region, Country

% You can specify symbols, otherwise they are numbered in order.
% Ideally, you should not use this facility. Affiliations will be numbered
% in order of appearance and this is the preferred way.
\icmlsetsymbol{equal}{*}

\begin{icmlauthorlist}
\icmlauthor{Mohammadreza Daneshvaramoli}{umass}
\icmlauthor{Helia Karisani}{umass}
\icmlauthor{Adam Lechowicz}{umass}
\icmlauthor{Bo Sun}{waterloo}
\icmlauthor{Cameron Musco}{umass}
\icmlauthor{Mohammad Hajiesmaili}{umass}
\end{icmlauthorlist}

\icmlaffiliation{umass}{Manning College of Information and Computer Sciences, University of Massachusetts Amherst, Amherst, MA, USA}
\icmlaffiliation{waterloo}{Cheriton School of Computer Science, University of Waterloo, Waterloo, ON, Canada}

\icmlcorrespondingauthor{Mohammadreza Daneshvaramoli}{mdaneshvaram@umass.edu}
\icmlcorrespondingauthor{Helia Karisani}{hkarisani@umass.edu}

% You may provide any keywords that you
% find helpful for describing your paper; these are used to populate
% the "keywords" metadata in the PDF but will not be shown in the document
\icmlkeywords{Learning Augmented Algorithms, Online Knapsack}

\vskip 0.25in
]

% this must go after the closing bracket ] following \twocolumn[ ...

% This command actually creates the footnote in the first column
% listing the affiliations and the copyright notice.
% The command takes one argument, which is text to display at the start of the footnote.
% The \icmlEqualContribution command is standard text for equal contribution.
% Remove it (just {}) if you do not need this facility.

\printAffiliationsAndNotice{}  % leave blank if no need to mention equal contribution
%\printAffiliationsAndNotice{\icmlEqualContribution} % otherwise use the standard text.

\begin{abstract}

This paper introduces a family of learning-augmented algorithms for online knapsack problems that achieve near Pareto-optimal consistency-robustness trade-offs through a simple combination of trusted learning-augmented and worst-case algorithms. Our approach relies on succinct, practical predictions—single values or intervals estimating the minimum value of any item in an offline solution. Additionally, we propose a novel fractional-to-integral conversion procedure, offering new insights for online algorithm design.

\end{abstract}

\section{Introduction}
\label{sec:intro}
%\Cam{We should be explaining to the reader what the acrynoyms stand for.}
% \mo{as a high-level comment and to avoid redundant text, let's use okp whenever the context applies to both fractional and integral and only use specific acronyms when we are talking about a special case.}
% \md{any good name for conversion algorithm? currently is CONV, and CONV-ALG for conversation of alg like CONV-PPA-A}
% \adaml{I think this name is pretty good.  Can't think of anything better - others may have better ideas?} \bo{CONV looks good to me.}

% \mo{The paper is currently 14 pages (by moving related work and experiments in appendix to the main body). The page limit is 20 pages, and below are my suggestions for the changes.
% \begin{itemize}
%     \item Rewrite introduction in Sigmetrics style (Mohammad and Bo) -- 1 more page
%     \item Move the most important parts of proofs to the main body (Mohammadreza, Cameron, and Bo) -- up to 5 pages
%     \item Merge the main experiments with those in the appendix of the NeurIPS (Helia and Mohammadreza --- the content is already moved; we only need to make sure that the flow is consistent. 
%     \item (Discuss to see if it is feasible) Re-run experiments with Google data traces (Helia and Mohammadreza, with initial help from Adam).
% \end{itemize}
% It would be great if everyone can take a look at their tasks and during the meetings lead the discussions on their tasks listed above.}

Learning-augmented design~\citep{Lykouris:18, Purohit:18} is a successful framework whose goal is to systematically go beyond competitive analysis in online problems by leveraging ML predictions to improve algorithm performance without sacrificing worst-case guarantees. In this framework, online algorithms are evaluated using the concepts of \textit{consistency} and \textit{robustness}, which characterize the \textit{competitive ratio} (see \autoref{sec:prelims}) when the prediction is accurate or arbitrarily wrong, respectively.  An overarching design goal is to achieve a Pareto-optimal result that captures the best achievable trade-off between consistency and robustness. Toward this goal, algorithms have been proposed that achieve optimal consistency-robustness trade-offs for several problems such as ski rental~\cite{Bamas:20,Wei:20}, online search~\cite{SunLee:21}, and bin packing~\cite{Angelopoulos:24}. However, Pareto-optimal results remain open for several other online problems, including the online knapsack problem.

In the online knapsack problem (\OKP), the goal is to pack a finite number of sequentially arriving items with different values and weights into a knapsack with limited capacity, so that the total value of admitted items is maximized. In the online setting, the decision maker must immediately and irrevocably admit or reject an item upon its arrival without knowing future items. \OKP captures a broad range of resource allocation problems with applications to online advertising~\citep{ZCL:08}, resource allocation~\citep{Zhang:17,buchbinder2009online}, dynamic pricing~\cite{bostandoost2023near}, supply chain~\cite{ma2019competitive}, and beyond. 
% More recently, there has been a substantial effort in tackling more practical extensions of \OKP, including multidimensional knapsack~\cite{Yang2021Competitive}, \OKP with departure~\citep{SunYang:22}, and joint knapsack and scheduling~\cite{bostandoost2023near,SunZeynali:21}.
% \Cam{Do we need to define competitive ratio here or can we just assume the reader knows and start by saying;; It is well known that...' The reason I bring this up is that in earlier paragraphs we have already mentioned comepetitive analysis without defining it. So this feels sort of out of order.}
% \adaml{I think we can assume the reader knows.  I believe this second paragraph was originally the first paragraph of intro, hence the disconnect.  I will slightly edit the first paragraph and simplify this paragraph}
Historically, work on \OKP variants has focused on developing algorithms under the framework of competitive analysis~\citep{Bor:92}. 
%, defined as the worst-case ratio of the total value obtained from the offline optimum and the online algorithm.  
{It is well known that no online algorithm can achieve bounded competitive ratio for the basic version of \OKP~\citep{Marchetti:95,Bockenhauer:14} without making additional assumptions on the input, e.g., assuming that value-to-weight ratios are bounded~\citep{ZCL:08,SunLee:21,Yang2021Competitive}}. In this work, we go beyond competitive analysis, to study \OKP through the lens of learning-augmented design.

\begin{table*}[t]
\vspace{-1em}
\centering
\footnotesize
\caption{Summary of contributions.} 
\vspace{1mm}
\label{tbl:contributions}
\begin{threeparttable}
\begin{tabular}{|c|l|c|c|}
\hline
\textbf{Prediction Model} & \textbf{Algorithm} & \textbf{Upper Bound} & \textbf{Lower Bound} \\ \hline
Point Prediction & \hyperref[alg:two]{\ppb}, \hyperref[alg:wr1]{\ppa} & $2$, $1+\min\{1,\wpr\}$ (\autorefplural{thm:ppb}{thm:ppa-a}) & $1+\min\{1,\wpr\}$ (\autoref{thm:lb}) \\ \hline
Interval Prediction & \hyperref[alg:ipa]{\ipa} & $2 + \ln(\nicefrac{u}{\ell})$ (\autoref{thm:ipa}) & $2 +  \ln(\nicefrac{u}{\ell})$ (\autoref{thm:lb-ipa}) \\ \hline
Consistency- & \hyperref[sec:ma]{\ma} & $\nicefrac{2}{\lambda}$-consistent  ($\lambda \in (0,1)$) & $\nicefrac{2}{\lambda}$-consistent ($\lambda \in (0,1)$) \\ 
robustness & & $\frac{\ln(\nicefrac{U}{L})+1}{(1-\lambda)}$-robust (\autoref{thm:constrob})$^{*}$ & $\Omega \left( \frac{\ln(\nicefrac{U}{L})+1}{(1-\lambda)} \right) $-robust (\autoref{thm:const-rob-LB})$^*$ \\ \hline
\end{tabular}
\begin{tablenotes}[para, online,flushleft]
    \scriptsize
    \item $\blacktriangleright$ We present our results in the context of online fractional knapsack (\OFKP); in \autoref{thm:conv}, we give a generic conversion to the integral case, showing that the results extend with small loss to this setting under standard assumptions. Our lower bounds in the fractional case also hold in the integral case by nature of the relaxation; see \autoref{sec:integral-extension}.
    \item $\blacktriangleright$ $\wpr$ is the total weight of items with critical value $\vpr$ (see Def~\ref{def:pp}). 
    \item $\blacktriangleright$ $\ell$ and $u$ are lower and upper bound {predictions} on the critical value $\vpr$ (see Def.~\ref{def:ipa}).
    \item $\blacktriangleright$ $\lambda \in (0,1)$ is a \textit{trust hyperparameter} passed to a learning-augmented algorithm that leverages untrusted predictions.
    \item $\blacktriangleright$ {$^*$} indicates that we consider the setting where item values are bounded in $[L,U]$. 
    % \rev{ See \autoref{apx:limitations} for a detailed discussion of assumptions associated with each of our results.}
\end{tablenotes}
\end{threeparttable}
\normalsize
\vspace{-1em}
\end{table*}

\noindent\textbf{Prior results.}
There has been substantial work on learning-augmented algorithms for \OKP. We classify them into three categories: (i) \OKP with frequency predictions~\cite{sentinel21}, (ii) advice complexity of \OKP~\cite{Bockenhauer:14}, and (iii) Pareto-optimal algorithms for simplified \OKP variants~\cite{SunLee:21,LeeSun:22,Balseiro:23}. See \autoref{sec:relwork} for a complete literature review. In the following, we focus on closely related prior work.

In the first category, \citet{sentinel21} study  learning-augmented \OKP using \emph{frequency predictions}, i.e., predictions of the total weights of items for each possible unit value in an instance.  The authors provide upper bounds on the consistency and robustness of their $\textsc{sentinel}$ algorithm,  but do not provide a lower bound on the best achievable consistency-robustness trade-off under the frequency prediction model, leaving the question of optimality open.  Further, the frequency prediction model requires the algorithm to be given a large number of predictions (one for each possible unit value). Due to this complexity, accurate frequency predictions may be difficult to obtain in practice.  
% However, the frequency prediction model demands a large number of predictions. This type of complex prediction can be sensitive to errors and is costly to obtain in practice. Further, there is no Pareto-optimality analysis on the trade-off between consistency and robustness.  
% \Cam{Do they give a tradeoff? And it is just suboptimal? Or they don't give a tradeoff at all? In anycase, given that we have reframed the paper I think lack of Pareto optimal trade off should be mentioenedd first and complexity of predictions second. I also think we should soften the language a bit. Maybe: 'Further, the frequency prediction model requires the algorithm to be given a large number of predictions (one for each possible unit value). Due to this complexity, accurate predictions of this form may be difficult to obtain in practice.}
% \adaml{They do give an upper bound tradeoff in the sense that they have consistency robustness results, but no lower bound. }

% \Cam{Flagging that we need to put in the lower bound on $\epsilon$.}
In the second category, \citet{Bockenhauer:14} study the advice complexity of \OKP, giving an online algorithm with competitive ratio $1+\epsilon$ for any constant $\epsilon > 0$ using $O(\log n)$ bits of advice~\citep[][Theorems 13 and 17]{Bockenhauer:14}, where $n$ is the number of items. To achieve this result, their algorithm requires strong advice about the offline optimal  solution (including the number of items, multiple critical item values, and the indices of certain admitted items). With such strong advice, the algorithm itself is simple: it reserves the capacity for items from advice and greedily admits other items. However, acquiring the required advice from an ML model in practice is likely difficult, e.g., admitted item index advice becomes invalid if instances are shuffled. Further, \citet{Bockenhauer:14} focus only on the setting where the advice is assumed to be accurate.
% , and  does not study consistency-robustness trade-offs. 
It is unclear how to extend their analysis to give consistency-robustness trade-offs.
% , since the considered advice includes detailed information about the items admitted by the offline optimal solution.

% \mo{@Bo, can you add a note here on the difficulty of Pareto-optimality analysis using this approach?}\bo{I add some weak arguments here. need edits if there is any better idea.}

%\Cam{See my comment above. Do they achieve any trade-offs at all?}
In the third category, prior works have developed algorithms with Pareto-optimal consistency-robustness trade-offs for  various simplified versions of \OKP. For example, using similar frequency predictions to~\cite{sentinel21}, \citet{Balseiro:23} develop Pareto-optimal algorithms for the \textit{single-leg revenue management problem}, which can be interpreted as an \OKP variant with unit weight items that take values from a discrete and finite set. However, the algorithms
% derived for single-leg revenue management,
and their Pareto-optimality guarantees do not extend to the case of continuous-valued items. %, and Pareto-optimal trade-offs are not directly comparable between different prediction models, such as the succinct predictions that we consider in this paper.
% \Cam{Our lower bounds on consistency/robustness hold in the fractional case, which should encompass the unit weight item case. So how do the bounds compare to this prior work. Do they match our bounds? If not, what is the gap? As a reader reading this I would probably think: well why is it so much harder to obtain Pareto optimal bounds without unit weight items? And I don't think this question is answered in our paper really? Also later we will be broadly claiming that a simple linear combination of laerned algo and worst-case algo is Pareto optimal. Does their algorithm follow that design? If not what is the gap?} 
% \adaml{I just checked their work and it's a little bit more complicated than what we have written here.  First, single-leg revenue management also assumes that the set of possible values is finite (on top of the unit weight assumption).  Since they use a frequency-prediction-like model for predicting the total number of the different types (values) of items that form the optimal solution, the consistency-robustness trade-offs aren't really comparable, at least I'm not aware of a way to do this.  See page 37 in https://arxiv.org/pdf/2202.10939 if anyone is interested.  I will make some edits to reflect this...}
Pareto-optimal algorithms using succinct single-valued predictions have also been given for online conversion problems~\citep{SunLee:21,LeeSun:22}. These problems can be seen as fractional variants of \OKP with item weights equal to the capacity, where the optimal offline solution simply fills the knapsack with the single most valuable item. 
The consistency-robustness trade-offs derived in these settings do not extend to general \OKP -- in online conversion and search, an accurate prediction of the highest item value is sufficient to obtain a $1$-competitive algorithm. This is not the case for \OKP, since the most valuable items may be small, forcing the offline optimal solution to admit less valuable items to fill the knapsack capacity. 

In summary, prior work on learning-augmented algorithms for \OKP fails to achieve Pareto-optimal trade-offs in all but a few significantly simplified variants of the problem. Further, existing algorithms often leverage complex and potentially impractical predictions. Motivated by these limitations, the main goal of this paper is to answer the following question:  
\noindent\textit{Can we design near \textbf{Pareto-optimal} learning-augmented algorithms for \emph{\OKP} using practical \& \textbf{succinct predictions}? }
% \begin{center}
%  How do we design near \textbf{Pareto-optimal} learning-augmented online algorithms
%  % that achieve a near \textbf{Pareto-optimal consistency-robustness trade-off} 
%  for the general \emph{\OKP} using practical \& \textbf{succinct predictions}?   
% \end{center}
% \textit{Can we design learning-augmented online algorithms that achieve a near \textbf{Pareto-optimal consistency-robustness trade-off} for the general \emph{\OKP} using practical \& \textbf{succinct predictions}?}

\noindent\textbf{Contributions.}
We present \OKP algorithms that achieve near Pareto-optimal trade-offs between consistency and robustness, using succinct predictions of a \emph{critical value} $\vpr$, i.e., the minimum value of any item admitted by an offline optimal solution (see \autoref{tbl:contributions}). Our work introduces novel algorithmic frameworks, and technical contributions to knapsack problems under both trusted and untrusted predictions.

\vspace{-0.5em}
$\blacktriangleright$ For trusted point predictions of the critical value $\vpr$, we prove the \textit{optimal} competitive ratio is $1+\min\{1,\wpr\}$, where $\wpr$ is the total weight of items with value $\vpr$ (\autoref{thm:lb}). Our algorithm \ppa achieves this bound using a novel \textit{reserve-while-greedy} approach (\autoref{thm:ppa-a}).
Existing \OKP algorithms are mainly based on a threshold-based algorithm, which is a \textit{reserve-then-greedy} strategy that pre-allocates capacity for high-value items and greedily admits others~\citep{LeeSun:22,SunYang:22}.
% fall into three main categories: \textit{reserve-then-greedy algorithms}, which pre-allocate capacity for high-value items and greedily admit others~\citep{SunZeynali:21}; \textit{threshold-based algorithms}, where items are admitted if their value exceeds a threshold based on knapsack utilization~\citep{ElYaniv:01,Lorenz:08,SunYang:22,SunLee:21,LeeSun:22}; and \textit{protection-level policies}, which upper bound the capacity allocated to specific item values~\citep{Ball:09,Balseiro:23}. 
Instead of relying on fixed reservation, \hyperref[alg:wr1]{\ppa} dynamically adjusts its reserved capacity based on observations of high-value items, improving the competitive ratio from 2 (achieved by a reserve-then-greedy algorithm \hyperref[alg:two]{\ppb}) to $1+\min\{1,\wpr\}$.
% , which is optimal for trusted point predictions.

% Interval Predictions
\vspace{-0.5em}
$\blacktriangleright$ For trusted interval predictions $\ell \leq \vpr \leq u$, our \ipa algorithm achieves a competitive ratio of $2+\ln(\nicefrac{u}{\ell})$, matching a derived lower bound (\autoref{thm:ipa}, \ref{thm:lb-ipa}). Interval predictions also model point predictions with bounded error, enabling robust solutions under uncertainty.

% Untrusted Predictions
\vspace{-0.5em}
$\blacktriangleright$ For untrusted predictions, we propose an algorithm, \ma that combines trusted algorithms with a robust baseline. Assuming unit values lie within $[L,U]$, we show \ma achieves $\nicefrac{c}{\lambda}$-consistency and $\frac{\ln(\nicefrac{U}{L}) + 1}{1 - \lambda}$-robustness (\autoref{thm:constrob}), where $c$ is the competitive ratio of the trusted prediction algorithm. This trade-off is near-optimal (\autoref{thm:const-rob-LB}).

% Fractional-to-Integral Conversion
\vspace{-0.5em}
$\blacktriangleright$ We then introduce a fractional-to-integral conversion algorithm that adapts any \OFKP solution for use in \OIKP under small item weights~\citep{ZCL:08, sentinel21}. This conversion facilitates near-optimal consistency-robustness trade-offs for integral settings (\hyperref[alg:conv]{Theorem 5.2}).

% Summary of Technical Innovations

% Our key innovations include the \textit{reserve-while-greedy} approach in \ppa, which offers a dynamic alternative to fixed reservation methods, a bi-criteria lower bound analysis for consistency-robustness trade-offs (\autoref{lem:asymptoticLB}), and a novel fractional-to-integral conversion method. These techniques enable significant performance improvements, as demonstrated on real-world datasets in \autoref{sec:exp}, where our algorithms outperform baselines even with noisy inputs. \bo{can be added if there is space}

\vspace{-0.5em}
$\blacktriangleright$ Finally, in \autoref{sec:exp}, we evaluate our algorithms on synthetic and real data from Bitcoin and Google workload traces. Our algorithms significantly outperform baselines without predictions, even with noisy inputs. Compared to algorithms relying on complex frequency predictions~\citep{sentinel21}, our succinct prediction-based approaches exhibit superior performance and degrade gracefully with prediction errors.

\section{Problem Formulation and Preliminaries} 
\label{sec:sys}
In this section, we define the online knapsack problem (\OKP), introduce the prediction models used, and review existing algorithms and results.

Throughout the paper, we occasionally use the term \emph{value} to refer to the unit value (i.e., value-to-weight ratio) of an item, especially in early sections for simplicity. This shorthand is common in the literature on learning-augmented knapsack~\cite{sentinel21,SunYang:22}, and we clarify the meaning when ambiguity might arise.

% \subsection{Online Knapsack Problem} 
\noindent \textbf{Online knapsack problem. \ }
Consider a knapsack with capacity $1$ (w.l.o.g.). Items arrive online, each with a unit value $v_i$ and weight $w_i$. Upon arrival, an algorithm decides $x_i \in \mathcal{X}_i$, the acceptance of item $i$, without future knowledge. Here $\mathcal{X}_i$ denotes feasible decisions for item $i$. Each decision $x_i$ yields profit $x_i v_i$, aiming to maximize total profit under capacity constraints.
When $\mathcal{X}_i = \{ 0,w_i \}$, the algorithm decides to pack or reject the entire item (\OIKP). For $\mathcal{X}_i = [0,w_i]$, fractional packing is allowed (\OFKP). We use $\OKP$ for both problems unless otherwise specified.
An \OKP instance $\mathcal{I} = \{ (v_i, w_i) \}_{i\in[n]} $ has the offline formulation: 

\vspace{-1.5em}
{\small
\begin{equation}
\label{eq:lp}
\max_{\{x_i\}_{i\in[n]}} \sum\nolimits_{i=1}^n \kern-0.5em x_i v_i, \; 
\text{s.t.} \sum\nolimits_{i=1}^n \kern-0.5em x_i \leq 1,  x_i \in \mathcal{X}_i : \forall i \in [n].
\end{equation}
}
\vspace{-1.5em}

The offline solution for \OFKP sorts items by unit value and fills the knapsack in that order. The \textit{critical value} is the smallest value of items (possibly fractionally) packed. Items with higher values are all packed. This algorithm also applies approximately to \OIKP when $w_i \ll 1, \forall i\in[n]$. 
In the online setting, \OFKP and \OIKP with small weights can both be solved optimally on worst-case instances via a threshold-based algorithm (see \autoref{sec:prelims}).

\noindent\textbf{Assumptions.} 
We adopt the standard small weight assumption for \OIKP, where $w_i \ll 1, \forall i \in [n]$~\cite{ZCL:08}. This assumption is essential to achieve a meaningful competitive ratio in worst-case instances of \OIKP~\cite{Marchetti:95, ZCL:08}. In \autoref{thm:lbi-b}, we show that it remains necessary for \OIKP even with a trusted prediction of the critical value. Notably, our \OFKP algorithms do not rely on this assumption.

Prior work on \OKP and related problems, such as one-way trading and online search~\cite{ElYaniv:01}, often assumes bounded support for unit values, i.e., $v_i \in [L, U]$, where $L$ and $U$ are known. This assumption is also critical for a bounded competitive ratio in worst-case instances~\cite{Marchetti:95, ZCL:08}. 
Notably, our learning-augmented algorithms for \OFKP with trusted predictions (\autoref{sec:perfect}) do not require the bounded value assumption. However, in the untrusted prediction setting (\autoref{sec:interval}), this assumption ensures bounded robustness by using the worst-case optimal algorithm~\cite{ZCL:08} as a subroutine. The conversion algorithm linking \OFKP and \OIKP (\autoref{sec:integral-extension}) also leverages this assumption. Additionally, \autoref{thm:lbi-s} shows that the bounded value assumption is necessary to achieve a bounded competitive ratio, even with a trusted prediction. 
% \bo{since our algorithm does not need bounded support assumption only in the case of trusted point prediction, I am not sure if this paragraph added value to our paper. I may remove this.}

% \md{new Paragraph:}

\noindent\textbf{Adversary model.}
Throughout this work, we adopt the standard \emph{adaptive adversary model} for online algorithms~\cite{BorodinElYaniv:98}. In this model, the adversary can generate the input sequence adaptively and change it based on the algorithm's past decisions. This setting captures worst-case scenarios where item arrivals are influenced by the algorithm's behavior, and it is widely used in the analysis of online problems, including online knapsack~\cite{ZCL:08, SunYang:22}. All our lower bounds and robustness results assume the adaptive adversary model. 
\subsection{Preliminaries} \label{sec:prelims}

\noindent\textbf{Competitive ratio.}
\OKP has been historically studied using competitive analysis, where the goal is to design an online algorithm that achieves a large fraction of the offline optimum profit on all inputs~\citep{Bor:92}. 
We denote by $\OPT(\mathcal{I})$ the offline optimum for input $\mathcal{I}$, and by $\ALG(\mathcal{I})$ the profit obtained by an online algorithm. 
The competitive ratio is then defined as 
$c \coloneqq \max_{\mathcal{I} \in \Omega} \nicefrac{\OPT(\mathcal{I})}{\ALG(\mathcal{I})}$, where $\Omega$ denotes the set of all possible inputs, and $\ALG$ is said to be $c$-competitive. %Intuitively, the goal of competitive analysis is to minimize $\CR$.
If $\ALG$ is randomized, we analogously define the competitive ratio as $c \coloneqq \max_{\mathcal{I} \in \Omega} \nicefrac{\OPT(\mathcal{I})}{\E[\ALG(\mathcal{I})]}.$

% \CR is greater than or equal to one. The smaller it is, the more effectively the algorithm performs.
%\Cam{It seems like we are defining two different variables, $CR$ and $c$ to mean the same thing. Do we need CR? Also ',and ALG is said to be c-comptetitive' can't be grammatically added as a phase to that sentence.}

\noindent\textbf{Consistency and robustness.}
In the literature on learning-augmented algorithms, competitive analysis is interpreted using the notions of \textit{consistency} and \textit{robustness}, introduced by~\citet{Lykouris:18, Purohit:18}.  
% Let $\ALG$ denote a \textit{learning-augmented} online algorithm.   
An online algorithm with predictions is said to be \textit{$b$-consistent} if it is $b$-competitive when given a \textit{correct prediction}, and \textit{$r$-robust} if it is $r$-competitive when given \textit{any prediction}, regardless of its correctness. The goal is to design an algorithm that achieves the best robustness for any chosen consistency -- i.e., achieving a \textit{Pareto-optimal trade-off} between consistency and robustness.

\noindent\textbf{Prior results: competitive algorithms without prediction.}
Both \OIKP with small weights and \OFKP have received significant attention. Under the assumption of bounded unit values (i.e., $v_i \in [L,U], \forall i\in[n]$), prior works \cite{ZCL:08,SunZeynali:21} have shown an optimal deterministic threshold-based algorithm (\ta) for \OIKP and \OFKP.
We present pseudocode for \ta in the Appendix, %~\ref{apx:pseudocode}, 
in \sref{Alg.}{alg:ta}. 
Note that \ta requires prior knowledge about the maximum unit value $U$ and minimum unit value $L$ for an instance, achieving the optimal competitive ratio $\ln(\nicefrac{U}{L})+1$ for both \OFKP and \OIKP. In \autoref{sec:perfect}, we show that this competitive ratio can be improved to a constant with no assumptions on $L$ and $U$ by leveraging trusted (i.e., accurate) predictions. To achieve a Pareto-optimal consistency-robustness trade-off in the untrusted prediction setting, our \ma algorithm uses the worst-case optimal \ta algorithm as a subroutine.

\subsection{Succinct Prediction Model}
% \bo{We need to discuss whether to keep the current three prediction models or to just mention the succinct prediction of the critical value, and treat interval prediction as a coarse prediction and the probabilistic prediction as untrusted predictions.}

% We consider three prediction models, each capturing a different prediction quality.
Throughout the paper we consider \textit{succinct predictions} derived from the \textit{critical value} of the offline optimal fractional \OFKP solution, defined formally below.
% Thus, we start by briefly describing the optimal offline solution for \OFKP and \OKP.
% This paper presents learning-augmented online algorithms for \OFKP using  
% \noindent\textit{Remark about the optimal offline solution.} We briefly describe constructing an optimal solution for $\OFKP$,

% \Cam{Can we rename this 'critical values' or something like that?}
\begin{definition}[Critical valued items $(\vpr, \wpr)$]\label{dfn:vpr}
Given an instance of \emph{\OKP}, $\vpr$ is a \textit{critical value} denoting the minimum unit value of any item (fractionally) admitted by the offline optimal solution. Let $\wpr$ be the total weight of the items with value $\vpr$. Note that $\vpr = \min \{v_i: \sum_{j: v_j > v_i} w_j < 1\}$.
\end{definition}\vspace{-0.5em}
Below, we introduce two prediction models of the critical value $\vpr$ of an instance.  We note that our algorithms do not receive a prediction of the corresponding capacity $\wpr$.

\begin{definition}[Prediction Model I: Point Prediction]\label{def:pp} A single value prediction of the critical value $\vpr$, as defined in \sref{Def.}{dfn:vpr}, is given to a learning-augmented online algorithm.
\end{definition}

% In the perfect prediction model, we assume that the learning-augmented decision maker has access to the exact minimum acceptable unit value $\vpr$ for any given instance $\mathcal{I} \in \Omega$.
The point prediction is simple; however, we note that even when an algorithm is given an 
accurate point prediction of $\vpr$, it cannot necessarily reconstruct the optimal solution online. In particular, the offline optimal solution fully accepts any item with a unit value strictly greater than $\vpr$. However, it may only partially admit item(s) with value $\vpr$. % (e.g., admitting only one out of many $\vpr$ items). 
Hence, even with a perfect prediction of the exact value $\vpr$, the online decision-maker cannot optimally solve the problem since the optimal admittance policy for items with exact value $\vpr$ is unclear. 
In practice, one may argue that obtaining a point prediction of the exact value $\vpr$ is almost impossible, motivating a more coarse-grained \textit{interval prediction}.

\begin{definition}[Prediction Model II: Interval Prediction]\label{def:ipa} A lower bound $\ell$ and upper bound $u$ on the actual value of $\vpr$ (\sref{Def}{dfn:vpr}) are given to the learning-augmented online algorithm, satisfying the condition $\ell \leq \vpr \leq u$.
\end{definition}

% The second prediction model does not give a point prediction of $\vpr$. Instead, the prediction given to the algorithm is in the form of bounds on $\vpr$, i.e., $\vpr \in [\ell, u]$. 
In this prediction model, the quality of the prediction degrades as the ratio $\nicefrac{u}{\ell}$ increases. % \Cam{More like the ratio $\nicefrac{u}{\ell}$ right? Changing the actual width may or may or may not degrade performance, depending on the shift of the interval. I.e. $[0,10]$ is much worse than $[1,50]$.}. 
In an extreme case of $u=\ell$, the interval prediction degenerates to the aforementioned point prediction. On the other hand, with $u=U$ and $\ell=L$, the problem degenerates to a classic \OKP problem (i.e., without predictions but with bounded value assumptions) when only prior knowledge of the unit value bounds is available.

% Our last model relaxes the deterministic interval prediction into a probabilistic prediction such that with a probability of $1-\delta$, the critical value $\vpr$ is lower and upper bounded by $\ell$ and $u$. This relaxation allows us to analyze the case where predictions are untrusted and arbitrarily wrong.

%\Cam{It would be nice if somewhere earlier on we could state an informal version o}
% In \autoref{sec:perfect}, given that the predictions are accurate, we leverage the above succinct predictions to design competitive algorithms for \OFKP, show they achieve optimal competitive ratios, and extend our results to \OIKP. In \autoref{sec:interval}, we design learning-augmented algorithms with untrusted succinct predictions, and prove their consistency-robustness performance guarantees.
% present algorithms that utilize the previously described algorithms as sub-algorithms to achieve practical competitiveness.

\noindent\textbf{Comparisons to related prediction models.} % However, existing prediction models either suffer from complicated predictions that are hard to obtain with high accuracy or only work for 
% \bo{add some explaination in intro and related work}
The notion of predicting the critical value has been explored in a special case of \OFKP known as the online search problem~\cite{SunLee:21,lee2024online}.
% \mo{Bo, cite your e-energy paper too.}
In this problem, all item weights are set to $1$, making the critical value equivalent to the maximum value, i.e., %given a knapsack of unit capacity, 
the offline optimal solution only admits one item with the maximum value. Accurate prediction of the critical value enables the recovery of the optimal solution in this special case. However, in \OFKP with arbitrary item sizes, leveraging such predictions becomes notably more challenging. One main contribution of this study is the design of an optimal competitive algorithm using an accurate prediction of the critical value in \OFKP.

The integral \OKP has primarily been explored using other prediction models. 
\citet{sentinel21} propose a frequency (interval) prediction model, predicting the total weights of items for each possible unit value in an instance. On the other hand, \citet{Balseiro:23} introduce a model that focuses on a frequency prediction over the possible unit values admitted by the offline optimal algorithms. However, any frequency prediction model demands a large number of predictions, significantly increasing the complexity of obtaining predictions with high accuracy. In contrast, our succinct prediction models only require a point or interval prediction, and can achieve comparable empirical performance to algorithms that employ more complicated predictions.

Our succinct prediction is strictly weaker than the frequency prediction model. Formally, let $\{(f^{\ell}(v), f^u(v))\}_{v\in[L,U]}$ denote the frequency prediction (as given by~\citet{sentinel21}), where $f^{\ell}(v)$ and $f^u(v)$ denote the lower and upper bounds of the frequency density, respectively. Given such a frequency prediction, we can directly obtain an interval prediction over the critical value by calculating $\ell := \min\{v\in[L,U]: \int_{v}^{U} f^{\ell}(v) dv \le 1 \}$ and $u := \min\{v\in[L,U]: \int_{v}^{U} f^{u}(v) dv \le 1 \}$.
Last, we note that developing an ML model to generate predictions is beyond the scope of this paper, as our focus is on leveraging predictions to design and analyze learning-augmented algorithms.
% \Cam{We need to describe here that given frequency predictions one can always derive an interval prediction for the critical value. This is the key to placing our results in context and making it clear that our prediction model is really 'weaker'. We should be fairly precise about how it is done -- an upper bound for $\hat v$ follows by taking an upper bound on the frequencies and computing the corresponding $\hat v$. A lower bound follows analogously. We may even want a separate paragraph for this explaintion given its importance in my opinion.}

\section{Algorithms with Trusted Predictions}
\label{sec:perfect}

%\adaml{add a paragraph to setup this section. e.g. something like this:}
%\adaml{In this section we first design algorithms for the fractional problem of \OFKP with exact advice, giving lower bound results and algorithms achieving a matching upper bound.  In section Y.X, we show theoretical results that extend our results to the integral case.}

We first present lower bound results for \OKP algorithms with trusted point predictions. Then, we design \OFKP algorithms for trusted point and interval predictions that achieve optimal competitive ratios (i.e., matching a lower bound). 
%\Cam{Not sure what two algorithms this is refering to?} \adaml{I think it's point and interval prediction} 

% In~\autoref{sec:integral-extension}, we extend these results to the integral \OIKP.

\subsection{Lower Bound for Trusted Point Predictions}
\label{sec:lower-bound-perfect}
In what follows, we show that even with an exact point prediction of the critical value $\vpr$, no deterministic or randomized learning-augmented online algorithm can achieve $1$-competitiveness (i.e., no algorithm can match the offline optimal solution on all instances).

\begin{theorem}\label{thm:lb}
Given an exact prediction on the critical value $\vpr$, no (deterministic or randomized) online algorithm for \emph{\OFKP} can achieve a competitive ratio smaller than $1 + \min\{1,\wpr\}$. 
\end{theorem}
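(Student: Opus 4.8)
The plan is to establish the lower bound through an adversarial argument built on two instances that share a common prefix, and are therefore indistinguishable to any online algorithm until the adversary commits to a continuation. Crucially, every instance I construct will have critical value \emph{exactly} $\vpr$, so that the given point prediction is correct and the bound genuinely applies to the trusted setting.

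First I would fix the target critical weight $\wpr$ and describe the common prefix: a collection of items all of unit value $\vpr$ with total weight $\wpr$. Since the algorithm processes these irrevocably and cannot yet see what follows, let $a \in [0,\min\{\wpr,1\}]$ denote the total weight of $\vpr$-valued items it accepts over the prefix (for a randomized algorithm, $a$ is replaced by its expectation $\E[a]$, which is well defined because the prefix distribution is independent of the continuation). The adversary then chooses between two continuations. In $\mathcal{I}_1$ the instance simply stops; the only value present is $\vpr$, so the critical value is trivially $\vpr$, with $\OPT(\mathcal{I}_1) = \vpr\min\{\wpr,1\}$ and $\ALG = \vpr a$. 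In $\mathcal{I}_2$ the adversary appends items of a very large value $V \gg \vpr$ with total weight $1-\epsilon$ for a small $\epsilon \le \wpr$; because this weight is strictly below capacity and the residual capacity is filled by leftover $\vpr$-items, the critical value remains $\vpr$, and $\OPT(\mathcal{I}_2) = V(1-\epsilon) + \vpr\epsilon$.

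Next I would bound the algorithm's profit on $\mathcal{I}_2$: having already committed weight $a$ to $\vpr$-items, it can admit at most $1-a$ of the high-value items, so $\ALG \le \vpr a + V(1-a)$. Taking $V \to \infty$ and then $\epsilon \to 0$, the ratio $\OPT/\ALG$ on $\mathcal{I}_1$ tends to $\min\{\wpr,1\}/a$ while that on $\mathcal{I}_2$ tends to $1/(1-a)$. Since the adversary selects whichever instance is worse, any algorithm incurs at least $\max\{\min\{\wpr,1\}/a,\ 1/(1-a)\}$, which I would minimize over $a$. The two branches balance at $a^\star = \min\{\wpr,1\}/(1+\min\{\wpr,1\})$, giving the common value $1+\min\{\wpr,1\} = 1+\min\{1,\wpr\}$, exactly the claimed bound. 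The randomized case is identical with $a$ replaced by $\E[a]$, using linearity of expectation to upper bound $\E[\ALG]$ on $\mathcal{I}_2$.

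The main obstacle I anticipate is not the optimization, which is a routine one-variable $\min$-$\max$, but rather ensuring that the two instances are simultaneously (i) indistinguishable over the prefix and (ii) consistent with the exact prediction, i.e. that each has critical value precisely $\vpr$. The tension is that $\mathcal{I}_2$ must contain enough high-value weight to punish an over-eager algorithm, yet strictly less than capacity so those items never displace $\vpr$ as the critical value; the condition $\epsilon \le \wpr$ (so that leftover $\vpr$-items genuinely spill into the offline optimum) is what keeps the prediction valid, and the separate treatment of the regimes $\wpr<1$ and $\wpr\ge 1$ is what produces the $\min\{1,\wpr\}$ rather than a bare $\wpr$. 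Finally, I would phrase the conclusion as a limiting bound $1+\min\{1,\wpr\}-o(1)$ over a sequence of instances (as $V\to\infty$, $\epsilon\to 0$), thereby ruling out any online algorithm with competitive ratio strictly below $1+\min\{1,\wpr\}$.
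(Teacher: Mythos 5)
Your proposal is correct and follows essentially the same route as the paper: the paper's proof also uses two instances sharing a prefix of critical-valued items of total weight $\wpr$ (one stopping there, one appending high-value items of weight $1-\epsilon$ with $U\to\infty$, $\epsilon\to 0$), lets $\mathbb{E}[X]$ denote the (expected) weight accepted on the prefix, and balances $\max\{\wpr/\mathbb{E}[X],\,1/(1-\mathbb{E}[X])\}\ge 1+\wpr$. Your slightly more explicit handling of the $\min\{1,\wpr\}$ truncation and of the condition $\epsilon\le\wpr$ keeping the critical value equal to $\vpr$ is a cosmetic refinement, not a different argument.
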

Recall that $\wpr$ is the total weight of items with the critical value $\vpr$. 
This result implies a $2$-competitive lower bound when $\wpr \ge 1$, even with an accurate point prediction of $\vpr$. 
To understand the intuition, consider a special case when $\wpr = 1$. If an algorithm is presented with the first item $(\vpr, 1)$, it cannot admit the entire item, as the next item could be $( \infty, 1-\epsilon )$ (with $\epsilon\to 0$); on the other hand, it must admit a portion of this item since there may be no following item. The optimal balance is to admit $\nicefrac{1}{2}$ of the first item, giving a $2$-competitive lower bound. A refined bound can be attained for instances with smaller values of $\wpr$. The full proof is given in \sref{Appendix}{pr:lb1}.  
By nature of the \OFKP relaxation, \autoref{thm:lb} also lower bounds the harder case of \OIKP.

\subsection{\OFKP Algorithms with Trusted Point Predictions}
\label{sec:pp-algs}

%\Cam{We should completely remove $PPA-n$. it is just a strawman that doesn't work and it doesn't make sence to focus so much of the section on it. We can reintroduce it in the experimenst we want as a baseline and just explain there what it does.} \adaml{I agree, this would save a lot of space and we don't need it.  When we do introduce it in the experiments, we can just call the algorithm \texttt{Naïve} and mention it's the simple algorithm that fails in Theorem 3.1.  calling it \ppn makes it seem like it's a contribution since it follows the same naming scheme as our actual algorithm.}

We first present \ppb, a trusted-prediction algorithm for \OFKP that is $2$-competitive when given the exact prediction $\vpr$. Then, we present \ppa, a refined algorithm that improves the instance-dependent competitive ratio to $1+ \min \{ 1, \wpr \}$, matching the lower bound.
Recall that $\wpr$ is unknown and not provided by a prediction (see \sref{Def.}{dfn:vpr}). We note that a naïve trusted-prediction algorithm that accepts all items with values $\ge \vpr$ results in an arbitrarily large worst-case competitive ratio (see \sref{Appendix}{subsec:ppn} and \ref{pr:ppa-n1}).

% {\color{blue}
% \adaml{This is my writeup for the worst-case unbounded greedy algorithm.}

%\Cam{This should be removed. We should have made clear in the proof sketch for Theorem 3.1 why such an approach can't work and shoudnlt have so much focus on it here. Definitely no formal theorem statement.}

% We first consider online algorithms that have access to the exact value $\vpr$. While exact knowledge of this quantity is a non-trivial assumption, recent works in [cite] have explored methods to estimate critical values such as $\vpr$ via learning-based approaches. We later generalize our analysis to the case when the algorithm has just an estimate for $\vpr$. 

\noindent\textbf{\ppb:} \textit{A basic trusted point-prediction 2-competitive algorithm.}
We present an algorithm (pseudocode in \sref{Alg.}{alg:two}) that, given the exact value of $\vpr$, is $2$-competitive for \OFKP. The idea is to set aside half of the capacity for high-value items ($> \vpr$) and allocate the other half for \textit{minimum acceptable} items with value $\vpr$. By doing so, \ppb obtains at least half of either part from \OPT's knapsack.  Note that achieving a competitive ratio of $2$ matches the lower bound in \autoref{thm:lb} for general $\wpr$ (i.e., across all instances).

% \Cam{We should not say this is optimal since it is not. $1+\max(1,\hat w)$ is optimal. We can say that it is optimal in the special case when $\hat w \ge 1$. if we want} %\Cam{Explain the high-level idea of the algorithm here before giving the formal result.}

%\Cam{This last sentence is out of place. Doesn't fit with the rest of the paragraph.} 
%\begin{theorem}

%Cam{I thought the algorithm was going to buy the full amount for $v_i > v_p'$. Then if $v_i = v_p'$ it buys something like $min(0,w_i/2-(current ammount bought)/2)$. This should be strictly better than the current algorithm and achieve $2$ competitive.}

\begin{theorem} \label{thm:ppb}
% Given exact knowledge of $\vpr$, Algorithm \ref{alg:two} 
Given a point prediction, \emph{\ppb} is $2$-competitive for $\emph{\OFKP}$.
\end{theorem}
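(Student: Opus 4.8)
The plan is to prove the equivalent statement $\ALG(\mathcal{I}) \ge \tfrac12\,\OPT(\mathcal{I})$ for every \OFKP instance $\mathcal{I}$. First I would record the exact shape of the offline optimum forced by the critical value. Writing $W_{>} := \sum_{i:\,v_i>\vpr} w_i$ and $V_{>} := \sum_{i:\,v_i>\vpr} v_i w_i$ for the total weight and value of the strictly-high-value items, \autoref{dfn:vpr} says the offline solution admits all such items and then tops up the knapsack with items of value exactly $\vpr$; this is only consistent if $W_{>} < 1$ (otherwise the high-value items alone fill the knapsack and the critical value exceeds $\vpr$) and $\wpr \ge 1 - W_{>}$ (there must be enough $\vpr$-items to fill the residual capacity), and it gives
\[
\OPT(\mathcal{I}) \;=\; V_{>} \;+\; \vpr\,(1 - W_{>}).
\]
These two inequalities on $W_{>}$ and $\wpr$ are exactly the facts that drive the rest of the argument.

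Next I would state \ppb's allocation precisely and verify that it is always feasible. Concretely, the reservation described above amounts to the rule: given the exact prediction, \ppb accepts exactly one half of every arriving item with $v_i > \vpr$, accepts items with $v_i = \vpr$ until their cumulative accepted weight first reaches $\tfrac12$, and rejects everything with $v_i < \vpr$. The total weight it ever commits is at most $\tfrac12 W_{>} + \min\{\tfrac12,\wpr\} < \tfrac12 + \tfrac12 = 1$, where the strict bound uses $W_{>} < 1$. I expect this feasibility check to be the main obstacle to phrase cleanly: the point is that the bound holds irrespective of how the high-value and critical-value arrivals are interleaved, so the capacity constraint is never binding and \ppb never has to renege on its ``half of each high-value item'' rule. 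This order-independence is essential — a naive greedy variant that instead fills a reserved half-capacity with whichever high-value items arrive first can be driven to an unbounded ratio, so it is precisely the ``half of each'' design (feasible because $W_{>}<1$) that yields the factor $2$.

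Finally I would lower-bound \ppb's profit term by term against the two pieces of $\OPT(\mathcal{I})$. Taking half of each high-value item yields profit exactly $\tfrac12 V_{>}$, i.e.\ half of the high-value contribution. For the critical-value items \ppb earns $\vpr\cdot\min\{\tfrac12,\wpr\}$, and I would verify $\vpr\cdot\min\{\tfrac12,\wpr\} \ge \tfrac12\,\vpr\,(1-W_{>})$ by a short case split: if $\wpr \ge \tfrac12$ use $1-W_{>}\le 1$, and if $\wpr < \tfrac12$ use $1-W_{>}\le \wpr$. Summing the two bounds gives $\ALG(\mathcal{I}) \ge \tfrac12 V_{>} + \tfrac12\vpr(1-W_{>}) = \tfrac12\,\OPT(\mathcal{I})$, establishing $2$-competitiveness; the instance $(\vpr,1)$ followed by $(U',1-\epsilon)$ with $U'\to\infty$ attains the factor $2$, consistent with \autoref{thm:lb}.
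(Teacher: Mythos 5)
Your argument is correct and follows essentially the same route as the paper's proof: both characterize $\OPT$ as the full high-value mass plus a $\vpr$-valued top-up of weight $1-W_>$, both obtain feasibility from $W_>\le 1$ (the paper's bound in equation~(\ref{eq:t03})), and both win a factor of one half on each of the two pieces separately; your aggregate notation $W_>, V_>$ is simply a cleaner packaging of the paper's sorted-index bookkeeping and its case split on $\wpr\ge 1$ versus $\wpr<1$. The one discrepancy is your reading of the rule at value $\vpr$: Algorithm~\ref{alg:two} accepts \emph{half of each} $\vpr$-item until the cumulative \emph{seen} weight of such items reaches $1$, so its profit there is $\vpr\min\{\nicefrac{\wpr}{2},\nicefrac{1}{2}\}$ rather than your $\vpr\min\{\nicefrac{1}{2},\wpr\}$. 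This is harmless --- with the correct formula the inequality you need becomes $\min\{\wpr,1\}\ge 1-W_>$, which follows immediately from $\wpr\ge 1-W_>$ without any case split --- but the profit expression should be adjusted to match the pseudocode.
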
\vspace{-1em}
 % and is the optimal for \emph{\OFKP}
%\adaml{consider a simple case for the proof sketch -- we can extend this to the proof of non-unique prices somewhat easily -- just give intuition}
\begin{proof}[Proof Sketch]
%\Cam{Why are unique prices needed to be assumed for this to be true? Aren't unique prices only related to the selection of things at value $\hat v$? Also even without unique prices doesn't the algorithm accept at least $\min(1/2,\hat w/2)$ units at $\hat v$. Thus we should not need to mention unique prices here. Smaller note but we should control-f and replace all uses of 'prices' with 'values' and 'sizes' with 'weights' to be consistent.}
\ppb selects $\min \{ \nicefrac{1}{2}, \nicefrac{\wpr}{2} \}$ at $\vpr$, which is at least half of the profit that \OPT obtains from items with value $\vpr$. Let $z \le 1$ be the total weight of items with value $> \hat v$. \ppb accepts half of these items above the critical value, again obtaining at least half of \OPT's profit. Moreover, we can show that \ppb does not violate the knapsack constraint. The full proof is in \sref{Appendix}{pr:ppb}.
\end{proof}

\vspace{-0.5em}
We next show how to modify \ppb to achieve the instance-specific lower bound of $1+\min\{1,\wpr\}$ from \autoref{thm:lb}. 

\begin{comment}
To do this, we introduce a parameter that represents the quantity we select when encountering $\vpr$ (instead of a fixed $\nicefrac{1}{2}$). This parameter allows us to reserve capacity for more valuable items adaptively. 
\end{comment}

%Below, we refine this concept to achieve a competitive ratio of $1+\wpr$, where $\wpr$ denotes the sum of the weight of all items with critical value $\vpr$. 
% \rev{This modified algorithm is especially effective when $\wpr$ is small. For example, in scenarios resembling the $k$-search problem, where the weight of each item is $\frac{1}{k}$, this modification yields a competitive algorithm with a ratio of $1+\frac{1}{k}$.}\bo{It is nice to have an example here; however, $\wpr = 1/k$ only holds if we assume each item has a unique value. Otherwise, $\wpr$ can be larger than $1/k$. Shall we remove this?}
% \adaml{this sounds good to me - we can reclaim some space and we don't use this example again}

%\Cam{We should state the competitive ratio as $1+\min\{1,\wpr\}$ throughout to match the lower bound of Theorem 3.1}
\noindent \textbf{\ppa:} \textit{An improved $1+\min\{1,\wpr\}$-competitive algorithm.} \ppa leverages the observation that it can accept more than half of items with value $> \vpr$ when $\wpr$ is low. However, since $\wpr$ is unknown, %we don't have prior knowledge of , 
it exploits a ``prebuying'' strategy, initially selecting entire items with values $> \vpr$
%\bo{I find we may define high-value items as the those with unit values $>\vpr$ and critical-value items as those with exact value $\vpr$ in an early stage. Then we can present them in a better way.} 
before adjusting its selections upon observing items with value $\vpr$. This adaptive approach ensures that \ppa selects an appropriate portion on either side of the prediction to achieve the desired bound. %\Cam{This doesn't make sense. We observe the prediction up front. We should just say exactly what we do:  Although as with my suggesting for PP-b we should just incorporate this discussion into the proof sketch and have a more precise/complete sketch. There is no reason we can't get the reader to fully believe the result with the space we have.}. 
%\rev{We note that here $\wpr$ represents the weight of a single item, since in the \OFKP setting we assume each item has a unique unit value.}
% \Helia{We shouldn't assume this. If you find any assumptions about unique unit values, please remove them. Our algorithms work in general case.}
% \adaml{sounds good, will do.}

\begin{theorem}
\label{thm:ppa-a}
Given a point prediction, \emph{\ppa} is $( 1+\min\{1,\wpr\} )$-competitive for $\emph{\OFKP}$.
\end{theorem}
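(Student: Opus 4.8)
The plan is to fix a correctly-predicted instance and compare \ppa against a clean decomposition of \OPT. Let $z = \sum_{i:\, v_i > \vpr} w_i$ and $H = \sum_{i:\, v_i > \vpr} w_i v_i$ be the total weight and total value of the strictly-high-value items. Since $\vpr$ is the true critical value, \OPT admits all of them (so $z \le 1$) and then fills the remaining capacity with items of value exactly $\vpr$, admitting critical weight $m := \min\{1-z,\, \wpr\}$; note that $m \le \min\{1,\wpr\}$ and $\OPT = H + m\,\vpr$. Writing $c := 1 + \min\{1,\wpr\}$ for the target ratio (which equals the lower bound of \autoref{thm:lb}), the whole argument reduces to establishing $\ALG \ge \tfrac{1}{c}\,\OPT$.

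Next I would isolate the two guarantees that together imply this bound. (i) \ppa captures at least a $\tfrac1c$ fraction of the profit of \emph{every} high-value item, so its high-value profit is at least $\tfrac1c H$; the crucial point is that \ppa accepts the \emph{same} fraction of each high-value item (taking them in full whenever spare capacity allows), so a cheap high-value item can never consume the capacity that a later, more valuable high-value item would need — this is exactly what rules out the unbounded-ratio behaviour of the naïve ``accept-greedily'' approach. (ii) The adaptive prebuying rule accepts critical-value weight $Y \ge \tfrac{\min\{1,\wpr\}}{c} \ge \tfrac{m}{c}$. Given (i) and (ii), $\ALG \ge \tfrac1c H + \tfrac{m}{c}\vpr = \tfrac1c\big(H + m\,\vpr\big) = \tfrac1c\,\OPT$, which is the claim. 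Feasibility is tight: in the reserved mode the high-value portion uses at most $\tfrac{z}{c}\le\tfrac1c$ and the critical portion $\tfrac{\min\{1,\wpr\}}{c}$, summing to at most $\tfrac{1+\min\{1,\wpr\}}{c} = 1$, so both targets can coexist inside the unit knapsack.

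The main work — and the main obstacle — is establishing (i) and (ii) \emph{simultaneously} despite the online revelation of $\wpr$ and the irrevocability of past decisions. Since $\tfrac{\min\{1,s\}}{1+\min\{1,s\}}$ is nondecreasing in the observed critical weight $s$, the critical target only ever grows, so it can be met online by topping up acceptance as critical items arrive; the delicate direction is that the safe high-value fraction $\tfrac1c$ \emph{decreases} as more critical weight is revealed, so early (optimistic) high-value acceptances may over-spend the budget relative to the final $c$. I would handle this with an invariant maintained across the arrival sequence that tracks the accepted high-value and critical weights and shows the capacity constraint is never violated under any interleaving, together with a short case split: a \emph{critical-heavy} case in which the knapsack saturates (here \ppa meets both reserved targets and attains ratio exactly $c$) and a \emph{high-value-heavy} case in which all high-value items are taken in full and the knapsack may close with slack (here the ratio is strictly better, tending to $1$ as \OPT's critical contribution $m\,\vpr$ vanishes). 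Verifying that the prebuying adjustments preserve the invariant in the worst interleaving is where the real care is needed; the arithmetic identities above, all reducing to $c = 1 + \min\{1,\wpr\}$, then close the bound.
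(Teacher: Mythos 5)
Your outer decomposition $\OPT = H + m\,\vpr$ and the target $\ALG \ge \tfrac1c(H+m\,\vpr)$ match the paper's goal, but one of the two pillars you rest it on is false, and it is false precisely because of the prebuying mechanism you are trying to analyze. Claim (ii) — that \ppa accepts critical-value weight $Y \ge \min\{1,\wpr\}/c$ — fails. Take high-value items of total weight $z=0.9$ arriving \emph{before} any critical item: at that moment the algorithm's running estimate of $\wpr$ is $0$, so each is accepted in full and $s=0.9$. When critical items of total weight $1$ then arrive, each portion $\tau$ is accepted at $\tau(1-s)/(1+\wpr)$, and the total utilization converges to $(z+\min\{1,\wpr\})/(1+\min\{1,\wpr\})=0.95$, so $Y=0.05$, far below your claimed $0.5$. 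The whole point of prebuying is that the algorithm \emph{substitutes} critical-value capacity with already-purchased high-value capacity; the competitive bound survives only because every unit of substituted weight has value strictly greater than $\vpr$. Your argument never invokes this substitution, so $\ALG \ge \tfrac1c H + \tfrac{m}{c}\vpr$ does not follow from what you establish. (The conclusion of claim (i) is fine, but its stated justification — that \ppa accepts the \emph{same} fraction of every high-value item — is also incorrect: the fraction $1/(1+\wpr)$ depends on how much critical weight has been observed when that item arrives.)

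The same error inverts your feasibility count: the high-value portion consumes \emph{more} than $z/c$, not at most $z/c$, so "$z/c + \min\{1,\wpr\}/c \le 1$" does not bound the actual utilization. The paper closes both gaps with a single induction on the number of critical-value items, establishing the invariant $\sum_i x_i = \sum_{v_i \ge \vpr} w_i/(1+\wpr)$ (with $\wpr$ the current capped estimate) — i.e., the high-value excess and the critical-value deficit cancel \emph{exactly} — together with a parallel induction for profit that uses $v_i > \vpr$ on the excess to conclude $\ALG \ge \sum_{v_i\ge\vpr} w_i v_i/(1+\wpr) \ge \OPT/c$. You correctly sense that an invariant is needed, but you do not supply it, and the (i)/(ii) split cannot be patched without replacing (ii) by this substitution argument.
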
\vspace{-1em}

Without prior knowledge of $\wpr$, \ppa employs a ``prebuying'' strategy for items with values $> \vpr$. The extra capacity allocated to these items always has a higher unit value than $\vpr$, allowing \ppa to reduce acceptances from $\vpr$ based on how much extra capacity it has allocated in previous items. \ppa maintains a lower bound on $\wpr$ that increments each time an item with value $\vpr$ arrives. This lower bound determines \ppa's strategy for values $> \vpr$. The remaining challenge is to ensure that \ppa does not overfill the knapsack, and attains at least $1/(1+\min\{1,\wpr\})$ of the profit obtained by \OPT. We give the full proof in \sref{Appendix}{pr:ppa}.

% \md{new paragraph:}

\paragraph{Intuition behind prebuying. }
The key idea of prebuying is that PP-a proactively reserves capacity for items with values strictly greater than the predicted critical value $\hat{v}$, even before observing all such items. This is done to better utilize the knapsack when $\hat{\omega}$, the total weight of items at $\hat{v}$, is small. If the algorithm later encounters many items at $\hat{v}$, it dynamically reduces the amount allocated to higher-value items. This contrasts with reserve-then-greedy strategies, where reservations are fixed in advance.

% \ref{pr:}
% \rev{We give a formal proof in the following:}

\begin{algorithm}[tb]
   \caption{\ppa: An improved $ (1 + \min\{ 1, \wpr \})$-competitive algorithm with point prediction}
   \label{alg:wr1}
\begin{algorithmic}
   \STATE {\bfseries Input:} prediction $\vpr$
   \STATE {\bfseries Output:} $\left\{ x_i \right\}_{i \in [n]}$
   \STATE {\bfseries Initialization:} $\wpr_0 = 0$, $s_0 = 0$
   \FOR{each item $i$ (with unit value $v_i$ and weight $w_i$)}
      \IF{$v_i < \vpr$}
         \STATE $\wpr_i = \wpr_{i-1}$
         \STATE $x_i = 0$ \ \algcomment{item is rejected}
      \ELSIF{$v_i > \vpr$}
         \STATE $\wpr_i = \wpr_{i-1}$
         \STATE $x_i = w_i / (1 + \wpr_{i-1})$ \algcomment{item is partially accepted}
      \ELSIF{$v_i = \vpr$}
         \STATE $\wpr_i = \wpr_{i-1} + \min\{w_i, 1 - \wpr_{i-1}\}$
         \STATE $x_i = \frac{\min\{w_i, 1 - \wpr_{i-1}\}}{1 + \wpr_i} - s_{i-1} \cdot \frac{\min\{w_i, 1 - \wpr_{i-1}\}}{1 + \wpr_i}$ \algcomment{item is partially accepted}
      \ENDIF
      \STATE Update $s_i = s_{i-1} + x_i$
   \ENDFOR
\end{algorithmic}
\end{algorithm}

\subsection{Trusted Interval Predictions}\label{sec:intervalIPA}

In this section, we present an algorithm that uses \textit{interval predictions} $[\ell, u]$, where $\vpr \in [\ell, u]$ -- this is motivated by e.g., uncertainty quantification schemes that provide a bound on the prediction error $\Delta$, such that upper and lower bounds can be modeled as a \textit{confidence interval} $\ell = \tilde v-\Delta \le \vpr$ and $u = \tilde v +\Delta \ge \vpr$, where $\tilde v$ is the predicted critical value.

% \rev{In practice, a learning-augmented algorithm for \OKP might receive e.g., a \textit{confidence interval} around the true critical value $\vpr$ as a prediction.
%  and scientific error can be modeled as $[\textit{prediction}-\epsilon,\textit{prediction}+\epsilon]$.} \mo{need change to make it similar to the text in intro.}
% \adaml{done}

% $\blacktriangleright$ Building on this result, we study the trusted interval prediction setting, where upper and lower bound predictions $\ell$ and $u$ are given that satisfy $\ell \le \vpr \le u$. In this setting, we design the \ipa algorithm, which we prove achieves competitive ratio of $2+\ln ( \nicefrac{u}{\ell} )$ (\autoref{thm:ipa}). We again construct  a matching lower bound, showing that this competitive ratio is optimal, by  (\autoref{thm:lb-ipa}). We note that the interval prediction setting  can be employed to capture the setting of point predictions with bounded prediction error. That is, if $\Delta$ is a known upper bound on the error of predicting critical value $\vpr$ with prediction $\tilde v$, we can generate an interval around the predicted critical value containing the true critical value, by setting $\ell = \tilde v-\Delta \le \vpr$ and $u = \tilde v +\Delta \ge \vpr$. 

\noindent \textbf{\ipa:} \textit{An interval prediction-based algorithm.} \ipa builds on \ppb and is devised to solve \OFKP when given interval predictions. It allocates a dedicated portion of the capacity for values higher than $u$, rejects all items with values lower than $\ell$, and employs a sub-algorithm (e.g., $\ta$) to solve \OFKP within the interval $[\ell, u]$. The results are then combined to yield a competitive result with a competitive ratio of $\alpha +1$, where $\alpha$ represents the competitive ratio of the sub-algorithm. We summarize pseudocode for \ipa in \sref{Alg.}{alg:ipa}, and give the competitive result for \ipa below:

\begin{algorithm}[tb]
   \caption{\ipa: An Interval-Prediction-Based Algorithm for \OFKP}
   \label{alg:ipa}
\begin{algorithmic}
   \STATE {\bfseries Input:} interval prediction $\ell, u$, robust algorithm $\mathcal{A}$ with competitive ratio $\alpha$
   \STATE {\bfseries Output:} online decisions $x_i$s
   \STATE Initialize $\mathcal{A}$
   \FOR{each item $i$ (with unit value $v_i$ and weight $w_i$)}
      \IF{$v_i < \ell$}
         \STATE $x_i = 0$ \algcomment{item is rejected}
      \ELSIF{$v_i > u$}
         \STATE $x_i = \frac{1}{\alpha+1} \times w_i$ \algcomment{item is partially accepted}
      \ELSIF{$v_i \in [\ell, u]$}
         \STATE Pass item $i$ to algorithm $\mathcal{A}$
         \STATE $x_i = \frac{\alpha}{\alpha+1} \times x_i^{\mathcal{A}}$ \algcomment{item is partially accepted}
      \ENDIF
   \ENDFOR
\end{algorithmic}
\end{algorithm}

\begin{theorem}
\label{thm:ipa}
Given an interval prediction $[\ell, u]$ and an algorithm for \emph{\OFKP} with a worst-case competitive ratio of $\alpha$, \emph{\ipa} is $(\alpha+1)$-competitive for \emph{\OFKP}.
\end{theorem}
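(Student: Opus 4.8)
The plan is to split the unit capacity into two fixed reservations and bound the value \ipa obtains in each against the corresponding part of \OPT. First I would exploit the defining property of the critical value: since $\ell \le \vpr \le u$, every item with unit value $> u$ is strictly above $\vpr$ and is therefore fully admitted by the offline optimum, while every item with value $< \ell$ lies strictly below $\vpr$ and is rejected by \OPT. Writing $z$ for the total weight of items with value $>u$, this gives $z \le 1$ (they all fit in \OPT's knapsack) and lets me decompose $\OPT = P + Q$, where $P$ is the value \OPT collects from items with value $>u$ and $Q$ is the value it collects from items in $[\ell,u]$. Because the offline optimum fills the knapsack greedily in decreasing value order, after committing weight $z$ to the high items it fills the remaining capacity $1-z$ with the best $[\ell,u]$ items; hence $Q = g(1-z)$, where $g(c)$ denotes the offline optimal value obtainable from the $[\ell,u]$ items alone under capacity $c$.

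Next I would specify the two components of \ipa and their guarantees. \ipa reserves capacity $\tfrac{1}{\alpha+1}$ for items with value $>u$, admitting a $\tfrac{1}{\alpha+1}$ fraction of each such item; this never exceeds the reservation (total weight $\tfrac{z}{\alpha+1}\le \tfrac{1}{\alpha+1}$) and yields value exactly $\tfrac{P}{\alpha+1}$. It rejects all items with value $<\ell$, and runs the $\alpha$-competitive subroutine (e.g.\ \ta with bounds $[\ell,u]$) on the items in $[\ell,u]$ within the remaining capacity $\tfrac{\alpha}{\alpha+1}$. Since the two reservations are fixed in advance, feasibility is immediate regardless of the arrival order, and as competitiveness is invariant to the knapsack scale, the subroutine returns at least $\tfrac{1}{\alpha}\,g\!\left(\tfrac{\alpha}{\alpha+1}\right)$ on its sub-instance.

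The crux is to compare $g\!\left(\tfrac{\alpha}{\alpha+1}\right)$, the capacity actually available to the subroutine, with $Q = g(1-z)$, the capacity $1-z$ that \OPT may devote to the same items. Here I would use that $g$ is non-decreasing, concave, and passes through the origin, so $g(c) \ge \tfrac{c}{c'} g(c')$ whenever $c \le c'$ and $g(c)\ge g(c')$ whenever $c \ge c'$. Splitting on the sign of $\tfrac{\alpha}{\alpha+1} - (1-z)$: if $z \ge \tfrac{1}{\alpha+1}$ then $\tfrac{\alpha}{\alpha+1}\ge 1-z$ and monotonicity gives $g(\tfrac{\alpha}{\alpha+1}) \ge g(1-z)=Q \ge \tfrac{\alpha}{\alpha+1}Q$; otherwise $1-z > \tfrac{\alpha}{\alpha+1}$ and concavity through the origin gives $g(\tfrac{\alpha}{\alpha+1}) \ge \tfrac{\alpha/(\alpha+1)}{1-z}\,Q \ge \tfrac{\alpha}{\alpha+1}Q$, using $1-z\le 1$. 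Either way $g(\tfrac{\alpha}{\alpha+1})\ge \tfrac{\alpha}{\alpha+1}Q$, so the subroutine earns at least $\tfrac{1}{\alpha}\cdot\tfrac{\alpha}{\alpha+1}Q = \tfrac{Q}{\alpha+1}$. Adding the two parts, $\ALG \ge \tfrac{P}{\alpha+1} + \tfrac{Q}{\alpha+1} = \tfrac{\OPT}{\alpha+1}$, which is the claimed bound.

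I expect the capacity mismatch in the previous paragraph to be the main obstacle: the subroutine is confined to a $\tfrac{\alpha}{\alpha+1}$ fraction of the knapsack, whereas \OPT can pour as much as $1-z$ (nearly the whole knapsack when high-value items are scarce) into the very same $[\ell,u]$ items, and the reservation split must be chosen precisely so that the concavity/scaling bound recovers exactly the $\tfrac{\alpha}{\alpha+1}$ factor rather than something weaker — a naive half–half split, by contrast, only yields $2\alpha$. Secondary care is needed to confirm the decomposition $\OPT = P+Q$ and the identity $Q = g(1-z)$ in the degenerate case where the items of value $\ge \ell$ do not fill the knapsack, but there both sides reduce to ``take all $[\ell,u]$ items'' and the argument goes through unchanged.
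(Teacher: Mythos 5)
Your proposal is correct and reaches the right bound, but the accounting for the mid-range items differs from the paper's. You and the paper use the identical capacity split ($\tfrac{1}{\alpha+1}$ reserved for values above $u$, $\tfrac{\alpha}{\alpha+1}$ for the sub-algorithm) and the identical treatment of the high-value part, exploiting that all items with value $>u$ are fully packed by \OPT so their total weight is at most $1$. Where you diverge is the middle band: the paper runs the sub-algorithm $\mathcal{A}$ on the \emph{full unit-capacity} knapsack restricted to items in $[\ell,u]$ and then scales its decisions by $\tfrac{\alpha}{\alpha+1}$, so the quantity it must control is $\OPT(\mathcal{I}_0)$, the unit-capacity optimum on the sub-instance; the key lemma is $\OPT(\mathcal{I}) \le \sum_{v_i>u} w_i v_i + \OPT(\mathcal{I}_0)$, which the paper establishes by a somewhat laborious comparison of the critical indices $p$ and $p'$ of the two instances (essentially monotonicity of your $g$). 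You instead run the subroutine at reduced capacity $\tfrac{\alpha}{\alpha+1}$ and compare $g(\tfrac{\alpha}{\alpha+1})$ to $Q = g(1-z)$ via concavity of the capacity-parameterized offline optimum, which is arguably cleaner and makes the role of the $\tfrac{\alpha}{\alpha+1}$ split transparent (indeed your two cases both reduce to the single inequality $g(c) \ge c\, g(1) \ge c\, g(1-z)$). The one point to be careful about: your version implicitly assumes that an $\alpha$-competitive \OFKP algorithm remains $\alpha$-competitive against $g(c)$ when confined to capacity $c<1$. This holds by rescaling the item weights by $1/c$ (the fractional problem is scale-invariant, and is certainly true for \ta), but the paper's decision-scaling formulation sidesteps the issue entirely by only ever invoking the sub-algorithm's guarantee at unit capacity; if you keep your formulation you should state the rescaling reduction explicitly.
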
\vspace{-1em}
% \Cam{it is not clear what 'a robust algorithm' means. Really we just mean an algorithm with worst-case competitive ratio $\alpha$.}

%\adaml{use consistent $(1 + \alpha)$, try to describe the interesting idea (considering $\OPT$ solution just on interval $[\ell, u]$.}

%\adaml{intuitive idea: we are competitive against some benchmark above interval, we are competitive within the interval, then the technical challenge is to show that we're competitive overall.}

\begin{proof}[Proof Sketch]
For unit values higher than $u$, \ipa allocates $\nicefrac{1}{\alpha + 1}$ of knapsack capacity. Within the range $[\ell, u]$, it employs a robust sub-algorithm, denoted as $\mathcal{A}$, which is $\alpha$-competitive -- it is allocated $\nicefrac{\alpha}{\alpha + 1}$ fraction of the knapsack's capacity.  Within the interval $[\ell, u]$, obtaining a $\nicefrac{\alpha}{\alpha + 1}$ fraction of $\mathcal{A}$'s solution intuitively yields an $(\alpha + 1)$-competitive solution against \OPT.
% Intuitively, by   obtained from $\mathcal{A}$ intuitively yields a $\alpha/(\alpha + 1)\times\alpha=(\alpha + 1)$-competitive solution for the range  $[\ell, u]$. 
The remaining challenge is to demonstrate that the bound holds across both parts (i.e., beyond the interval $[\ell, u]$). The full proof is in \sref{Appendix}{pr:ipa1}.
\end{proof} \vspace{-1em}
%\Cam{Not clear what this means? What does it mean to use a fraction of the results? What is a result? I would describe how much fraction of the capcaity is allocated to values in $[\ell,u]$ and how much is allocated to values above and do the $1$ line calculation for why this yields competitive ratio $1+\alpha$}
%\Cam{I don't think this is intuitive without showing the basic calculation.}

%\Cam{What do you mean `all ranges'. there is just a single fixed range $\ell,u$ no?}
\begin{corollary}\label{cor:ipa}
Given an interval prediction $[\ell, u]$, \emph{\ipa} is $(2 + \ln (\nicefrac{u}{\ell}))$-competitive for \emph{\OFKP} when the sub-algorithm is given by \sref{Alg.}{alg:ta} \emph{(\ta)}.
\end{corollary}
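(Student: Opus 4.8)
The plan is to obtain this as an immediate specialization of \autoref{thm:ipa}, where the only work is to compute the worst-case competitive ratio $\alpha$ that the sub-algorithm \ta attains when it is confined to the value window $[\ell, u]$. Recall from the discussion of prior results in \autoref{sec:prelims} that, under bounded unit values $v_i \in [L, U]$, \ta is an optimal deterministic threshold algorithm for \OFKP achieving competitive ratio $1 + \ln(\nicefrac{U}{L})$. In \ipa, the sub-algorithm never sees items with value $> u$ (those are handled by the reserved $\nicefrac{1}{\alpha+1}$ fraction of capacity) nor items with value $< \ell$ (those are rejected); it operates solely on the sub-instance of items whose unit values lie in $[\ell, u]$.

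First I would observe that, from the sub-algorithm's perspective, $\ell$ and $u$ play exactly the roles of the minimum and maximum unit value bounds $L$ and $U$. Since \ta's guarantee $1 + \ln(\nicefrac{U}{L})$ is an unconditional worst-case bound over all \OFKP instances with values in $[L, U]$, substituting $L = \ell$ and $U = u$ shows that \ta restricted to $[\ell, u]$ is $\alpha$-competitive with $\alpha = 1 + \ln(\nicefrac{u}{\ell})$. This is precisely the hypothesis required by \autoref{thm:ipa}, which takes as input any \OFKP algorithm with worst-case competitive ratio $\alpha$ on the interval and returns an $(\alpha+1)$-competitive guarantee for \ipa. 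Plugging in then yields
\begin{equation*}
\alpha + 1 = \left(1 + \ln(\nicefrac{u}{\ell})\right) + 1 = 2 + \ln(\nicefrac{u}{\ell}),
\end{equation*}
as claimed.

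The only point requiring care—and the closest thing to an obstacle—is confirming that the worst-case guarantee of \ta transfers verbatim to the restricted sub-instance that \ipa feeds it. Concretely, one must check that \autoref{thm:ipa} measures the sub-algorithm's competitiveness against the offline optimum of exactly the $[\ell, u]$-restricted portion of the instance (which it does, by the way the reserved-capacity accounting is set up in its proof), so that \ta's bound applies with $L = \ell$, $U = u$ without any loss from the items lying outside the window. Once this is in place, the corollary follows with no further computation, since \autoref{thm:ipa} already discharges the harder task of combining competitiveness across the above-$u$ reserved capacity and the in-window sub-solution.
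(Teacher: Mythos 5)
Your proposal is correct and matches the paper's own proof: both simply instantiate \autoref{thm:ipa} with $\alpha = 1 + \ln(\nicefrac{u}{\ell})$ by substituting $L = \ell$, $U = u$ into \ta's worst-case guarantee, yielding $\alpha + 1 = 2 + \ln(\nicefrac{u}{\ell})$. Your extra check---that \autoref{thm:ipa} measures the sub-algorithm against the optimum of the $[\ell,u]$-restricted sub-instance---is indeed already discharged in the paper's proof of that theorem, so no additional argument is needed.
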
\vspace{-1em}

\autoref{thm:ipa} and \sref{Corollary}{cor:ipa} show that \ipa is $(2 + \ln (\nicefrac{u}{\ell}))$-competitive when using an optimal \OKP method as the sub-algorithm (e.g., \ta).  A natural question to ask is whether this competitive ratio for trusted interval predictions can be improved upon by any other algorithm -- in the following, we answer this in the negative.

\begin{theorem}
\label{thm:lb-ipa}
    Given an interval prediction $[\ell,u]$, no (deterministic or randomized) online algorithm for \emph{\OFKP} can
achieve a competitive ratio better than $(2 + \ln (\nicefrac{u}{\ell}))$.
\end{theorem}\vspace{-1em}

The result in \autoref{thm:lb-ipa} (full proof in \sref{Appendix}{pr:lb-ipa}) implies that \ipa achieves the optimal competitive ratio for any \OKP algorithm using trusted interval predictions.

% \adaml{taking a pass over this section}

%\Cam{We should note that in the special case of $\ell = u$ this reduces to our $2$-competitive result for PP-b}
%\Cam{We shouldn't have a proof for this. It is immediate given the compeititive ratio of ZCL and Theorem 3.6. So should just say in a line. A shortened version of the last sentence of the current proof can kept. See my previous comment.}

\section{Leveraging Untrusted Predictions}
\label{sec:interval}

In this section, we extend our results to the case of imperfect or \textit{untrusted} predictions.  %In this setting, we make no assumptions about the correctness of the point or interval prediction provided to an algorithm.  W
We present $\ma$, %\Cam{lol this is a pretty bad name...} \adaml{we could do something like "MIX" since we're mixing both decisions?} \mo{\texttt{RoCoA} or \texttt{RoCo} or \texttt{RoCoAlg}? robust and consistent alg?} \adaml{we're going with "MIX" because running out of time}
an algorithm that \textit{mixes} the decisions of a prediction-based algorithm with a robust baseline, and prove its consistency and robustness. %, where consistency describes the performance when the untrusted prediction is correct, and robustness describes the performance when the untrusted prediction is adversarially wrong.  
Furthermore, we give a corresponding lower bound on the best achievable consistency and robustness  %for any \OKP 
using a point prediction -- this shows that $\ma$ achieves a nearly-optimal consistency-robustness trade-off.  Although the technique of combining algorithms is common in the literature \citep{sentinel21, Christianson:22, Lechowicz:24}, it is not often the case that such a technique approaches the optimal trade-off, making our result particularly noteworthy.

% \Cam{We should convey in this paragraph the idea of combining robust and learned algorithms is very common in the literature. And the surprising thing here is that a simple combination gives an optimal robustness-consistency trade-off?}

% ove upper bounds on its consistency and robustness.  \adaml{also, pareto optimality}

%----------------------------------------------------------------------
\label{sec:ma}
\noindent \textbf{\ma:} \textit{A robust and consistent algorithm.}
\ma %deals with \textit{untrusted predictions}, such as machine-learned predictions of $\vpr$ or the interval prediction $[\ell, u]$.  This algorithm 
combines \ta, the optimal $(\ln (\nicefrac{U}{L}) + 1)$-competitive \OFKP algorithm, with one of the trusted \OFKP prediction algorithms presented so far (e.g., \ppa in \sref{Alg.}{alg:wr1}, \ipa in \autoref{sec:intervalIPA}).  
% \Cam{ref these algorithms.}
If the prediction is correct, we say that the \textit{``inner prediction algorithm''} \ALG is $c$-competitive, where $c$ is the corresponding bound proved in the previous section. %\Cam{Grammar issues here. Sentence seems to stop in the middle.}

For robustness, we follow prior work~\citep{SunZeynali:21, ZCL:08, ElYaniv:01} and assume unit values are bounded, i.e., $v_i\in[L,U], \forall i\in[n]$.  Note that $L$ and $U$ are not related to the predicted interval $[\ell, u]$.
\ma balances between the inner algorithms (worst-case optimized \ta and prediction-based \ALG) by setting a \textit{trust parameter} $\lambda \in [0,1]$.  Both algorithms run in parallel -- when an item arrives, \ma receives an item with unit value $v_i$ and weight $w_i$ as input.  \ma first ``offers'' this item to the inner prediction-based $\ALG$, receives its decision $\hat{x}_i$, and then ``offers'' this item to the inner robust $\ta$, receiving a decision $\tilde{x}_i$.  Then \ma accepts $x_i = \lambda \hat{x}_i + (1-\lambda) \tilde{x}_i$ fraction of the item.  
Note that when $\lambda = 1$, \ma makes the same decisions as the inner prediction $\ALG$, and when $\lambda = 0$, \ma makes the same decisions as the inner robust \ta. The inner $\ALG$ is chosen based on the type of prediction received, e.g., a point or interval prediction.
% \adaml{This PIPA result will be renamed and stated earlier, it is the key result that we're using for consistency and robustness.  Section 3: trusted predictions.  Section 4: untrusted predictions --> PIPA is renamed and stated as the main result in section 4. }
\autoref{thm:constrob} gives bounds on the consistency and robustness of \ma, which gives corresponding consistency-robustness results for the algorithms in the previous section (i.e., for point and interval prediction models, see \autoref{thm:ppa-a} and \autoref{thm:ipa}, respectively). %(e.g., \ppa in \sref{Algorithm}{alg:wr1}, \ipa in \autoref{sec:intervalIPA})

\begin{theorem}\label{thm:constrob}
    \emph{\ma} is $\frac{\ln (\nicefrac{U}{L}) + 1}{(1-\lambda)}$-robust and $\frac{c}{\lambda}$-consistent for \emph{\OFKP} for any $\lambda \in (0,1)$, where $c$ is the competitive ratio of the inner prediction \emph{\ALG} with an accurate prediction.
\end{theorem}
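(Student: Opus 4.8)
The plan is to exploit the fact that in \OFKP the objective is linear in the acceptance fractions, so the value collected by \ma decomposes exactly as a convex combination of the values collected by its two inner algorithms. First I would record that each inner algorithm, run in isolation, produces a feasible fractional solution: \ta yields fractions $\tilde{x}_i \in [0,w_i]$ with $\sum_i \tilde{x}_i \le 1$, and the prediction algorithm \ALG (e.g.\ \ppa or \ipa) yields fractions $\hat{x}_i \in [0,w_i]$ with $\sum_i \hat{x}_i \le 1$. Since \ma accepts $x_i = \lambda \hat{x}_i + (1-\lambda)\tilde{x}_i$, summing over $i$ gives $\sum_i x_i = \lambda \sum_i \hat{x}_i + (1-\lambda)\sum_i \tilde{x}_i \le \lambda + (1-\lambda) = 1$, and each $x_i$ is a convex combination of quantities in $[0,w_i]$, hence lies in $[0,w_i]$. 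Thus the \ma decisions themselves form a feasible \OFKP solution, with no capacity violation.

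Next I would use linearity of the objective to write the profit of \ma on an instance $\mathcal{I}$ as $\sum_i x_i v_i = \lambda \sum_i \hat{x}_i v_i + (1-\lambda)\sum_i \tilde{x}_i v_i = \lambda\, V_{\ALG} + (1-\lambda)\, V_{\ta}$, where $V_{\ALG}$ and $V_{\ta}$ denote the profits obtained by the inner prediction algorithm and by \ta, respectively. Both bounds then follow by keeping only one term of this convex combination. For robustness, regardless of prediction quality \ta is $(\ln(\nicefrac{U}{L})+1)$-competitive, so $V_{\ta} \ge \OPT(\mathcal{I})/(\ln(\nicefrac{U}{L})+1)$; since all unit values are positive we have $V_{\ALG}\ge 0$, and dropping the nonnegative term $\lambda V_{\ALG}$ leaves the \ma profit at least $(1-\lambda)\, V_{\ta} \ge (1-\lambda)\,\OPT(\mathcal{I})/(\ln(\nicefrac{U}{L})+1)$, which rearranges to the claimed robustness ratio $(\ln(\nicefrac{U}{L})+1)/(1-\lambda)$. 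For consistency, when the prediction is correct \ALG is $c$-competitive, so $V_{\ALG} \ge \OPT(\mathcal{I})/c$; dropping the nonnegative term $(1-\lambda) V_{\ta}$ leaves the \ma profit at least $\lambda\, V_{\ALG} \ge \lambda\,\OPT(\mathcal{I})/c$, giving the consistency ratio $c/\lambda$.

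I expect no genuinely hard step here — the argument is essentially a convexity/linearity observation — so the main thing to get right is the feasibility claim, and I would stress that it relies crucially on working in the fractional relaxation \OFKP: there a convex combination of two feasible solutions is again feasible and the objective is exactly linear in the fractions, whereas for integral \OIKP a convex combination need not be integral. This is precisely why the theorem is stated for \OFKP, with the passage to \OIKP deferred to the separate conversion result (\autoref{thm:conv}). I would also note in passing that the two bounds are proved completely independently (robustness uses only the guarantee of \ta, consistency only that of \ALG), so the trade-off is governed solely by how the single parameter $\lambda$ splits the capacity between the two inner solutions.
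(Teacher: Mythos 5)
Your proof is correct and follows essentially the same route as the paper's: write the profit of \ma as the convex combination $\lambda\,\ALG(\mathcal{I}) + (1-\lambda)\,\ta(\mathcal{I})$ and drop one nonnegative term for each of the two bounds. The only difference is that you explicitly verify feasibility of the combined fractional solution, which the paper's proof leaves implicit — a worthwhile addition, but not a different argument.
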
\vspace{-1em}
\begin{proof}
%Given a parameter $\lambda$, the 
\ma's profit is described by $\ma[\lambda](\mathcal{I}) = \lambda \ALG(\mathcal{I}) + (1- \lambda ) \ta(\mathcal{I})$.
For consistency, the prediction is correct and \ALG is $c$-competitive.  Thus, we obtain that $\ma[\lambda](\mathcal{I}) \ge \lambda \ALG(\mathcal{I})$, % \ge \frac{\lambda}{c} \OPT$, 
and \ma is $\frac{c}{\lambda}$-consistent.
For robustness, consider the case where the prediction is wrong.  If \ALG obtains no profit, we have $\ma[\lambda](\mathcal{I}) \ge (1- \lambda ) \ta(\mathcal{I})$, % \ge \frac{1- \lambda}{\ln(\nicefrac{U}{L})+1} \OPT$, 
and \ma is $\frac{\ln (\nicefrac{U}{L}) + 1}{(1-\lambda)}$-robust.
\end{proof}

% \md{new paragraph}
% \paragraph{Clarifying the robustness of \ma.} 
% To make the robustness guarantee of \ma more concrete, consider a scenario where the prediction-based component of \ma contributes zero value (e.g., due to an arbitrarily wrong prediction). Since \ma allocates a fraction $\lambda$ of the knapsack to the prediction-based algorithm and $1 - \lambda$ to the robust algorithm (\ta), the algorithm still obtains at least $(1 - \lambda) \cdot \ta(\mathcal{I})$ profit. 

We use an example to further clarify the robustness of \ma. Set $\lambda = \frac{1}{2}$. Assume that the prediction-based algorithm $\ALG$ obtains no profit. The \ta algorithm is $c$-competitive. Then, we have
\[
\ma[\lambda](\mathcal{I}) = \frac{1}{2} \cdot 0 + \frac{1}{2} \cdot \ta(\mathcal{I}) = \frac{1}{2} \cdot \frac{\text{OPT}}{c},
 % \Rightarrow 
% \frac{\text{OPT}}{\text{ALG}} = 2C.
\]
which shows \ma is $2c$-competitive.
Thus, for a given non-zero $\lambda$, \ma maintains a finite competitive ratio even in worst-case prediction scenarios, highlighting the role of the robust subroutine in guaranteeing performance.

\subsection{Optimal Consistency-Robustness Trade-Offs}

We ask whether any considerable improvement can be made in the consistency-robustness trade-off, i.e., whether an algorithm using succinct predictions can achieve better consistency for a given robustness (or vice versa).  In what follows, we show that \ma nearly matches the best consistency-robustness trade-off for a critical value prediction $\hat{v}$.

\begin{theorem}\label{thm:const-rob-LB}
Given an untrusted prediction of the critical value, any deterministic $\gamma$-robust algorithm for \OKP (where $\gamma \in [\ln(\nicefrac{U}{L})+1, \infty)$ is at least $\eta$-consistent for
{\small\begin{equation}
    \eta \geq  \max \left\{ 2 - \frac{L}{U}, \ \ \frac{1}{1 - \frac{1}{\gamma} \ln \left( \nicefrac{U}{L} \right)} \right\}. \label{eq:lowerbound}
\end{equation}}
Furthermore, in the limit as $\nicefrac{U}{L} \to \infty$, any $2/\lambda$-consistent (for some $\lambda \in (0, 1)$) algorithm is at least $\beta$-robust, for
{\small\begin{equation}
\beta = \frac{1 + \ln(\nicefrac{U}{L})}{1 - \lambda} - o(1) = \Omega \left( \frac{1 + \ln \left( \nicefrac{U}{L} \right)}{1-\lambda} \right). \label{eq:asymptoticLB}
\end{equation}}
% \Cam{But also I don't think this statement is what we discused? Shouldn't it be something like $\beta \geq  \frac{1 + \ln \left( \nicefrac{U}{L} \right)}{1-\frac{1}{x}} - o(1)$? Or am I mistaken?}
% \adaml{it actually is this in the proof, making some edits}
\end{theorem}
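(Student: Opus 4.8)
The plan is to prove both parts by exhibiting explicit adversarial instance families and reasoning about the \emph{cumulative allocation profile} of an arbitrary algorithm. Since profit, \OPT, and all the constraints below are linear in the allocation, I would work with the (expected) allocation and invoke Yao's principle, so that one argument covers deterministic and randomized algorithms at once. Throughout I fix a prediction $\vpr$, describe the algorithm on a common prefix by $\phi(v)$ (the expected weight committed to items of unit value $\le v$) and by $g(v) = \int_L^v u\, d\phi(u)$ (the profit accrued on that prefix). The first term $2-\nicefrac{L}{U}$ is a \emph{pure consistency} bound, independent of $\gamma$: it comes from the two-item gadget underlying \autoref{thm:lb}, but with bounded values. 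Present item $(\vpr,1)$ with $\vpr = L$; the prediction is correct whether or not a second item follows. If nothing follows, $\OPT = L$ and the ratio is $\nicefrac{1}{x}$ for accepted fraction $x$; if $(U, 1-\eps)$ follows then $\OPT \to U$ and the ratio is $\nicefrac{U}{(xL+(1-x)U)}$. Balancing the two forces $x = \nicefrac{1}{(2-\nicefrac{L}{U})}$, hence consistency at least $2-\nicefrac{L}{U}$; robustness never enters because both branches keep the prediction correct.

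For the tradeoff term $\nicefrac{1}{(1 - \nicefrac{\ln(\nicefrac{U}{L})}{\gamma})}$ I would use the classic increasing-value ramp on $[L,U]$ (each value level supplied with enough weight to fill the knapsack) terminated by a block of value $U$. The full instance has critical value $U$, so the prediction $\vpr = U$ is correct and $\OPT = U$, whereas any truncation at $v<U$ has critical value $v$, making the prediction wrong and testing robustness. Robustness against every truncation is exactly $g(v) \ge \nicefrac{v}{\gamma}$ for all $v$. Integration by parts shows the pointwise-minimal feasible profile $g(v)=\nicefrac{v}{\gamma}$ is also weight-minimal, and that satisfying all constraints costs at least $\tfrac{1}{\gamma}(1+\ln(\nicefrac{U}{L}))$ units of capacity spent below $U$, leaving at most $1 - \tfrac{1}{\gamma}(1+\ln(\nicefrac{U}{L}))$ to place on the top value. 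Hence the best consistency profit is $U\big(1 - \nicefrac{\ln(\nicefrac{U}{L})}{\gamma}\big)$ and consistency is at least $\nicefrac{1}{(1-\nicefrac{\ln(\nicefrac{U}{L})}{\gamma})}$.

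For \autoref{eq:asymptoticLB} I would \emph{combine} the two effects: put the ramp on $[L,v_0]$ for an intermediate $v_0$, follow it with a weight-one block at the critical value $v_0$ (so $\wpr=1$), and then either stop or reveal a final $(U, 1-\eps)$ item. All three continuations keep the prediction $\vpr = v_0$ correct, so they are consistency instances ($\OPT = v_0$ for the stop branch, $\OPT \to U$ for the $U$-branch), while truncations inside the ramp remain the robustness instances forcing $g(v) \ge \nicefrac{v}{\beta}$. Writing the two consistency requirements together with the ramp's weight lower bound $\tfrac{1}{\beta}(1+\ln(\nicefrac{v_0}{L}))$ as a small linear program in the ramp weight, the block acceptance $a$, and the leftover capacity, then eliminating variables, yields a bound of shape $\beta \ge \nicefrac{\ln(\nicefrac{v_0}{L})}{(1 - \nicefrac{(2 - \nicefrac{v_0}{U})}{\eta})}$. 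Sending $\nicefrac{v_0}{U}\to 0$ while $\ln(\nicefrac{v_0}{L})\to\ln(\nicefrac{U}{L})$ and $\nicefrac{U}{L}\to\infty$ gives the stated $\Omega\!\big(\nicefrac{(1+\ln(\nicefrac{U}{L}))}{(1-\nicefrac{1}{x})}\big)$ with $\eta = 2x$; the factor $2$ in the denominator is precisely the inherent consistency penalty from the weight-one critical block (the mechanism of the first term), and the logarithm is inherited from the ramp.

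The delicate point I expect to be the main obstacle is pinning down the exact numerator $1+\ln(\nicefrac{U}{L})$ (rather than merely $\ln(\nicefrac{U}{L})$) \emph{simultaneously} with the factor-$2$ denominator, which is what distinguishes the sharp $\nicefrac{(1+\ln(\nicefrac{U}{L}))}{(1-\nicefrac{1}{x})} - o(1)$ form from the weaker $\Omega(\cdot)$ form. In the naive combination the boundary (point-mass) term $\nicefrac{1}{\beta}$ in the ramp's capacity cost cancels against the profit the algorithm is credited for its low-value acceptances, losing the additive constant. Recovering it requires arranging the construction so that the capacity forced to the bottom of the ramp cannot be reused as profit against the critical-block/$U$-item optimum — exploiting that in every consistency branch the offline optimum admits only items of value $\ge v_0$, so all low-value ramp acceptances are pure capacity loss — and then carefully tracking which lower-order terms vanish as $\nicefrac{U}{L}\to\infty$ to justify the $-o(1)$. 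Verifying the weight-minimality of the tracking profile $g(v)=\nicefrac{v}{\beta}$ and checking capacity feasibility of the extremal profiles are the remaining routine steps.
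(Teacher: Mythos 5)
Your first inequality is sound. For the term $\frac{1}{1-\frac{1}{\gamma}\ln(U/L)}$ you follow essentially the paper's route (\sref{Lemma}{lem:gronwallLB}): an increasing ramp on $[L,U]$ whose truncations are robustness instances, Gr\"{o}nwall's inequality forcing the utilization profile $g(v)\ge\frac{1}{\gamma}\left[1+\ln(v/L)\right]$, and the consistency constraint $\int_L^U g\,du \le U - U/\eta$ on the full instance terminated by value-$U$ items. For the term $2-\frac{L}{U}$ you take a genuinely different and more elementary route: the paper derives it from the pointwise consistency requirement $g(v)\ge 1/\eta$ on every truncated-and-correctly-predicted instance combined with the same integral constraint, whereas you reuse the two-branch gadget of \autoref{thm:lb} with the top value capped at $U$; since both branches keep the prediction $\vpr=L$ correct, the bound is purely about consistency and visibly independent of $\gamma$. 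Both derivations are valid, and yours is arguably cleaner for that term.

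The second part has a genuine gap, which to your credit you flag yourself. Your construction (robustness ramp on $[L,v_0)$, consistency gadget at $\{v_0,U\}$) provably yields only $\beta\ge\frac{\ln(v_0/L)}{1-1/x}$: writing $W$ for the weight the ramp constraints force and $P$ for the ramp profit, the stop branch gives $a\ge\frac{1}{\eta}-\frac{P}{v_0}$ and the $U$-branch gives $W+a\le 1-\frac{1}{\eta}+o(1)$, so everything reduces to lower-bounding $W-\frac{P}{v_0}$; for the Gr\"{o}nwall-minimal profile this equals $\frac{1+\ln(v_0/L)}{\beta}-\frac{1}{v_0}\cdot\frac{v_0}{\beta}=\frac{\ln(v_0/L)}{\beta}$, i.e.\ the boundary atom of weight $\frac{1}{\beta}$ is exactly paid for by the ramp profit $\frac{v_0}{\beta}$ credited in the stop branch. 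The repair you propose --- that low-value acceptances are ``pure capacity loss'' because the offline optimum only admits items of value $\ge v_0$ --- does not work as stated: the algorithm's profit counts everything it accepts regardless of what \OPT admits, and a ramp reaching up to $v_0$ contributes profit of order $v_0/\beta$, which is precisely what eats the additive constant. What actually recovers the $+1$ is a scale separation: cap the ramp at some $v_1$ with $v_1/v_0\to 0$ while $\ln(v_1/L)/\ln(U/L)\to 1$, so the ramp profit $O(v_1/\beta)$ is $o(v_0)$ and cannot offset the stop-branch requirement, turning the constraint into $\frac{1+\ln(v_1/L)}{\beta}\le 1-\frac{1}{x}$. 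The paper's \sref{Lemma}{lem:asymptoticLB} sidesteps the issue by inverting the architecture: the consistency gadget sits at the bottom (values $L$ and $AL$, $A\to\infty$), consuming capacity $\to\frac{1}{x}$ whose profit is automatically negligible against the robustness values $BL\gg AL$, and the $+1$ then comes directly from the Gr\"{o}nwall boundary term of the high-value ramp $[BL,U]$ run on the residual capacity $1-\frac{1}{x}$. Either architecture can be made to work, but the three-scale limit must be carried out explicitly to justify the $-o(1)$; your sketch stops short of that, and the mechanism you name for closing it is not the right one.
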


% \begin{wrapfigure}{r}{0.40\textwidth}
\begin{figure}
\vspace{-0.5em}
    \minipage{0.48\textwidth}
    % \vspace{-2em}
	\hspace{0.4cm}\includegraphics[width=0.75\linewidth]{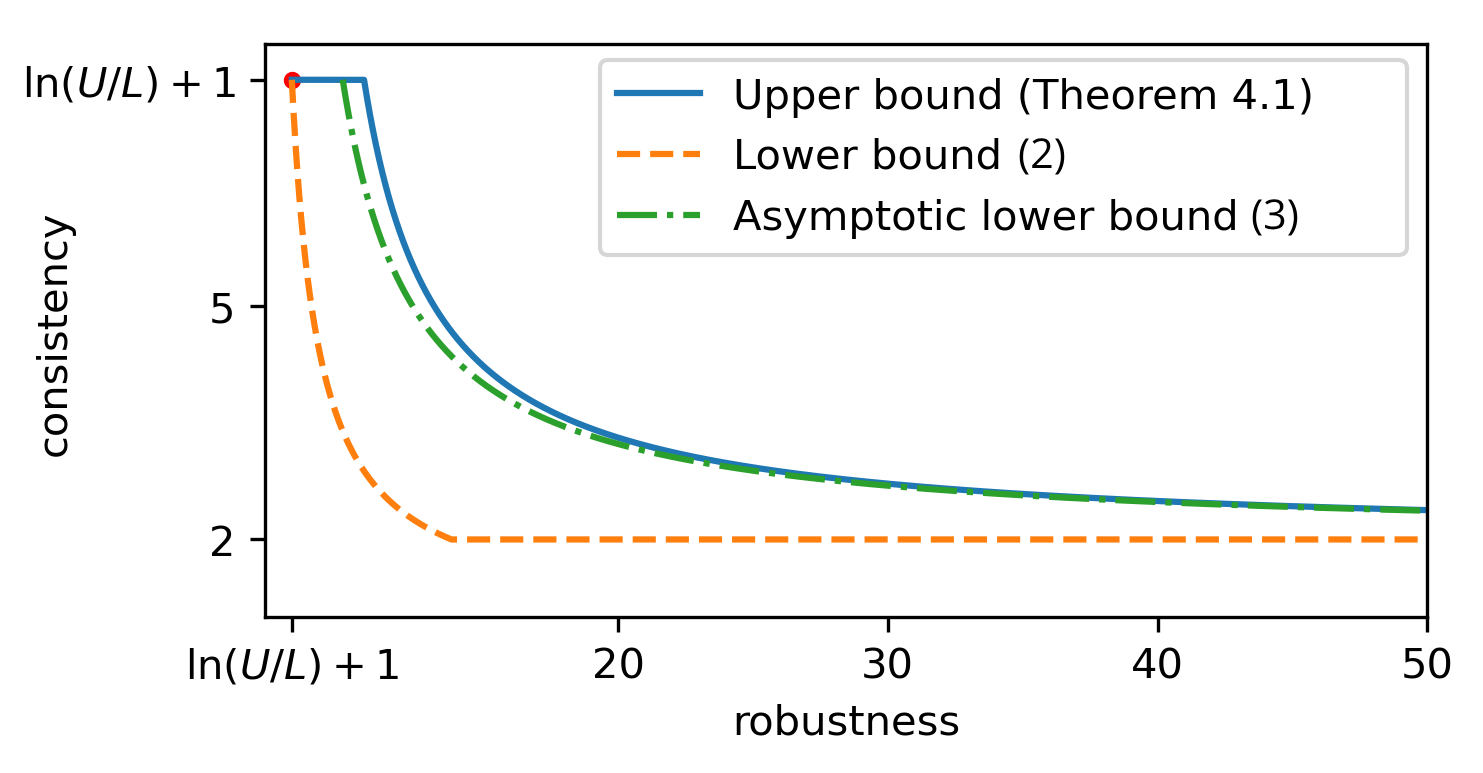} \vspace{-1.5em}
 % \vspace{-3mm}
    \caption{\small The consistency-robustness upper bound of \ma using a prediction of the critical value $\hat{v}$ (\autoref{thm:constrob}) and the lower bounds (\eqref{eq:lowerbound} and~\eqref{eq:asymptoticLB}), with $\nicefrac{U}{L} = 1000$.} \label{fig:comparinglowerbound}
\endminipage\vspace{-1em}
\end{figure}
% \adaml{take a pass, shorten, move to main body}

This result (full proof in \sref{Appendix}{pr:const-rob-LB}) implies that \ma achieves the \textit{nearly optimal trade-off} between consistency and robustness.  In \autoref{fig:comparinglowerbound}, we plot the consistency-robustness upper bound from \autoref{thm:constrob}, along with the lower bounds given by \eqref{eq:lowerbound} and \eqref{eq:asymptoticLB}.  All three bounds behave similarly, exhibiting a convex trade-off.  The gap between the upper and lower bound is due to constants in the proof of \eqref{eq:asymptoticLB}.  We disregard consistency $> \ln(\nicefrac{U}{L}) + 1$ since algorithms without predictions (e.g., \ta) give better guarantees.

% \bo{Because we want to emphasize the asymptotic lower bound can be close to the upper bound; we may move the proof of the lower bound in (11) to the appendix and only keep the proof of (12) in the main paper.}

\section{Connecting \OIKP with \OFKP}
\label{sec:integral-extension}
% \md{Cameron asks about prior work Adam,Bo: do not believe there is existing literature OFKP->OIKP}
% \mo{Adam or Bo, can you make a pass on this section? the writing needs improvement.}
% \adaml{made an initial pass, I placed pointers where MD should fill in some details. }

In this section, we present a general result that connects the fractional case (\OFKP) with the integral case (\OIKP), assuming items have small individual weights and bounded unit values. It is worth mentioning that without any of these two assumptions, there exists no algorithm with meaningful competitive ratio (see \autoref{thm:lbi-b}, \ref{thm:lbi-s}).
In the existing \OKP literature without predictions, prior work notes that \OIKP can be solved using discretized variants of the \textit{threshold-based algorithms} for \OFKP~\cite{ZCL:08, SunZeynali:21}.  
However, because the algorithms that we present in \autoref{sec:perfect} do not use the paradigm of threshold-based design, this straightforward connection is not applicable.
Instead, we propose a novel partitioning technique that divides the possible unit values into discrete intervals, allowing a conversion algorithm (\conv, see \sref{Algorithm}{alg:conv}) to use any \OFKP algorithm as a subroutine and achieve the same competitive ratio for \OIKP. We start by formalizing the \textit{value partitioning} component of the algorithm below. 

\begin{definition}[Value Partitioning] \label{dfn:valuepartitioning}
Recall that for \OIKP, all item unit values lie in the interval $[L,U]$.  We divide this interval into $K$ sub-intervals $G_1, G_2, \dots, G_K$ as follows.
Let $\delta > 0$ be small and let $U = L (1+\delta)^K$ for some integer $K$. Then, for any $k \in [K]$, let $G_k = [L (1+\delta)^{k-1}, L (1+\delta)^k )$.  Note that by definition,  $G_k$ cannot contain values that differ by more than a multiplicative factor of $(1+\delta)$. 
\end{definition}
%\Cam{I reworded the above to just say that the $G_i$ are sub-intervals. Check correctness. ANd then propogate the change by changing the word 'group' to 'interval' thoughout the discussion.}
\begin{algorithm}[tb]
   \caption{The \conv algorithm
   % : Converting a Fractional Algorithm to an Integral Algorithm for \OIKP
   }
   \label{alg:conv}
\begin{algorithmic}
   \STATE {\bfseries Input:} \OFKP algorithm $\ALG$, error parameter $\delta$, precision parameter $\epsilon$, bounds $U$, $L$
   \STATE {\bfseries Output:} Online decisions $\left\{ x_i \right\}_{i \in [n]}$
   \STATE {\bfseries Initialization:} Arrays $A[0, \ldots, \lceil \log_{1+\delta}(\nicefrac{U}{L}) \rceil] \gets 0$, $R[0, \ldots, \lceil \log_{1+\delta}(\nicefrac{U}{L}) \rceil] \gets 0$
   \FOR{each item $i$ (with unit value $v_i$ and weight $w_i$)}
      \STATE Send item $i$ to $\ALG$ and obtain \OFKP decision $\tilde{x}_i$
      \STATE $j \gets \lceil \log_{1+\delta}(\nicefrac{v_i}{L}) \rceil$
      \STATE $R[j] \gets R[j] + \tilde{x}_i \cdot v_i$
      \IF{$A[j] < R[j] \cdot \frac{1 - \epsilon (\lceil \log_{1+\delta}(\nicefrac{U}{L}) \rceil + 1)}{1 + \delta}$}
         \STATE $x_i \gets w_i$ \algcomment{item is fully accepted}
         \STATE $A[j] \gets A[j] + x_i \cdot v_i$
      \ELSE
         \STATE $x_i \gets 0$ \algcomment{item is rejected}
      \ENDIF
   \ENDFOR
\end{algorithmic}
\end{algorithm}

We now describe the ``conversion'' algorithm \conv (see \sref{Algorithm}{alg:conv}) that connects \OFKP with \OIKP.
Given any \OFKP algorithm denoted by \ALG, \conv uses the value partitioning in \sref{Definition}{dfn:valuepartitioning} to solve \OIKP by simulating the fractional actions of \ALG.  When an item arrives for \OIKP with a value in interval $G_k$, the item is accepted if the sum of values of previously accepted $G_k$ items in the integral knapsack is less than the sum of values from interval $G_k$ in the simulated fractional one.  With small item weights, this ensures that the total value in the actual knapsack is close to the simulated one, thus inheriting the competitive bound of \OFKP.  For an \OFKP algorithm \ALG, we denote its corresponding \OIKP variant as \conv-\ALG.

% \noindent \textbf{\ppai:} \textit{Near $(1+\wpr)$-competitive algorithm with a perfect prediction for integral knapsack} \ppai leverages the observation that we can accept more than half of an item for values exceeding the prediction when $\wpr$ is low. Applying the ranging technique to algorithm 1+w in the last section we get algorithm\ref{alg:ingw1}.

%To find this algorithm and further change any of our algorithms from fractional case to integral case we combined ranging and simulating equivalent \OFKP algorithm to get \OKP algorithm.

%\adaml{in this section there should be some kind of generic result that we can use everywhere -- for any $c$-competitive \OFKP algorithm, there is a near-$c$-competitive \OIKP algorithm with small item weights.   potentially replace theorem 3.9  -- Mohammadreza will work on this, see below statement and fix as needed}

\begin{figure*}[t]
    \centering
    \small
    % \vspace{-1em}
    \begin{minipage}[t]{0.27\textwidth}
        \centering
        \includegraphics[width=\linewidth]{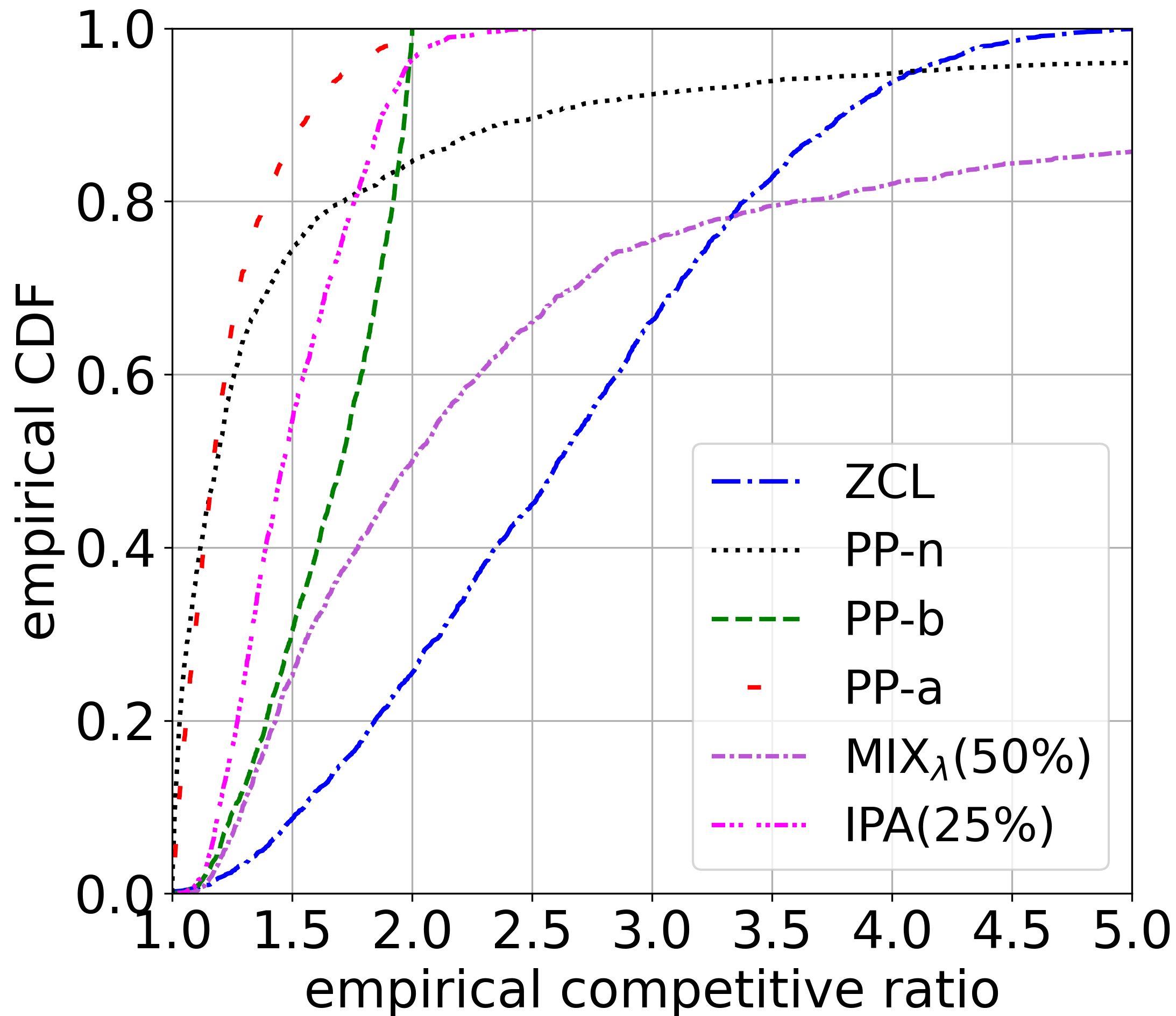}
        \vspace{-0.5em}
        \textbf{(a)}
        \label{fig:f1a}
    \end{minipage}
    \hspace{1em}
    \begin{minipage}[t]{0.27\textwidth}
        \centering
        \includegraphics[width=\linewidth]{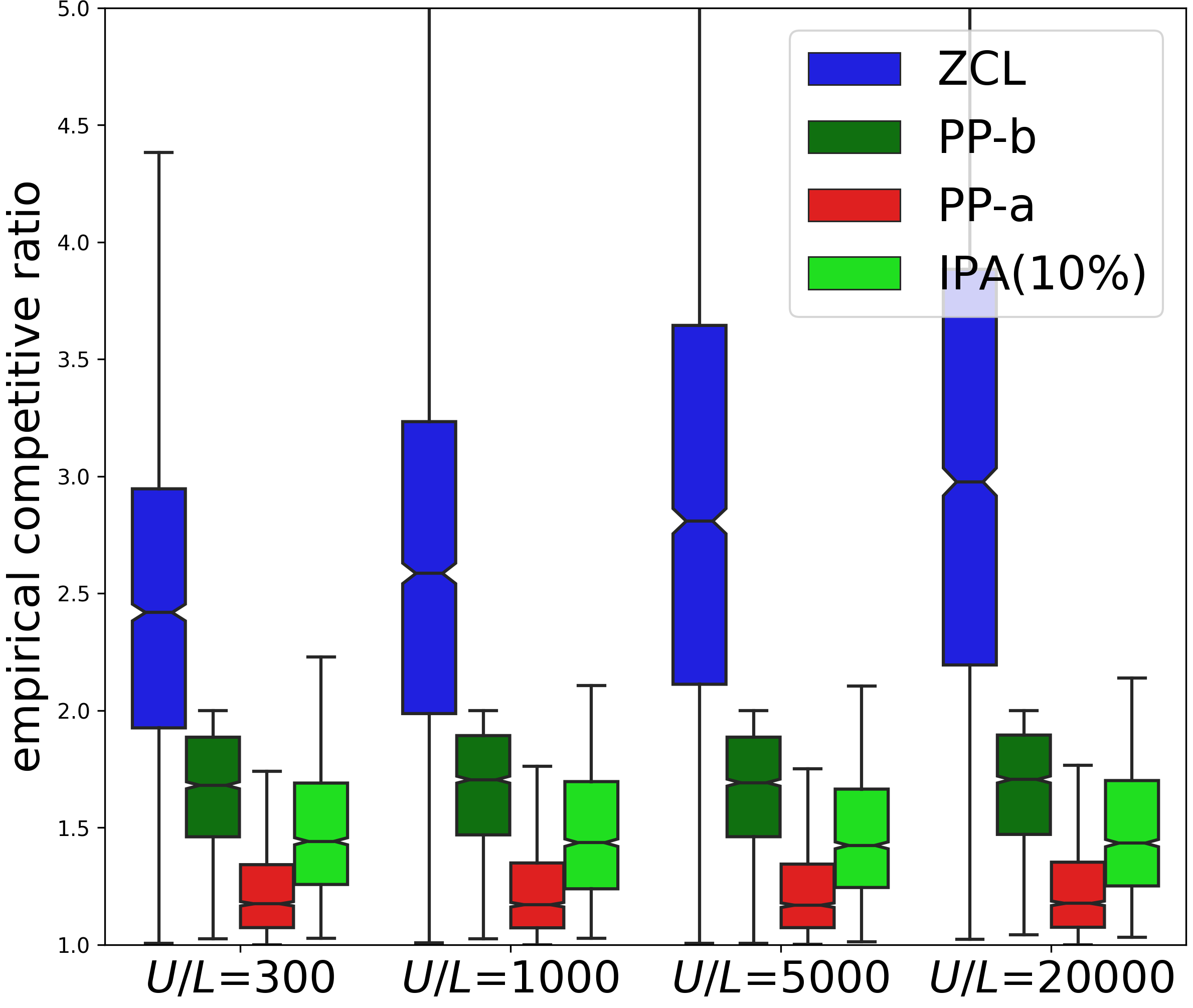}
        \vspace{-0.5em}
        \textbf{(b)}
        \label{fig:f1b}
    \end{minipage}
    \hspace{1em}
    \begin{minipage}[t]{0.27\textwidth}
        \centering
        \includegraphics[width=\linewidth]{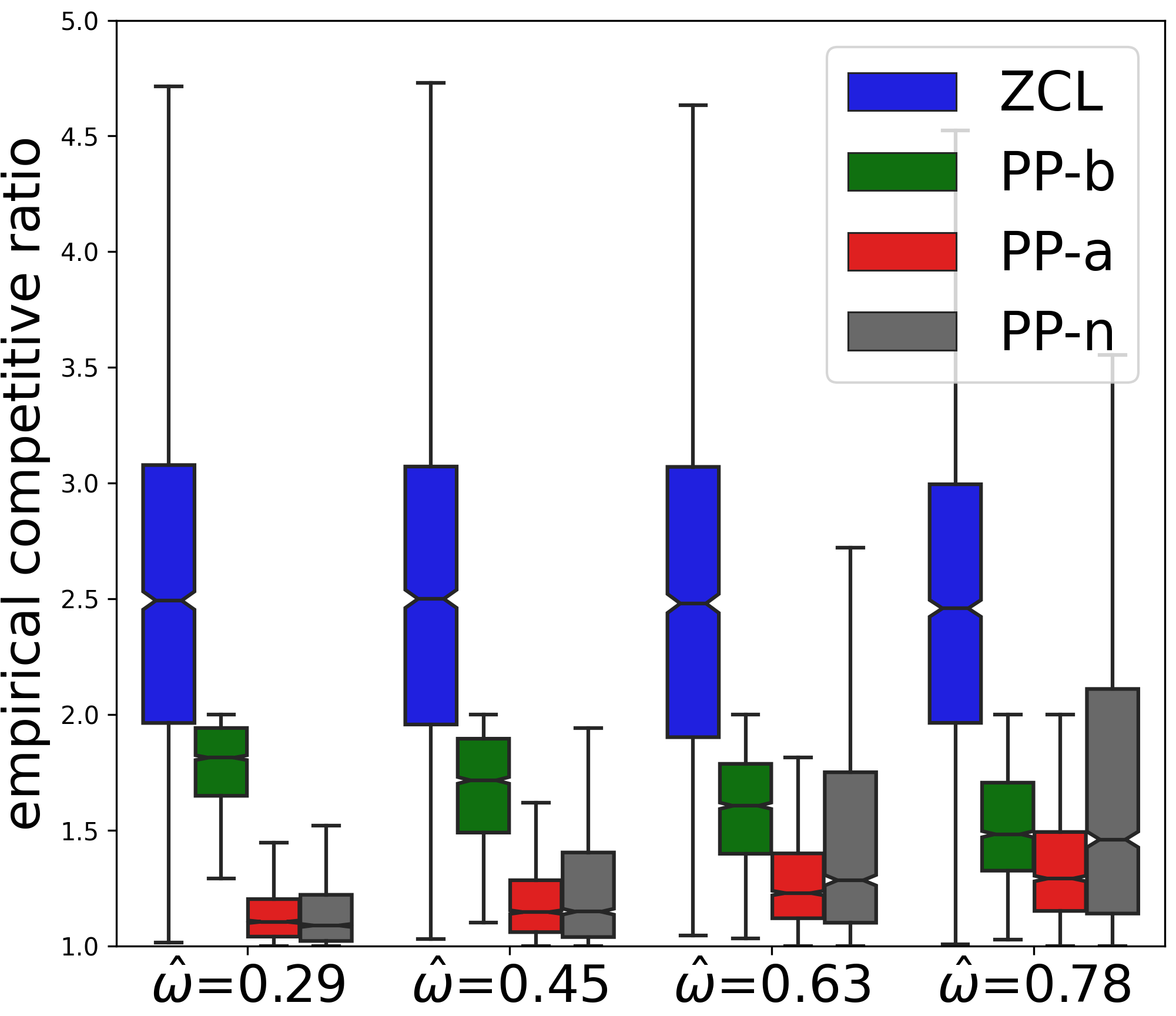}
        \vspace{-0.5em}
        \textbf{(c)}
        \label{fig:f1c}
    \end{minipage}
    \vspace{-0.7em}
    \caption{\small \textbf{(a)} The CDF plot of the empirical competitive ratio of different algorithms; \textbf{(b)} The performance of \ppb, \ppa and \ipa versus \ta as $\nicefrac{U}{L}$ varies; and \textbf{(c)} The performance of \ppb, \ppa, and \ppn against \ta when $\wpr$ varies.}
    \label{fig:f1}
    \vspace{-1em}
\end{figure*}

\begin{theorem}\label{thm:conv}
Given a $\gamma$-competitive online algorithm \emph{\ALG} for \emph{\OFKP} and fixed parameter $\delta > 0$, if the maximum item weight of \emph{\OIKP} is upper bounded by $\epsilon < \nicefrac{1}{\ceil{\log_{(1+\delta)}{\nicefrac{U}{L}}}+1}$, then the algorithm \emph{\conv-\ALG} is $\gamma \cdot \nicefrac{1+\delta}{1-\epsilon (\ceil{\log_{(1+\delta)}{\nicefrac{U}{L}}}+1)}$ competitive for \emph{\OIKP}.  
\end{theorem}%\vspace{-1em}
%\Cam{I don't think the wording above it right. It should be that both $\delta \rightarrow 0$ and $\epsilon \rightarrow 0$. If $\epsilon$ is fixed  and $\delta \rightarrow 0$ then the bottom of the fraction goes to 0 (and infact negative) so the competitive ratio blow up. Related to this -- the stated bound can't be correct as written since it can be negative. Is there an assumption on how $\delta$ relates to $\epsilon$ that we are not stating?}
%\md{For 5.2, we can set both delta and epsilon since the small weight definition is that items are small as desired, so fix a small delta and set epsilon to be such a small number that 1-... = 1}
%\bo{I have revised the thm, is the current version fine? I have also rephrased the proof of thm 5.2. Please have a double check.}
%\adaml{makes sense to me}
%\Cam{Again, there is no point of including pseudocode if we don't reference it at all. We need to explain to the reader at a high level what CONV does and e.g. what the parameter $\epsilon$ is and why it is used. I'm assume we plan to add more to this seection but also it will be important to convey what the subintervals are actually useful.}

% \adam{I think this theorem is just no longer needed since we have the general theorem for \conv}
% \begin{theorem}\label{thm:ppai-a}
% Given perfect prediction, \emph{\ppai} achieves a competitive ratio of $1+\wpr$ for $\emph{\OKP}$ with unique unit values.
% \end{theorem}

% \adaml{I'm taking a pass over this section}
\conv extends all our results presented in the context of \OFKP thus far (e.g., \ppa in \autoref{thm:ppa-a}, \ipa in \autoref{thm:ipa}, \ma in \autoref{thm:constrob}) to the case of \OIKP with minimal competitive loss.   Note that when set $\delta \to 0$ and the maximum item weight $\epsilon$ is  small (i.e., $\epsilon < \nicefrac{\delta}{\ceil{\log_{(1+\delta)}{\nicefrac{U}{L}}}+1}$), \conv becomes $\lim_{\delta \rightarrow 0}\gamma \cdot \nicefrac{1+\delta}{1-\delta}$ approaching $\gamma$-competitive. The loss factor $\nicefrac{1+\delta}{1-\epsilon (\ceil{\log_{(1+\delta)}{\nicefrac{U}{L}}}+1)}$ results from the discretization of the value range and the discrete nature of the item sizes, and it is designed to ensure the feasibility of the integral algorithm after converting from a fractional algorithm.
\begin{proof}[Proof Sketch]
\conv keeps track of the items accepted in \ALG's simulated fractional knapsack with a list $R$ (line 7, \sref{Algorithm}{alg:conv}).  In lines 8-12, \conv checks how the items it has accepted so far (saved in list $A$) compare against the simulated knapsack -- this compels \conv to accept items that approximate those chosen by \ALG.  The remaining challenge is to show that \conv does not overfill the knapsack.  For this, the partitioning method allows us to bound the value of a range to its weight. We can show $w(A[j]) \cdot L \cdot (1+\delta)^j \leq A[j] < w(A[j]) \cdot L \cdot (1+\delta)^{j+1}$, which helps us bound the weight using the partitioning idea. The full proof is in \sref{Appendix}{apx:conv}. %\adaml{does anybody know where this proof is?  I can't find it - MD is working on it}
\end{proof}
We note that by nature of the \OFKP relaxation, the lower bounds for \OFKP (namely, \autoref{thm:lb} and \autoref{thm:const-rob-LB}) extend to \OIKP.  This follows by observing that on the instances constructed in these proofs, the optimal fractional solution obtains the same value as the optimal integral solution when each fractional item can be subdivided into many small \OIKP items with the same value density.  Since an arbitrary online algorithm can only do worse when it is restricted to accepting entire items, the bounds follow.
\begin{comment}
\begin{theorem}
    Any lower bound $d$ for competitive ratio in a fractional setting can be transformed to lower bound $d$ in the integral setting.
\end{theorem}
\begin{proof}
    
\end{proof}
\end{comment}
% \adaml{for lower bound re: Bo - mention that there exists an instance where OPT is same for fractional and integral, and ALG can only do worse}

\section{Numerical Experiments} 
\label{sec:exp}

In this section, we present a case study of our proposed algorithms.  We compare against baselines that do not use predictions (i.e., \ta~\cite{ZCL:08}), and the existing \textsc{sentinel} learning-augmented algorithm from prior work~\cite{sentinel21}, using both synthetic and real datasets.\footnote{Our code is available at \url{https://github.com/moreda-a/OKP}.} We defer additional experiments and details to \sref{Appendix}{apx:experiments}.
% We first detail our experimental setup, including the parameters and data sets used, before presenting the results.

% This section presents experimental results for the \OFKP algorithms. We evaluate the performance of the proposed algorithms, compare their performance against the online algorithm without prediction (\ta), and investigate the impact of parameters on the performance of different algorithms.  

% \subsection{Experimental Setup and Comparison Algorithms}
\noindent\textbf{Experimental setup and comparison algorithms.}
To validate the performance of our algorithms, we conduct four sets of experiments.  In the first set, we use synthetically generated data, where the value and weight of items are randomly drawn from a power-law distribution. Unless otherwise specified, the lowest unit value is $L=1$, and the highest unit value is $U=1000$. Weights are drawn from a power law and normalized to the range $[0,1]$. We report the cumulative density functions (CDFs) of the empirical competitive ratios, which illustrate both the average and worst-case performance of the tested algorithms. 

To report the empirical competitive ratio of different algorithms, we solve for the offline optimal solution as described in \sref{Appendix}{apx:offline}. We compare the results of the following algorithms under several experimental settings: \textbf{(1)} \ta: the existing optimal algorithm without predictions (\sref{Alg.}{alg:ta}); \textbf{(2)} \ppn: a na\"ive point-prediction-based algorithm that accepts any item at or above the predicted critical value $\vpr$; \textbf{(3)} \ppb: $2$-competitive algorithm for trusted predictions (\sref{Alg.}{alg:two}); \textbf{(4)} \ppa: $(1+ \min\{ 1, \wpr \})$-competitive algorithm (\sref{Alg.}{alg:wr1}); \textbf{(5)} \ipa: trusted interval-prediction-based algorithm (\sref{Alg.}{alg:ipa}); \textbf{(6)} \ma: learning-augmented algorithm (\autoref{sec:interval}); \textbf{(7)} \conv - \ppa: converted \OIKP version of \ppa; and \textbf{(8)} \textsc{sentinel} algorithm using frequency predictions~\cite{sentinel21}. For \ipa, we report the interval prediction range $u-\ell$ as a percentage of $[L,U]$ and set it to 15\%, 25\%, and 40\%. 
%For \pipa, the value of 1-$\delta$ is set to 10\%, 20\%, and 50\%, and it is labeled as \texttt{PIPA$_\delta$}. 
The notation $\ma_\lambda$ denotes instances of \ma under different values of trust parameter $\lambda$, which we set according to $\lambda \in \{ 0.3, 0.5, 0.9 \}$. 
% 
\begin{comment}

\begin{figure*}[t]
    \centering
    \small
    \vspace{-1em}
    \begin{minipage}[t]{0.31\textwidth}
        \centering
        \includegraphics[width=\linewidth]{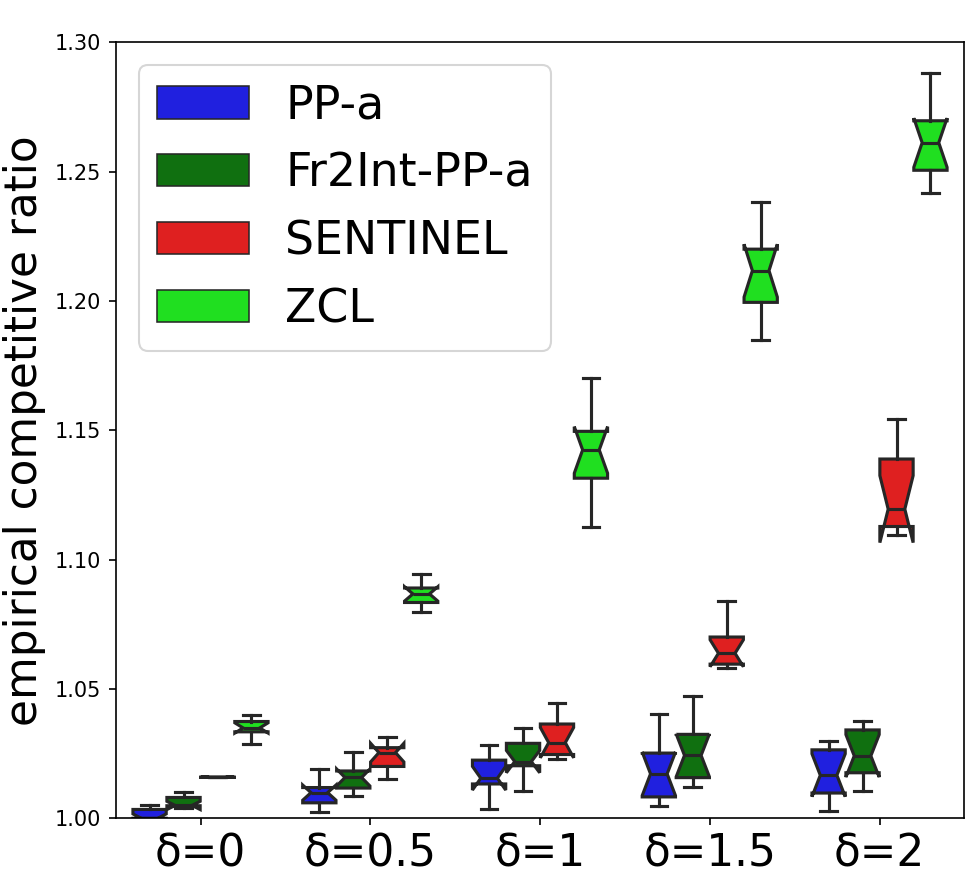}\vspace{-1em}
        \caption{\small Empirical competitive ratios of \ppa, \conv-\ppa, \textsc{sentinel}, and \ta on synthetic data from original \textsc{sentinel} paper~\cite{sentinel21}.}
        \label{fig:compare-sentinel}
    \end{minipage} \hfill
    \begin{minipage}[t]{0.31\textwidth} 
        \centering
        \includegraphics[width=\linewidth]{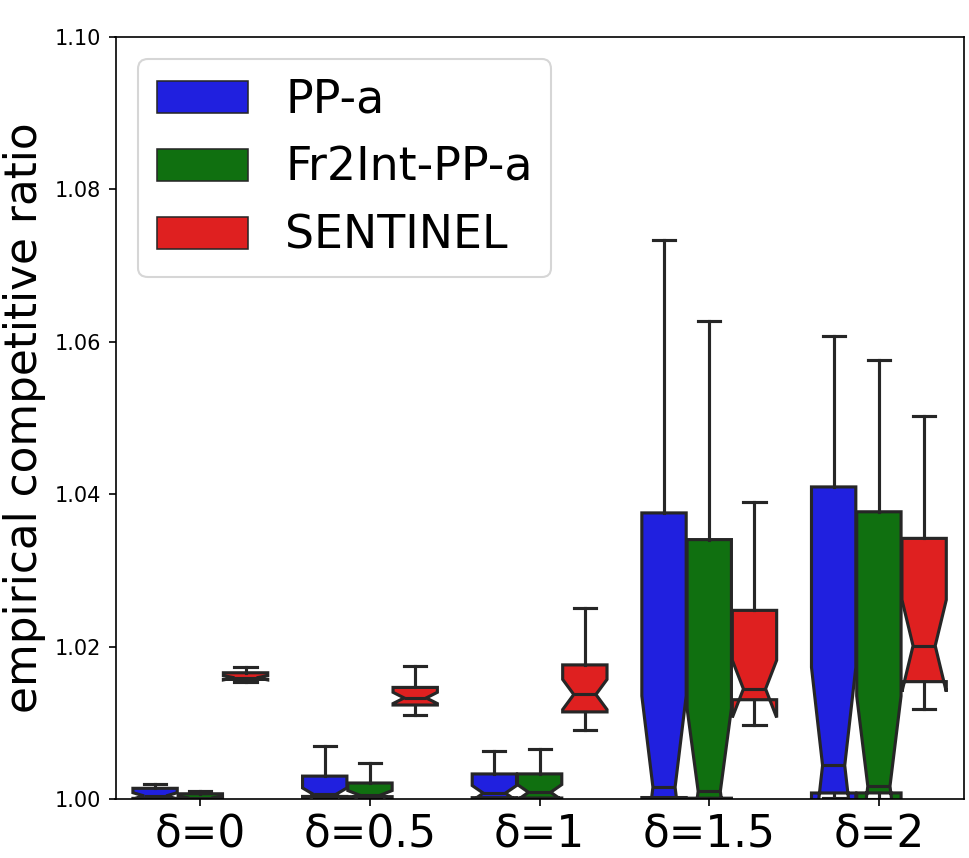}\vspace{-1em}
        \caption{\small Empirical competitive ratios of \ppa, \conv-\ppa, and \textsc{sentinel} in experiments on real Bitcoin data~\cite{Bitcoin}.}
        \label{fig:compare-bit}
    \end{minipage} \hfill
    \begin{minipage}[t]{0.31\textwidth} 
        \centering
        \includegraphics[width=\linewidth]{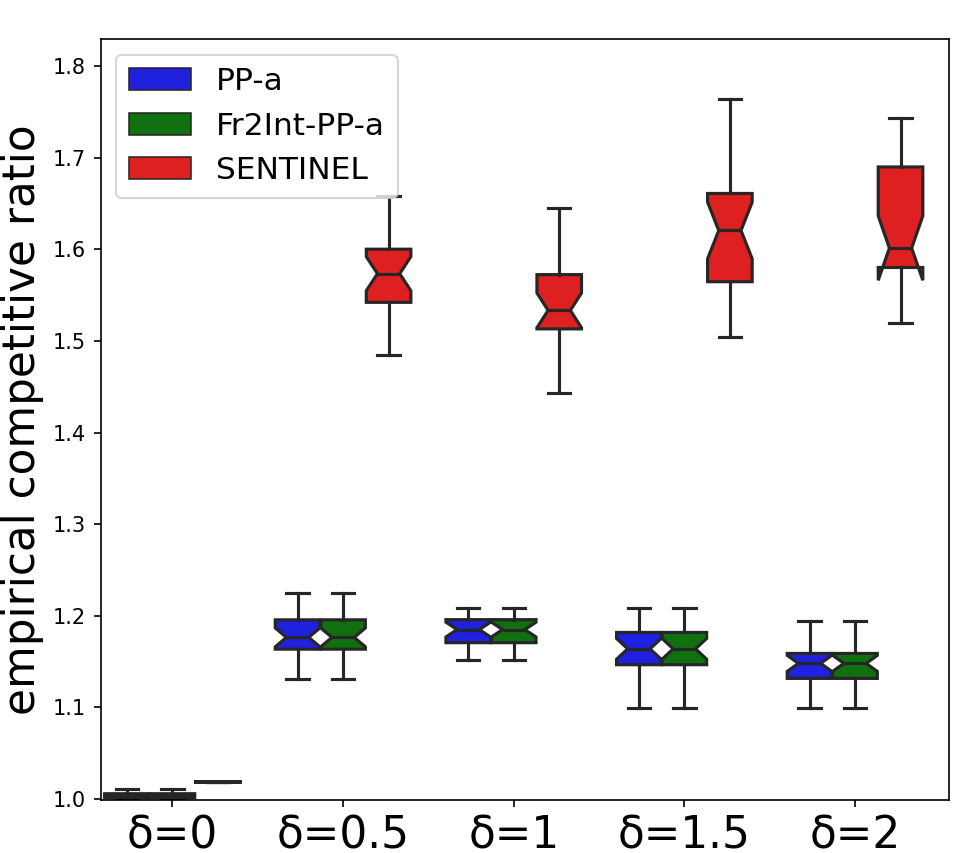}\vspace{-1em}
        \caption{\small Empirical competitive ratios of \ppa, \conv-\ppa, and \textsc{sentinel} in experiments on Google-Traces data.}
        \label{fig:compare-goog}
    \end{minipage}
    \vspace{-2em}
\end{figure*}
\end{comment}

% \subsection{Experimental Results}
\noindent\textbf{Experimental results.}
\autoref{fig:f1}(a) reports the cumulative distribution function (CDF) of empirical competitive ratios for six tested algorithms on 2,000 synthetic instances of \OFKP. Amongst the prediction-based algorithms, \ppa achieves the best performance in average and worst-case, verifying the results of \autoref{thm:ppa-a}. While \ppn outperforms most others on average (except \ppa), its worst-case performance is worse than \ta (the optimal algorithm without predictions); this observation verifies bad edge cases for \ppn identified in \autoref{thm:greedy}. Finally, even with a relatively wide interval prediction, \ipa outperforms \ta in both average and worst-case. 
% \textbf{(4)} while achieving a bounded competitive ratio, \ta performs on average worse than all prediction algorithms.  dont need to mention this it makes sense, cutting for space - adam
% intuitively, we expect $\ppn$ to perform well for a small number of cases and perform poorly for all others because it ignores the importance of the last item and takes all of the ones with higher value than the prediction which leads to missing the last item. we can see expected behaviors in \autoref{fig:f1}(a). 
We also report the impact of parameters on the performance of the tested algorithms. 
Given a trusted (i.e., correct) succinct prediction, our theoretical results show that our algorithms (\ppb, \ppa, \ipa) can obtain competitive ratios independent of the ratio $\nicefrac{U}{L}$, in contrast to classic online algorithms such as \ta. %, the competitive ratio of the prediction-based algorithms is independent of the ratio between the most and least valuable items, i.e., $\nicefrac{U}{L}$. 
In \autoref{fig:f1}(b), we verify this by varying $\nicefrac{U}{L}$.
% \in \{300, \dots, 20000 \}$. 
These results show that the empirical competitive ratio of \ta increases along with $\nicefrac{U}{L}$, while prediction-based algorithms are robust to these variations. In \autoref{fig:f1}(c), we vary the value of $\wpr$ using four values of $0.29, 0.45, 0.63$, and $0.78$. Due to its design (\autoref{thm:ppa-a}), \ppa's performance is better with smaller values of $\wpr$, while other algorithms see less benefit.

\section{Conclusion}
\label{sec:conclusion}
% We study learning-augmented algorithms for the online knapsack problem (\OKP). 
% We design and analyze a family of algorithms that leverage succinct trusted predictions about \OKP inputs (e.g., single critical value, interval of values).  In the untrusted (i.e., imperfect) prediction setting, we surprisingly find that a combination of a trusted learning-augmented algorithm and a worst-case competitive algorithm achieves a nearly-optimal trade-off between robustness and consistency.  Along the way, we derive a novel partitioning idea that adapts any algorithm for (fractional) \OFKP to solve \OIKP (integral knapsack, assuming small weights) with minimal competitive loss.
% We study algorithms with \textit{succinct predictions} for online knapsack (\OKP). 
% Leveraging a fractional relaxation (\OFKP), we develop algorithms using a simple point or interval prediction to achieve a nearly optimal consistency-robustness trade-off.
% We present a conversion algorithm 
% We validate our algorithms in experiments, showing significant improvements over \OKP algorithms that do not use predictions. Last, our algorithm improves empirical performance compared to prior work using more complex prediction models, particularly when predictions are imperfect.

% \noindent\textbf{Limitations and future work.} 
In this paper, we propose near-optimal learning-augmented algorithms for both fractional and integral knapsack problems using succinct and practical predictions. 
A number of questions remain for future work.  Although the assumption of small item weights is standard in the (integral) \OKP literature~\cite{ZCL:08, SunZeynali:21, sentinel21}, questions remain about an algorithm that uses untrusted predictions without such an assumption.  It would be interesting to further explore fundamental trade-offs between consistency and robustness in the case of interval predictions -- there is growing literature on \textit{uncertainty quantification}~\cite{Sun:24} in online algorithms that could inform e.g., fine-grained consistency-robustness trade-off bounds based on the error of a given predictor.

\section*{Acknowledgements}

This research was supported in part by funding from the National Science Foundation under grant numbers CAREER-2045641, CPS-2136199, and CNS-2325956, and a research gift from Adobe Research. This material is based upon work supported by the U.S. Department of Energy, Office of Science, Office of Advanced Scientific Computing Research, Department of Energy Computational Science Graduate Fellowship under Award Number DE-SC0024386.
% Mohammadreza Daneshvaramoli, Helia Karisani, and Adam Lechowicz were supported in part by institutional resources from the Manning College of Information and Computer Sciences at the University of Massachusetts Amherst.
We thank the anonymous reviewers for their valuable feedback.

\section*{Disclaimers}
This report was prepared as an account of work sponsored by an agency of the United States Government. Neither the U.S. Government nor any agency thereof, nor any of their employees, makes any warranty, express or implied, or assumes any legal liability or responsibility for the accuracy, completeness, or usefulness of any information, apparatus, product, or process disclosed, or represents that its use would not infringe privately owned rights. Reference herein to any specific commercial product, process, or service by trade name, trademark, manufacturer, or otherwise does not necessarily constitute or imply its endorsement, recommendation, or favoring by the U.S. Government or any agency thereof. The views and opinions of authors expressed herein do not necessarily state or reflect those of the U.S. Government or any agency thereof.

\section*{Impact Statement}
This paper presents work whose goal is to advance the field of 
Machine Learning. There are many potential societal consequences 
of our work, none of which we feel must be specifically highlighted here.

% In the unusual situation where you want a paper to appear in the
% references without citing it in the main text, use \nocite
\nocite{langley00}

\bibliography{main}
\bibliographystyle{icml2025}

%%%%%%%%%%%%%%%%%%%%%%%%%%%%%%%%%%%%%%%%%%%%%%%%%%%%%%%%%%%%%%%%%%%%%%%%%%%%%%%
%%%%%%%%%%%%%%%%%%%%%%%%%%%%%%%%%%%%%%%%%%%%%%%%%%%%%%%%%%%%%%%%%%%%%%%%%%%%%%%
% APPENDIX
%%%%%%%%%%%%%%%%%%%%%%%%%%%%%%%%%%%%%%%%%%%%%%%%%%%%%%%%%%%%%%%%%%%%%%%%%%%%%%%
%%%%%%%%%%%%%%%%%%%%%%%%%%%%%%%%%%%%%%%%%%%%%%%%%%%%%%%%%%%%%%%%%%%%%%%%%%%%%%%
\newpage
\appendix
\onecolumn
\section{Appendix}

\renewcommand{\thefigure}{A\arabic{figure}}

\setcounter{figure}{0}

\subsection{Related Work} 
\label{sec:relwork}

We make contributions to two lines of work: (i) work on online knapsack, one-way trading, and related problems, e.g., $k$-search, single-leg revenue management, and; (ii) work on online algorithms with advice and learning-augmentation.  We describe the relationship to each below.

\noindent\textbf{Online Knapsack and Online Search.}
Our work contributes to the literature on the classic (integral) online knapsack problem first studied in~\citep{Marchetti:95}, with foundational results given by~\citet{ZCL:08}. In the past few years, many works have considered variants of this problem, such as removable items~\citep{Cygan:16}, item departures~\citep{SunYang:22}, and generalizations to multidimensional settings~\citep{Yang2021Competitive}.  Closer to this work, several studies have considered the online knapsack problem with additional information or in a learning-augmented setting, including frequency predictions~\citep{sentinel21}, online learning~\citep{Zeynali:21},  advice complexity~\citep{Bockenhauer:14}, and stochastic knapsack~\citep{vondrak2011}.  Most works in this literature focus on the more difficult case of integral item acceptance; thus, connections between this case and the fractional relaxation in the online setting have been relatively understudied in the literature thus far.
% However, unlike our work,  these works do not generalize the problem to allow both fractional and integral item acceptance.
% no bounds on U/L
A few studies have considered online knapsack with fractional item acceptance under slightly different assumptions, including  ``online partially fractional knapsack'', where items are removable~\cite{Noga:05}, or online fractional knapsack in a random order model~\cite{Giliberti:21}, where the arrival order of items is a random permutation (i.e. not adversarial). \citet{SunZeynali:21} motivate the observation that online fractional knapsack is equivalent to the one-way trading problem with a \textit{rate constraint} for each online price.

% work on online search
The connection between online fractional knapsack ($\OFKP$) and one-way trading (with a rate constraint) motivates a further connection to another track of literature on problems broadly classified as \textit{online search}, including problems such as $1$-$\max / \min$ search and one-way trading, both first studied by~\citet{ElYaniv:01}, $k$-$\max / \min$ search~\cite{Lorenz:08}, and single-leg revenue management~\cite{Ball:09}.  In general, $\OFKP$ can be understood as a ``bridge'' between work on online search and online knapsack.  Follow-up works have since considered applications of online search problems and additional variants, including cloud pricing~\citep{Zhang:17}, electric vehicle charging~\citep{SunZeynali:21}, switching cost of changing decisions~\citep{Lechowicz:23}, and learning-augmented versions of both one-way trading~\citep{SunLee:21} and $k$-search~\citep{LeeSun:22}.  However, to the best of our knowledge, none of these works consider the impact of rate constraints.

\noindent\textbf{Learning-Augmented Algorithms.}
Learning-augmented algorithm design is an emerging field that incorporates machine-learned predictions about future inputs in algorithm designs, with the goal of matching the good average-case performance of the predictor while maintaining worst-case competitive guarantees.  The concepts of consistency and robustness~\cite{Lykouris:18, Purohit:18} give a formal mechanism to quantify the trade-off between following machine-learned predictions and hedging against adversarial inputs, particularly with respect to predictions that are very incorrect. This framework has been applied to a number of online problems, including caching~\cite{Lykouris:18}, ski-rental~\cite{Purohit:18, Wei:20, Antoniadis:21}, set cover~\cite{Bamas:20}, online selection~\cite{Jiang:21}, online matching~\cite{Antoniadis:20}, convex body chasing~\cite{Christianson:22}, and metrical task systems~\cite{Antoniadis:20MTS, Christianson:23MTS}, just to name a few.  Most relevant to our setting, it has been explored in the context of online knapsack~\cite{sentinel21, Zeynali:21}, unit-profit online knapsack~\cite{Boyar:22}, one-way trading~\cite{SunLee:21}, and single-leg revenue management~\cite{Balseiro:23}.  

An overarching goal in this framework is to quantify and match or nearly match an \textit{optimal consistency-robustness trade-off}~\cite{Wei:20}.  Since gains in consistency generally result in worsened robustness guarantees, it is natural to consider a notion of Pareto-optimality between the two, and ideally to design algorithms that can achieve this best trade-off  for any desired robustness or consistency target.  Several works have studied optimal trade-offs for different online problems -- the closest results to our setting are for one-way trading~\cite{SunLee:21} and single-leg revenue management~\cite{Balseiro:23}.  However, optimal robustness-consistency trade-offs have not yet been considered in the context of online knapsack -- differences in the problem settings of one-way trading and single-leg revenue management result in the substantially different results that we obtain in this work.
We also note that the notion of online algorithms with \textit{practical and succinct predictions} has recently been explored in the context of similar problems such as paging~\cite{antoniadis2023paging} and non-clairvoyant job scheduling~\cite{benomar2024advice}. 

% \adaml{ rewriting here}
Learning-augmented algorithms are also closely related to the field of advice complexity, which considers how the performance of an online algorithm can be improved with a specific amount of advice about the input, where the advice is assumed to be correct and is provided by an oracle. This field was first established for paging by~\cite{Bockenhauer:09}, with results following for many other online problems~\cite{AC:Boyar:15, AC:Bockenhauer:14, AC:Bockenhauer:11, AC:Komm:12}. Of particular interest to our setting, \citet{Bockenhauer:14} explore the online knapsack problem with advice, showing that   single bit of advice gives a $2$-competitive algorithm, but $\Omega( \log n)$ advice bits are necessary to further improve the competitive ratio.  They give an online algorithm with competitive ratio $1 + \epsilon$ for any constant $\epsilon > 0$ that uses $O(\log n)$ bits of advice ~\citep[][Theorem 13]{Bockenhauer:14}, where $n$ is the number of items.  A similar result was also recently shown for the $k$-search problem by (\cite{Clemente:22}). To the best of our knowledge, there is no existing work considering the online fractional knapsack problem with advice.

%\subsection{Assumption and limitations}
%\adaml{this section is still empty, not sure if it is intentional} \helia{I will fix it.}

%~~~~~~~~~~~~~~~~~~~~~~~~~~~~~~~~~~~~~~~~~~~~~~~~~~~~~

% We briefly reviewed this line of work in our expanded literature review in Appendix A.1 (Lines 563-565, [17] in the submitted paper). However, we agree that it would be helpful to add a detailed discussion to the main body – thanks for suggesting this.

% Böckenhauer et al. [17] study the advice complexity of a general integral OKP.  They 

% In contrast, we focus on algorithms that use succinct advice about the critical value, which is more reasonable to expect from an ML model in practice. Our algorithm's key idea is to adaptively adjust the admission capacity for items with critical value based on information from previous items. This adaptive strategy allows our algorithm to achieve a strong competitive ratio using only the critical value.

\subsection{Offline Optimal Solution}\label{apx:offline}

The offline optimal solution to Equation~\eqref{eq:lp} in the fractional \OFKP setting is straightforward to compute. The optimal solution starts by selecting the item with the maximum unit value amongst all $v_i$'s and adds the maximum amount allowed ($w_i$) to the knapsack. If there is any remaining capacity to fill, the optimal solution then picks the item with the next highest unit value and adds the maximum amount allowed while respecting the capacity constraint. This process is repeated until the knapsack is completely filled. 

For ease of analysis, we let $(v'_i,w'_i)$ denote the values and weights of the items sorted in descending order. That is, $v'_1 \ge v'_2 \ge \ldots \ge v_n'$.% and if the $i^{th}$ highest value is $v_i' = v_j$ for some $j$, then $w_i' = w_j$.
We let $x_i'$ denote the portion of item $(v_i',w_i')$ which is added to the knapsack by some algorithm.

%For easier analysis, we say that $v'_i$ is sorted values in descending order and use $v_i$ as the arrival order of values:
%
%$$(v'_1,v'_2, ...)  \in \text{PERMUTATIONS} (v_1,v_2,... ) ,$$
%$$\forall i : v'_i \geq v'_{i+1} .$$
%\Cam{I don't follow this notation. Like what are the angled brackets? What is the type of $p_i'$? Also what is $c_i'$?} \Cam{The first line is confusing since it is not correct in general -- unless we are assuming the requests are made in order of value.}
%
%So, the highest value is $v'_1$, and also $w'_i$ corresponds to the capacity limit for $v'_i$, we also define $x'_i$ as selected amount of item with value $v'_i$ and weight $w'_i$.

%\Cam{Should formally define $r$. In particular $r$ is the maximum value such $\sum_{i=1}^{r-1} w_i' \leq 1$.}
%\Cam{I also wonder if we should denote this as $r(\mathcal{I})$ since it is a function of the specific input sequence.}
%\md{I think it is fine if we want to change then we should also say $x'(I), v'(I)$ and ...}

With this notation, the offline optimal solution to \eqref{eq:lp} can be written as:
\begin{equation}
\label{eq:op1}
\mathbf{x}^\star \eqdef \left(x^\star_1, \ldots, x^\star_n\right) = \left (w'_1, w'_2, \ldots, w'_{r-1}, 1-\sum_{i=1}^{r-1}w'_i, 0, 0, \ldots \right ) .
\end{equation}

As seen in \eqref{eq:op1}, the optimal solution selects all the weight of the most valuable items until the knapsack capacity is filled. For lower values, it doesn't acceptance anything. We refer to the last item with a strictly positive acceptance as the $p$th item, which is the maximum value of $j \in [1,n]$ such that $\sum_{i=1}^{j-1} w_i' < 1$.

This optimal solution yields total profit:
\begin{equation}
\label{eq:op2}
\OPT(\mathcal{I}) = \sum_{i=1}^{p-1} w'_i v'_i + \left(1-\sum_{i=1}^{p-1}w'_i\right)v'_p .
\end{equation}
Both \eqref{eq:op1} and \eqref{eq:op2} hold if the sum of all $w_i$ is greater than or equal to 1, which constitutes the majority of interesting $\OFKP$ instances. 

In cases where the sum of $w_i$ is less than 1, the optimal solution selects all items, which cannot be described by the above equations. To address this scenario, we introduce an auxiliary item denoted as $v_{n+1}$ with a corresponding weight $w_{n+1}$, where $v_{n+1} = 0$ and $w_{n+1} = 1$. It's important to note that this additional item doesn't affect the profit of any algorithm; rather, it simplifies and maintains consistency in mathematical modeling.
In the problematic case, where the sum of $w_i$ is less than 1, $p$ would be equal to $n+1$, and both \eqref{eq:op1} and \eqref{eq:op2} would still remain valid.
We note that this case (where the knapsack can accept all items) is somewhat trivial, as the optimal policy simply accepts all items.  In the majority of the paper, we implicitly assume that the sum of all $w_i$s is greater than 1.
%some how i don't like the last 3lines which is not true we address everything

% in prediction model define \vpr as alternative to show prediction and \wpr as sum of all items with exact weight as \vpr
%then you should review op2

%To clarify, this formula indicates that the entire capacity at high values is picked and for the remaining portion, it will be picked as much as it is in supply, represented by the last value $\vpr$.

\subsection{Lower bounds for integral}
\begin{theorem}
\label{thm:lbi-b}
% There is no deterministic algorithm for the integral knapsack problem that has a meaningful competitive ratio with or without critical value prediction, assuming only that unit values are bounded.
There is no deterministic algorithm for the online integral knapsack problem (\OIKP) that has a meaningful competitive ratio with or without a critical value prediction when only the bounded unit value assumption holds.
\end{theorem}

\begin{proof}
Consider the following two input instances, each with a critical value $\vpr = 1$ and a total critical weight of $2\kappa$ at the critical value:
\begin{subequations}
\begin{align*}
\mathcal{I}_1: & \quad \text{number of items } n = 1, \quad (v_1, w_1) = (1, 2\kappa).  \\
\mathcal{I}_2: & \quad \text{number of items } n = 2, \quad (v_1, w_1) = (1,  2\kappa), \quad (v_2, w_2) = (U, 1 - \kappa).
\end{align*}    
\end{subequations}
where $\kappa \to 0$. The integral offline optimal solutions for the two instances are $\OPT(\mathcal{I}_1) = 2\kappa$ and $\OPT(\mathcal{I}_2) = (1 - \kappa)U$, respectively. 
%\Cam{Isn't the second opt $(1-\kappa) U + \kappa$? }\md{not it is integral you cant fill it since it can break the item like fractional}
%\adaml{agree, can't split up the item with weight 2 *kappa }

Since both instances have the same critical value and are thus given the same prediction, and since they share the same first item, any online algorithm (whether it uses a prediction or not) will make the same decision regarding the first item, regardless of which instance is presented. 

Since this problem is integral, the algorithm must either accept or reject the first item. If it accepts the first item, there is not enough capacity for the second item. In this case, the competitive ratio for $\mathcal{I}_2$ is $c(\mathcal{I}_2) = \frac{U(1 - \kappa)}{2\kappa}$, which tends to infinity as $\kappa$ approaches $0$. 

In the alternative case, if the algorithm rejects the first item, then in $\mathcal{I}_1$, the value obtained is $0$, meaning that the competitive ratio for $\mathcal{I}_1$ is also undefined. Thus, in both cases, the competitive ratio of the algorithm is not meaningful.
\end{proof}

\begin{theorem}
\label{thm:lbi-s}
    There is no deterministic algorithm for the online integral knapsack problem (\OIKP) that has a meaningful competitive ratio with or without a critical value prediction when only the small weights assumption holds.
\end{theorem}

\begin{proof}
    Consider the following set of input instances, each with a critical value $\vpr$ and a total weight of $1$ at the critical value, where $\kappa > 0$ is the (small) weight of each item:
\begin{align*}
\mathcal{I}_1: & \quad \text{number of items } n = 1/\kappa, \text{ each item value and weight:} \quad (v_i, w_i) = (\vpr, \kappa).
\end{align*}
For this first instance, any $c$-competitive algorithm $\ALG$ must accept at least $1/c$ of the items. Other instances have the first instance as the prefix, with the second part defined as follows:
\begin{align*}
\forall i \in [1, \nicefrac{1}{\kappa} - 1], \ \  \mathcal{I}_i: & \ \ \text{total no. of items } n = \nicefrac{1}{\kappa} + i - 1 , \ \ (v_{j+\nicefrac{1}{\kappa}}, w_{j+\nicefrac{1}{\kappa}}) = \left(\frac{c(c+1)^{i-2}}{\kappa} \vpr, \kappa \right) : i > \nicefrac{1}{\kappa}.
\end{align*}

Since the critical value is the same across all instances $\{ \mathcal{I}_i \}$, and instance $\mathcal{I}_i$ is a prefix of all instances $\mathcal{I}_j : j > i$, we can say that any decision made by a deterministic algorithm on the shorter instance must be repeated in the larger instance as the early part of the input arrives.

Note that the total value of all items in each instance $i$ is $(c+1)^{i-1} \vpr$. Ignoring the knapsack's capacity limit, $\ALG$ can at most accept this amount. However, in instance $i+1$, the last item's actual value is $c(c+1)^{i-1} \cdot \frac{1}{\kappa} \times \kappa = c(c+1)^{i-1}$, which is $c$ times higher than the sum of all previous values. This implies that, in order for the algorithm to be competitive, it must pick the last item of instance $i+1$. The same argument can be made for all $i$, meaning that the algorithm must accept all items; otherwise, $\ALG$ will not be $c$-competitive for at least one instance $\mathcal{I}_i$.

However, if $\ALG$ was to accept all of the items that are necessary for $c$-competitiveness on all instances $\{ \mathcal{I}_i \}$ simultaneously, it would require a capacity of $1/c + (1/\kappa - 1)\kappa$, which simplifies to $1/c + 1 - \kappa$. As $\kappa \to 0$, this is greater than $1$. Therefore, there is no fixed $c$ such that an arbitrary $\ALG$ can be $c$-competitive.
\end{proof}

\subsection{Deferred Pseudocode for Existing Optimal Online Algorithm (\ta)}\label{apx:pseudocode}
\smallskip  \noindent\textbf{Pseudocode for \ta.}
% \Cam{We should clarify here if this algorihtm is in the fractional or the integarl case? In the intro we say it is in both, but I think at least the competitive ratio as stated is only for the fractional case.}
% \adaml{CR is for both -- that's why we reference 2 theorems.  I added a note to the statement.}
Here we give the pseudocode for the baseline algorithm \ta -- this is the known optimal deterministic algorithm for \OKP (i.e., both the fractional (\OFKP) and integral (\OIKP) cases) without predictions.
% , as discussed in \autoref{sec:prior-result}.
\ta takes a threshold function $\phi(z):[0,1]\to[L,U]$ as input (note that the maximum and minimum unit values $U$ and $L$ are assumed to be known).
$\phi(z)$ is understood as the pseudo price of packing a small amount of item when the knapsack's current \textit{utilization} (i.e. the fraction of total capacity filled with previously accepted items) is $z$. The algorithm determines the decision $x_i$ by solving an optimization problem 
$x_i = \argmax_{x_i \in \mathcal{X}_i \cap [0, 1-z] } x_i v_i - \int_{z}^{z + x_i} \phi(u) du$, where recall that $\mathcal{X}_i$ is the feasible decision set defined in \autoref{sec:sys}. For \OIKP, the algorithm will admit the item if $v_i \ge \phi(z)$ and there is sufficient remaining capacity $w_i \le 1 - z$; for \OFKP, the algorithm will continuously admit the item until one of the following occurs: (i) the utilization reaches $\phi^{-1}(v_i)$; (ii) the entire item is admitted; or (iii) the knapsack capacity is used up.
Notice the threshold function $\phi$ is the only design space for Algorithm~\ref{alg:ta}. \cite{SunZeynali:21} shows that the optimal competitive ratio can be attained when $\phi$ is carefully designed as follows.
% \clearpage

% \Cam{Doesn't this only apply to \OFKP? For \OIKP there must be some other factor depending on the item size?}
% \adaml{No, this algorithm works for both cases when we use the pseudo-utility maximization on line 8.}

\begin{lemma}[Theorem 3.5 in~\cite{SunZeynali:21}, Theorem 2.1 in~\cite{ZCL:08}]
\label{lem:ta}
When the unit value of items is bounded within $[L, U]$, Algorithm~\ref{alg:ta} is $(1 + \ln(U/L))$-competitive for \OFKP and \OIKP (integral with small item weights) when the threshold is given by
\begin{align}
\phi(z) =
\begin{cases}
   L & \hspace{-2mm} z \in \big[0, \frac{1}{1+\ln(U/L)}\big) \\
   L\exp\big((1+\ln(U/L)) z - 1\big) & \hspace{-2mm} z \in \big[\frac{1}{1+\ln(U/L)}, 1\big]
\end{cases}.
\end{align}

Further, no online algorithm can achieve a competitive ratio smaller than $1+\ln(U/L)$.
\end{lemma}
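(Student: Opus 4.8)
The plan is to prove the two assertions separately: the upper bound (that \ta with the stated $\phi$ is $(1+\ln(U/L))$-competitive) by an online primal--dual argument on the fractional relaxation, and the matching lower bound by an adversarial family of nested ``increasing-price'' instances. Throughout I write $c = 1 + \ln(U/L)$, and I note that the core argument is the fractional (one-way-trading) one; the integral small-weight case is analogous with vanishing additive loss. For the upper bound I would first view the offline LP \eqref{eq:lp} as a packing LP and form its dual: minimize $\lambda + \sum_i w_i \mu_i$ subject to $\lambda + \mu_i \ge v_i$ and $\lambda,\mu_i \ge 0$, where $\lambda$ is dual to the capacity constraint and $\mu_i$ to the box constraint $x_i \le w_i$. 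Letting $z^\star$ denote \ta's final utilization, I would set $\lambda = \phi(z^\star)$ and $\mu_i = (v_i - \phi(z^\star))^+$; this is dual-feasible by inspection, since $\lambda + \mu_i = \max\{\phi(z^\star),\, v_i\} \ge v_i$. By weak duality, it then suffices to prove $\lambda + \sum_i w_i \mu_i \le c \cdot \ALG$.

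The key structural fact I would establish is that whenever $z^\star < 1$, every item with $v_i > \phi(z^\star)$ is accepted \emph{in full}: because utilization is monotone and never exceeds $z^\star$, such an item can only have been limited by its own weight, not by capacity or by the (higher) threshold $\phi^{-1}(v_i) > z^\star$. Writing $h(z)\ge \phi(z)$ for the unit value packed at level $z$ and $T \subseteq [0,z^\star]$ for the levels occupied by these high-value items (of total measure $W$), this rewrites the dual objective as $\phi(z^\star)(1-W) + \int_T h(z)\,dz$. Combining $\ALG = \int_0^{z^\star} h \ge \int_0^{z^\star}\phi$ with the monotonicity of $\phi$ (which pushes the bound to its worst at $W=0$), the desired inequality collapses to the single identity $c\int_0^{z^\star}\phi(z)\,dz = \phi(z^\star)$. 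This is exactly the relation that the stated $\phi$ is engineered to satisfy: the piece $\phi' = c\phi$ on $[1/c,1]$ maintains $\phi = c\int_0^z\phi$, the flat piece $\phi \equiv L$ on $[0,1/c]$ supplies the additive ``$+1$,'' and the calibration $\phi(1)=U$ forces $c = 1 + \ln(U/L)$.

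I expect the main obstacle in the upper bound to be the degenerate underfull regime $z^\star < 1/c$, where this particular dual certificate is not tight and in fact does not yield a valid $c\cdot\ALG$ bound. I would dispatch it directly rather than through duality: while $z < 1/c$ the threshold is flat at $L$, so \ta accepts every arriving item in full; hence $z^\star < 1/c$ forces the total item weight below $1$, in which case both \ta and \OPT pack everything and the ratio is $1 \le c$. The boundary case $z^\star = 1$ is subsumed by the main computation, since then no item exceeds $\phi(1)=U$, so $W = 0$ and the identity gives $c\int_0^1\phi = U = \OPT$-side bound.

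For the lower bound I would use the standard nested family: the adversary reveals items of negligible weight whose unit values sweep continuously upward from $L$, with at least unit total weight available at each value, and may halt at any $p\in[L,U]$; halting at $p$ gives $\OPT = p$. For a deterministic algorithm let $\psi(p)\in[0,1]$ be the capacity irrevocably committed by the time value $p$ is reached; $c$-competitiveness at \emph{every} halting point demands $\int_L^p u\,d\psi(u) \ge p/c$. Forcing equality and differentiating yields $\psi'(p) = 1/(cp)$, while the constraint in the limit $p \to L^+$ forces a pre-commitment $\psi(L) \ge 1/c$ (mirroring the flat piece of $\phi$); integrating gives $\psi(U) = \tfrac{1}{c}\bigl(1+\ln(U/L)\bigr)$, and feasibility $\psi(U)\le 1$ forces $c \ge 1 + \ln(U/L)$. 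Applying the same nested family to the \emph{expected} committed capacity (equivalently, invoking Yao's principle) extends the bound to randomized algorithms, completing the matching lower bound.
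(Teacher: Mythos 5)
The paper does not actually prove this lemma --- it imports the statement verbatim from the cited works --- so there is no in-paper argument to compare against; what follows is an assessment of your reconstruction on its own terms. Your proof is essentially correct and follows the standard route. On the upper bound, the primal--dual certificate $\lambda=\phi(z^\star)$, $\mu_i=(v_i-\phi(z^\star))^{+}$ is feasible, and your chain of reductions does collapse to the inequality $c\int_0^{z^\star}\phi(z)\,dz \ge \phi(z^\star)$ (which holds with equality for $z^\star\ge 1/c$), using exactly the two facts you isolate: items with $v_i>\phi(z^\star)$ are fully packed whenever $z^\star<1$, and $h(z)\ge\phi(z)$ on every occupied level; your separate dispatch of the regimes $z^\star<1/c$ and $z^\star=1$ is also right. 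Two points should be tightened. First, the claim that $z^\star<1/c$ forces full acceptance of every arriving item relies on a tie-breaking convention in the argmax of Algorithm~\ref{alg:ta}: for $v_i=L$ the objective $x_iv_i-\int\phi$ is identically zero on the flat region, so the maximizer must be resolved in favor of acceptance (as in the cited works) for your case analysis to go through. Second, in the lower bound, ``forcing equality and differentiating'' identifies the extremal acceptance function but does not by itself show that \emph{every} feasible $\psi$ satisfies $\psi(U)\ge\tfrac{1}{c}\left(1+\ln(U/L)\right)$; the rigorous route is integration by parts followed by Gr\"{o}nwall's inequality applied to $\psi(x)\ge \tfrac{1}{c}+\tfrac{1}{x}\int_L^x\psi(u)\,du$ --- which is precisely the machinery the paper itself deploys, with the same continuously non-decreasing instance family, in the proof of Lemma~\ref{lem:gronwallLB}, so you can reuse that step verbatim. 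With those repairs the argument is complete, including the extension to randomized algorithms by applying the same inequality to the expected acceptance function.
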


% \Cam{I don't understand this pseudocode: the algorithm never considers the weight $w_i$. So how does it avoid taking too much of an item. I think you are encoding this in $X_i$ but $X_i$ is undefined so its not clear to the reader.}

% \adaml{$\mathcal{X}_i$ is defined in section 2.1 I think. we can define it here again?  I referenced it either way. And yes, the ZCL algorithm doesn't consider the weight of the item -- it updates the utilization based on the items packed so far.  ZCL (for integral knapsack) is designed in the regime where individual item weights are small.}

\begin{algorithm}[tb]
   \caption{\ta: An Online Threshold-Based Algorithm for \OKP Without Prediction}
   \label{alg:ta}
\begin{algorithmic}
   \STATE {\bfseries Input:} threshold function $\phi(z)$
   \STATE {\bfseries Output:} online decisions $\{ x_i \}_{i \in [n]}$
   \STATE {\bfseries Initialization:} knapsack utilization $z^{(0)} \gets 0$
   \FOR{each item $i$ (with unit value $v_i$ and weight $w_i$)}
      \STATE $\mathcal{X}_i \gets \{0, w_i\} \text{ or } [0, w_i]$ \COMMENT{feasible set: discrete for \OIKP, continuous for \OFKP}
      \IF{$v_i < \phi(z^{(i-1)})$}
         \STATE $x_i \gets 0$
      \ELSE
         \STATE $x_i \gets \argmax_{x_i \in \mathcal{X}_i \cap [0, 1-z^{(i-1)}]} \left( x_i v_i - \int_{z^{(i-1)}}^{z^{(i-1)} + x_i} \phi(u) \, du \right)$
         \label{alg:ta-optimization}
      \ENDIF
      \STATE Update $z^{(i)} \gets z^{(i-1)} + x_i$
   \ENDFOR
\end{algorithmic}
\end{algorithm}

\subsection{Deferred Pseudocode for \autoref{sec:perfect} (Trusted Predictions)}\label{apx:algs}

\label{subsec:ppn}
\noindent \textbf{\ppn:}  \textit{A naïve trusted-prediction algorithm:}  We consider a naïve ``greedy'' algorithm that takes a prediction on $\vpr$ as input.  The first algorithm rejects any items with unit value $< \vpr$ and fully accepts any item with unit value $\geq \vpr$ until the capacity limit. In \autoref{thm:greedy}, we show that \ppn fails to achieve a meaningful improvement in the worst-case competitive ratio (i.e., consistency since we assume the prediction is correct). We prove the following result in \autoref{pr:ppa-n1}. The second way of defining is to reject items with value $\leq \vpr$ and accept items with value $> \vpr$. This algorithm is not competitive too since there could be no item with greater value than $\vpr$ which means you won't accept anything.

\begin{comment}
\begin{algorithm}[h]
\caption{\ppn 2: second naïve trusted-prediction algorithm with trusted prediction}
\label{alg:ppn}
\begin{algorithmic}[1]
\State \textbf{input}: prediction $\vpr$.
\State \textbf{output}: online decisions $x_i$ s.
\While{item $i$ (with unit value $v_i$ and weight $w_i$) arrives}
    \If{$v_i < \vpr$}
        \State $x_i = 0$;     \Comment{item is rejected}
    \ElsIf{$v_i > \vpr$} 
        \State $x_i = w_i$;    \Comment{item is (fully) accepted}
%        \md{if we don't assume uniqueness this will change}
    \EndIf
% \State determine $x_i = \max_{x\in[0,\min\{w_i,1-z^{(i-1)}\}]} v_i x - \int_{z^{(i-1)}}^{z^{(i-1)} + x} \phi(u)du$;
\EndWhile
\end{algorithmic}
\end{algorithm}
\end{comment}

\begin{theorem}\label{thm:greedy}
\emph{\ppn} that fully trusts the prediction is $\nicefrac{U}{L}$-competitive in the worst case.
\end{theorem}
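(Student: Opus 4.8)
The plan is to pin down the competitive ratio from two sides: a one-line upper bound showing \ppn is never worse than $\nicefrac{U}{L}$-competitive, together with a matching adversarial construction showing this is attained, so that the worst case really is $\nicefrac{U}{L}$ and the ``prediction'' buys nothing over the trivial bound.

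For the upper bound, I would first argue that \ppn fills its knapsack to capacity. Since \OPT fills capacity $1$ using only items of value $\geq \vpr$ (by \sref{Def.}{dfn:vpr}), the total weight of items with value $\geq \vpr$ is at least $1$; as \ppn greedily accepts every such item until full, it reaches utilization $1$ in the fractional case. Every unit of capacity \ppn fills carries value at least $\vpr \geq L$, so $\ALG(\mathcal{I}) \geq \vpr \geq L$, while every unit \OPT fills carries value at most $U$, so $\OPT(\mathcal{I}) \leq U$. Dividing gives $\nicefrac{\OPT(\mathcal{I})}{\ALG(\mathcal{I})} \leq \nicefrac{U}{\vpr} \leq \nicefrac{U}{L}$ on every instance.

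For the lower bound, I would construct a family of instances parametrized by a small $\delta > 0$ on which \OPT extracts nearly $U$ while \ppn is pinned at $L$. The instance presents, in arrival order, a block of value-$L$ items of total weight $1$, followed by a block of value-$U$ items of total weight $1-\delta$. The true critical value is then $\vpr = L$: \OPT takes all weight-$(1-\delta)$ of the value-$U$ items and tops off the remaining $\delta$ capacity with value-$L$ items, giving $\OPT = (1-\delta)U + \delta L$ with minimum accepted value exactly $L$. Given the correct prediction $\vpr = L$, \ppn accepts value-$\geq L$ items on arrival, so the leading weight-$1$ block of value-$L$ items fills its knapsack before any value-$U$ item appears, yielding $\ALG = L$. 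The ratio is $\frac{(1-\delta)U + \delta L}{L} = (1-\delta)\nicefrac{U}{L} + \delta \to \nicefrac{U}{L}$ as $\delta \to 0$.

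The main obstacle is keeping this construction internally consistent: the prediction handed to \ppn must equal the genuine critical value of the instance, yet \ppn must still squander its capacity on the lowest-value items. The tension is that driving $\OPT$ toward $U$ wants a large weight of high-value items, but if that weight reaches $1$ the critical value jumps to $U$ and \ppn would accept only those items. Capping the high-value weight strictly below $1$ (at $1-\delta$) while front-loading the arrival order with value-$\vpr$ items resolves this, and $\delta \to 0$ closes the gap. Finally, I would remark that in the integral \OIKP setting the sole change is a rounding loss of at most one item's weight when \ppn fills up, which is $o(1)$ under the small-weight assumption $w_i \ll 1$ and does not affect the asymptotic ratio.
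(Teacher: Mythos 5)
Your proposal is correct and its adversarial construction (a weight-$1$ block of value-$L$ items arriving before a weight-$(1-\delta)$ block of value-$U$ items, with true critical value $\vpr = L$) is essentially identical to the two-item instance in the paper's own proof. You additionally spell out the matching $\nicefrac{U}{\vpr} \leq \nicefrac{U}{L}$ upper bound, which the paper only asserts in one sentence; that is a welcome tightening but not a different approach.
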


It is worth mentioning that this algorithm's experimental results show the benefits of using critical value prediction even without improvements, as seen in Figure~\ref{fig:A1}. However, it falls short of providing meaningful theoretical results in special cases.

\smallskip  \noindent\textbf{Pseudocode for \ppa.}
Here we give the pseudocode for the \ppa algorithm (\sref{Algorithm}{alg:two}) discussed in \autoref{sec:pp-algs} and \autoref{thm:ppb}.

\begin{algorithm}[tb]
   \caption{\ppa: A Basic 2-Competitive Algorithm for \OFKP with Trusted Prediction}
   \label{alg:two}
\begin{algorithmic}
   \STATE {\bfseries Input:} prediction $\vpr$
   \STATE {\bfseries Output:} online decisions $\{x_i\}$
   \FOR{each item $i$ (with unit value $v_i$ and weight $w_i$)}
      \IF{$v_i < \vpr$}
         \STATE $x_i \gets 0$ \COMMENT{item is rejected}
      \ELSIF{$v_i > \vpr$}
         \STATE $x_i \gets \frac{w_i}{2}$ \COMMENT{item is partially accepted}
      \ELSIF{$v_i = \vpr$}
         \STATE $\text{temp} \gets \min(w_i, 1 - \wpr)$
         \STATE $\wpr \gets \wpr + \text{temp}$
         \STATE $x_i \gets \frac{\text{temp}}{2}$ \COMMENT{item is partially accepted}
      \ENDIF
   \ENDFOR
\end{algorithmic}
\end{algorithm}

% \smallskip  \noindent\textbf{Pseudocode for \ipa.}
% Next, we give the pseudocode for the \ipa algorithm (\sref{Algorithm}{alg:ipa}) discussed in \autoref{sec:intervalIPA} and \autoref{thm:ipa}.

% \begin{algorithm}[!ht]
% \caption{\ipa: An interval-prediction-based algorithm for \OFKP}
% \label{alg:ipa}
% \begin{algorithmic}[1]
% \State \textbf{input}: interval prediction $\ell, u$, robust algorithm $\mathcal{A}$ with competitive ratio $\alpha$ 
% % threshold algorithm $\ta$ (Algorithm~\ref{alg:ta}) with competitive ratio $\alpha$; 
% \State \textbf{Output}: Online decisions $x_i$s;
% \State initialize $\mathcal{A}$
% \While{item $i$ (with unit value $v_i$ and weight $w_i$) arrives}
%     \If{$v_i < \ell$}
%         \State $x_i = 0$;\Comment{item is rejected}
%     \ElsIf{$v_i > u$}
%         \State $x_i = 1/(\alpha+1) \times w_i$; \Comment{item is partially accepted}
%     \ElsIf{$v_i \in [\ell, u]$}
%         \State give item $i$ to algorithm $\mathcal{A}$ ;
%         \State $x_i = \alpha/(\alpha+1) \times x_i^{\mathcal{A}}$ ; \Comment{item is partially accepted}
%     \EndIf
% \EndWhile
% \end{algorithmic}
% \end{algorithm}

%\subsection{\ppa and \ipa Pseudocode}
%\input{pip.tex}

\subsection{Proofs}\label{apx:proofs}

\subsubsection{Proof of \autoref{thm:lb}}
\label{pr:lb1}
\begin{proof}
%\mo{@H or MR: change notation. }
Consider the following two input instances, each with critical value $\vpr = 1$ and total weight $\wpr$ on the critical value:
\begin{subequations}
\begin{align*}
\mathcal{I}_1: & \quad n = 1, \quad (v_1, w_1) = (1, \wpr),  \\
\mathcal{I}_2: & \quad n = 2, \quad (v_1, w_1) = (1, \wpr), \quad (v_2, w_2) = (U, 1-\epsilon).
\end{align*}    
\end{subequations}
where $U \to +\infty$ and $\epsilon \to 0$. The offline optimal solutions for the two instances are $\OPT(\mathcal{I}_1) = \min\{1,\wpr\}$ and $\OPT(\mathcal{I}_2) = (1-\epsilon)U + \epsilon$, respectively.

Since both instances have the same critical value and thus are given the same prediction, and since they have the same first item, any online algorithm will make the same decision for the first item regardless of which instance is presented. 
Let $X \in [0, \min\{1,\wpr\}]$ denote the decision for the first item from a randomized algorithm, where $X$ is a random variable with probability density function $f(x)$. The expected returns of the randomized algorithm over the two instances can be derived as
\begin{subequations}
 \begin{align*}
\mathbb{E}[\ALG(\mathcal{I}_1)] &= \int_{0}^{\min\{1,\wpr\}}f(x)x dx = \mathbb{E}[X],\\
\mathbb{E}[\ALG(\mathcal{I}_2)] & \le \int_{0}^{\min\{1,\wpr\}}f(x) [x + (1-x)U] dx = \mathbb{E}[X] + (1 - \mathbb{E}[X]) U.
\end{align*}   
\end{subequations}
where $\mathbb{E}[X] \in [0, \min\{1,\wpr\}]$.
As $U \to +\infty$ and $\epsilon \to 0$, the competitive ratio of the algorithm is thus
\begin{align*}
    \max\left\{\frac{\OPT(\mathcal{I}_1)}{\mathbb{E}[\ALG(\mathcal{I}_1)]}, \frac{\OPT(\mathcal{I}_2)}{\mathbb{E}[\ALG(\mathcal{I}_2)]}\right\} \ge \max\left\{\frac{\min\{1,\wpr\}}{\mathbb{E}[X]}, \frac{1}{1 - \mathbb{E}[X]}\right\} \ge 1 + \min\{1,\wpr\}.
\end{align*}
where the final bound follows by observing that the competitive ratio is minimized when $\frac{\min\{1,\wpr\}}{\mathbb{E}[X]} = \frac{1}{1 - \mathbb{E}[X]}$, which occurs when $\mathbb{E}[X] = \frac{\min\{1,\wpr\}}{1+\min\{1,\wpr\}}$
Thus, %if $\wpr$ is the maximum weight of the critical item among all instances in $\Omega$,
the competitive ratio of any online randomized algorithm is at least $1 + \min\{1,\wpr\}$.
This completes the proof.
\end{proof}

%-------------------------------------------------------
%-------------------------------------------------------

\subsubsection{Proof of \autoref{thm:lb-ipa}}
\label{pr:lb-ipa}
\begin{proof}
    Based on the matching lower bound from the \ta paper~\cite{ZCL:08}, for any (possibly randomized) algorithm with known lower and upper bounds $\ell$ and $u$ on value densities, there exists an input sequence, denoted by $\mathcal{I}$, such that the expected total value of the algorithm is at most $\frac{Z}{1+\ln(\nicefrac{u}{\ell})}$, where $Z = \OPT(\mathcal{I})$ is the offline optimal value under the instance $\mathcal{I}$. Note that in this instance, the last item is with the maximum value density within $[\ell,u]$ and with weight $1$. 
    % Without loss of generality, we consider the same instance while the weight of the last item is $1-\epsilon$ . We still use $\mathcal{I}$ to denote this instance.  
    % It is worth mentioning that the instance is items with weight $1$ and values between $\ell$ and $u$ in increasing order.

    Let $\mathcal{J}$ denote a new instance by appending one item with value $U$ and weight $1 - \varepsilon$ (where $\varepsilon > 0$ and $\varepsilon \to 0$) to the instance $\mathcal{I}$. Note that the instance $\mathcal{J}$ has the same prediction interval as the instance $\mathcal{I}$. The offline optimal value $\OPT(\mathcal{J}) \ge (1-\varepsilon)U \to U$.
    % Let's now examine the behavior of the optimal offline solution and the algorithm $\mathcal{A}$ on these two input sequences.
    % \begin{align*}
    % \OPT(\mathcal{I}) &= Z, \\
    % \OPT(\mathcal{J}) &= U, 
    % \end{align*}
    For an online algorithm $\mathcal{A}$ given an interval prediction $[\ell,u]$, let $M \in [0, 1]$ denote total amount of admitted items under the instance $\mathcal{I}$ and let $f(m)$ denote the probability distribution of the random variable $M$. The total value obtained by the online algorithm can be calculated as follows.
  \begin{align*}
    \mathbb{E}[\mathcal{A}(\mathcal{I})] 
    &\leq \int_{0}^{1} f(m) m \cdot \left( \frac{Z}{1 + \ln(\nicefrac{u}{\ell})} \right) dm, \\
    &= \mathbb{E}[M] \cdot \frac{Z}{1 + \ln(\nicefrac{u}{\ell})}.
\end{align*}
\begin{align*}
    \mathbb{E}[\mathcal{A}(\mathcal{J})] 
    &\le \int_{0}^{1} f(m) \left[ m \cdot \frac{Z}{1 + \ln(\nicefrac{u}{\ell})} 
        + (1 - m) U \right] dm, \\
    &= \mathbb{E}[M] \cdot \frac{Z}{1 + \ln(\nicefrac{u}{\ell})} 
        + (1 - \mathbb{E}[M]) U.
\end{align*}

    \begin{comment}
      Assume this sequence as an input for the algorithm. It starts with the worst case for ZCL in the interval $[\ell,u]$, which is increasing numbers starting from $\ell$ to $u$. If the algorithm fills $\mathbb{E}[m]$ of the knapsack, it can at most take $Z \times \mathbb{E}[m] \times \frac{1}{1+ \ln (\nicefrac{u}{\ell})}$ (where $Z$ is optimal offline for the input between $[\ell,u]$). Otherwise, knowing that the algorithm selected at most $\mathbb{E}[m]$ of the items, you can take $\frac{1}{\mathbb{E}[m]}$ times the amount in each step, and you would get more than $Z \times \frac{1}{1+ \ln (\nicefrac{u}{\ell})}$, which contradicts the fact that no algorithm is better than $(1+ \ln (\nicefrac{u}{\ell}))$-competitive. Therefore, algorithm $\mathcal{A}$ will be $\frac{1}{\mathbb{E}[m]} \times (1+ \ln (\nicefrac{u}{\ell}))$-competitive.
    Now, consider another input where the first part of the sequence is identical, and the second part is an item with value $U$ and weight 1. Calculating the lower bound of the competitive ratio by comparing both sequences will yield:
    \end{comment}  
    where $\mathbb{E}[M] \in [0, 1]$. As $U \to \infty$, the competitive ratio of the algorithm is lower bounded by 

\[
\max \left\{ \frac{\text{OPT}(\mathcal{I})}{\mathbb{E}[\mathcal{A}(\mathcal{I})]}, \frac{\text{OPT}(\mathcal{J})}{\mathbb{E}[\mathcal{A}(\mathcal{J})]} \right\} 
\geq \max \left\{ 
\frac{Z}{\mathbb{E}[M] \cdot \frac{Z}{1 + \ln(u / \ell)}}, \quad
\frac{1}{1 - \mathbb{E}[M]}
\right\} 
\geq 2 + \ln(u / \ell).
\]

\end{proof}

%------------------------------------------------------
%------------------------------------------------------
%------------------------------------------------------

\subsubsection{Proof of \autoref{thm:const-rob-LB}}
\label{pr:const-rob-LB}
We divide the proof into the proofs of the following two lemmas, provide the detailed proof of the asymptotic lower bound in \sref{Lemma}{lem:asymptoticLB}, and postpone the proof of the lower bound in \sref{Lemma}{lem:gronwallLB} to \autoref{app:gronwallLB}.

\begin{lemma}\label{lem:gronwallLB}
Given an untrusted prediction of the critical value, any deterministic $\gamma$-robust learning-augmented algorithm for \OKP (where $\gamma \in [\ln(U/L)+1, \infty)$ is at least $\eta$-consistent, where $\eta$ is defined in \eqref{eq:lowerbound}.
\end{lemma}

\begin{lemma}\label{lem:asymptoticLB}
In an asymptotic regime for $\OKP$ where $\nicefrac{U}{L} \to \infty$, any deterministic algorithm given an untrusted prediction of the critical value that is $2x$-consistent ($x\in (1,\infty)$) is at least $\Omega \left( \frac{1 + \ln \left( \nicefrac{U}{L} \right)}{1-\frac{1}{x}} \right)$-robust.
\end{lemma}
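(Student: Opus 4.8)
The plan is to reuse the machinery already set up in the proof of \autoref{lem:gronwallLB}: the family of $x$-continuously non-decreasing instances $\{\mathcal{I}_x\}_{x\in[L,U]}$ and the non-decreasing acceptance function $g:[L,U]\to[0,1]$ recording the final utilization of a fixed algorithm. I would fix an arbitrary correct prediction $\vpr$ and assume the algorithm is $2x$-consistent and $\beta$-robust, writing $c=2x$. Since robustness must hold for \emph{every} prediction, in particular against the wrong-prediction instances $\mathcal{I}_u$ with $u\le\vpr$ (where the true critical value is $u\neq\vpr$), the same Gr\"onwall step as in \autoref{lem:gronwallLB} gives $g(u)\ge\nicefrac{(1+\ln(u/L))}{\beta}$ on $[L,\vpr]$, hence the integral lower bound $\int_L^{\vpr} g(u)\,du\ \ge\ \nicefrac{\vpr\ln(\vpr/L)}{\beta}$, using the identity $\int_L^{\vpr}(1+\ln(u/L))\,du=\vpr\ln(\vpr/L)$.

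The crux is to extract a consistency constraint that carries the factor-two hedging penalty of \autoref{thm:lb}; this is precisely what upgrades the denominator from the naive $1-\nicefrac1c$ to $1-\nicefrac2c$. I would achieve this by confronting the algorithm with two instances sharing the common prefix $\mathcal{I}_{\vpr}$ and the \emph{same} correct prediction $\vpr$, so that the algorithm is forced to commit the identical utilization $g(\vpr)$ before the futures diverge. The first instance $(A)$ simply stops after $\mathcal{I}_{\vpr}$, so $\OPT=\vpr$ and $c$-consistency forces the earned value $B(\vpr):=g(L)L+\int_L^{\vpr}u\,dg(u)\ge \nicefrac{\vpr}{c}$; integrating by parts, $B(\vpr)=\vpr g(\vpr)-\int_L^{\vpr}g$, so $(A)$ reads $\int_L^{\vpr}g\le \vpr\,g(\vpr)-\nicefrac{\vpr}{c}$. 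The second instance $(B)$ appends a batch of value $U$ with total weight $\rho\to1^-$ (keeping the critical value at $\vpr$, so the prediction stays correct while $\OPT\to U$); since the algorithm has already committed $g(\vpr)$, it can only add $U(1-g(\vpr))$, and $c$-consistency yields $\int_L^{\vpr}g+(U-\vpr)g(\vpr)\le U(1-\nicefrac1c)$. The essential point is that the \emph{same} $g(\vpr)$ appears in $(A)$ and $(B)$ because the online algorithm cannot distinguish the two continuations until after it commits — this is exactly the irrevocable dilemma that produces the bound $1+\min\{1,\wpr\}$ in \autoref{thm:lb}.

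I would then eliminate $g(\vpr)$ by taking the convex combination $(U-\vpr)\cdot(A)+\vpr\cdot(B)$, whose $g(\vpr)$ terms cancel, giving $U\int_L^{\vpr}g\le \vpr\bigl[U-\nicefrac{(2U-\vpr)}{c}\bigr]$, i.e.
\[
\int_L^{\vpr} g(u)\,du\ \le\ \vpr\Bigl(1-\tfrac{2}{c}+\tfrac{\vpr}{cU}\Bigr).
\]
Combining this consistency upper bound with the robustness lower bound $\int_L^{\vpr}g\ge \nicefrac{\vpr\ln(\vpr/L)}{\beta}$ and cancelling $\vpr$ yields $\beta\ \ge\ \ln(\vpr/L)\big/\bigl(1-\tfrac2c+\tfrac{\vpr}{cU}\bigr)$. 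Finally I would optimize over the free prediction $\vpr$: taking $\vpr=\nicefrac{U}{\ln(U/L)}$ makes $\ln(\vpr/L)=\ln(U/L)-\ln\ln(U/L)$ while $\nicefrac{\vpr}{cU}\to0$, so in the regime $\nicefrac{U}{L}\to\infty$ the bound tends to $\nicefrac{\ln(U/L)}{(1-\nicefrac2c)}$. Substituting $c=2x$ gives $1-\nicefrac2c=1-\nicefrac1x$, and since the additive discrepancy between $\nicefrac{\ln(U/L)}{(1-\nicefrac1x)}$ and $\nicefrac{(1+\ln(U/L))}{(1-\nicefrac1x)}$ is only $O(1)$ (also recoverable from the pointwise robustness bound $g(\vpr)\ge\nicefrac{(1+\ln(\vpr/L))}{\beta}$), this is $\frac{1+\ln(\nicefrac{U}{L})}{1-\nicefrac1x}-o(1)=\Omega\!\left(\frac{1+\ln(\nicefrac{U}{L})}{1-\nicefrac1x}\right)$, as claimed.

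The main obstacle I anticipate is \emph{identifying the correct pair of consistency instances} and justifying that they share the committed quantity $g(\vpr)$: using only the append-$U$ instance (as in \autoref{lem:gronwallLB}) gives the weaker denominator $1-\nicefrac1c$, and the factor-two improvement appears only once the stop-at-$\vpr$ instance is added and the two are combined so as to cancel $g(\vpr)$. A secondary technical point is the asymptotic optimization over $\vpr$: one must choose $\vpr$ growing with $U$ slowly enough that $\nicefrac{\vpr}{cU}\to0$ yet fast enough that $\ln(\vpr/L)\sim\ln(U/L)$, which is what lets the two competing terms in the denominator be driven to $1-\nicefrac2c$ while the numerator retains the full $\ln(\nicefrac{U}{L})$; the $\ln\ln$ loss and the additive $O(1)$ gap are exactly the $o(1)$ slack recorded in the statement.
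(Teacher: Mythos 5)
Your proof is correct, and it takes a genuinely different route from the paper's. The paper proves \sref{Lemma}{lem:asymptoticLB} with an explicit three-batch instance (values $L$, $AL$, $BL$ with $A,B\to\infty$ and prediction $\hat v=L$): consistency on the first two batches forces the algorithm to commit roughly $\nicefrac{1}{x}$ of its capacity (the factor-two hedging appears as the pair ``stop after the $L$-batch'' vs.\ ``continue to the $AL$-batch''), and robustness is then evaluated on the third batch by running \ta on the leftover capacity, followed by a somewhat heavy limit computation in $A$, $B$, and $\nicefrac{U}{L}$. You instead stay entirely inside the continuous $g$-function framework of \sref{Lemma}{lem:gronwallLB}: Gr\"onwall applied to the wrong-prediction instances $\mathcal{I}_u$, $u\le\vpr$, lower-bounds $\int_L^{\vpr}g$ by $\nicefrac{\vpr\ln(\vpr/L)}{\beta}$, while the two correctly-predicted continuations (truncate at $\vpr$ vs.\ append a near-full $U$-batch) give two linear constraints in $\bigl(\int_L^{\vpr}g,\,g(\vpr)\bigr)$ whose combination cancels $g(\vpr)$ and upper-bounds the same integral by $\vpr\bigl(1-\nicefrac{2}{c}+\nicefrac{\vpr}{cU}\bigr)$; I verified the algebra of the combination $(U-\vpr)\cdot(A)+\vpr\cdot(B)$ and the integration by parts, and both are right. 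The underlying intuition — that $2x$-consistency forces $\approx\nicefrac{1}{x}$ committed capacity because of the irrevocability dilemma of \autoref{thm:lb} — is the same, but your version yields a clean non-asymptotic inequality $\beta\ge\ln(\vpr/L)\big/\bigl(1-\nicefrac{2}{c}+\nicefrac{\vpr}{cU}\bigr)$ valid for every finite $\nicefrac{U}{L}$ and unifies the two lower-bound lemmas under one mechanism, at the cost of optimizing over the free prediction $\vpr$. One small caveat: with $\vpr=\nicefrac{U}{\ln(U/L)}$ your slack is an additive $\Theta\!\left(\nicefrac{\ln\ln(U/L)}{1-\nicefrac{1}{x}}\right)$ rather than the additive $o(1)$ the paper writes in \eqref{eq:asymptoticLB}; this is still $o(1)$ multiplicatively relative to the main term, so it fully establishes the $\Omega\!\left(\frac{1+\ln(\nicefrac{U}{L})}{1-\nicefrac{1}{x}}\right)$ statement of the lemma, but you should phrase the slack as a lower-order term rather than an additive $o(1)$.
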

\begin{proof}[Proof of Lemma~\ref{lem:asymptoticLB}]
Suppose an algorithm for $\OKP$ is $\beta$-consistent and $\alpha$-robust, where $\beta = 2x$ (noting that $2$-consistency is a firm lower bound by \sref{Lemma}{lem:gronwallLB}).
Consider the following instance:

\begin{itemize}
    \item The first batch of items to arrive have value $L$.  There are $\nicefrac{1}{\epsilon}$ of these, each with weight $\epsilon$.
    \item The second batch of items has value $AL$, where $A \geq 1$.  There are $\nicefrac{1}{\epsilon} - 1$ many of these items, each with weight $\epsilon$.
    \item The third batch of items has value $BL$, where $B > A \geq 1$.  There are many of these items, and each has weight epsilon.
\end{itemize}

First, we consider \textbf{consistency} on the first two batches of items, letting $\hat{v} = L$. Note that $\OPT \to AL$ as $\epsilon \to 0$, and observe that $\ALG$ must satisfy the following:
\begin{align*}
    \ALG(\text{first two batches}) = \frac{1}{2x} L + \left( \frac{1}{2x} - \frac{1}{2Ax} \right) AL.
\end{align*}

It follows that $\ALG$ must use $\left( \frac{1}{x} - \frac{1}{2Ax} \right)$ of its knapsack capacity during these first two batches of items in order to stay $2x$-consistent.  Also note when $AL \gg L$, $\left( \frac{1}{x} - \frac{1}{2Ax} \right) \to \nicefrac{1}{x}$.

\smallskip

Next, we consider \textbf{robustness} on all three batches of items.  As previously, $\ALG$ uses $\left( \frac{1}{x} - \frac{1}{2Ax} \right)$ of its knapsack capacity during the first two batches of items, leaving $\left( 1 - \frac{1}{x} + \frac{1}{2Ax} \right)$ capacity remaining for robustness.
We will assume that $BL \gg AL$, implying that the optimal strategy is to run an existing optimal \OKP algorithm (i.e., \ta) with the remaining capacity.  Let $\ln(\nicefrac{U}{BL}) + 1$ be the competitive ratio of this ``inner'' \ta algorithm, and note that the algorithm must accept (within the available capacity) a $\frac{1}{1+\ln(\nicefrac{U}{BL})}$ fraction of the items with value $BL$.  This gives us the following for $\ALG$ on all three batches:
\small

\[
\text{ALG}(\text{all three batches}) = \frac{1}{2x} L + \left( \frac{1}{2x} - \frac{1}{2Ax} \right) AL + \left( 1 - \frac{1}{x} + \frac{1}{2Ax} \right) \cdot \left( \frac{BL}{1 + \ln(U / BL)} \right).
\]

\normalsize

% \adaml{ZCL assumes no initial collection of items -- this will give you a modified version of ZCL}

% \adaml{$B < \theta$? }

% \adaml{the sequence here may need some tweaks.  If $\left( 1 - \frac{1}{x} + \frac{1}{2Ax} \right)$ is greater than $\frac{1}{1+\ln(\nicefrac{U}{BL})}$, then we wouldn't have to do this scaling -- I think the item prices in the last batch should be increasing up to $U$.}

Note that $\OPT \to BL$ on the sequence that includes the third batch.
Then we have the following for the robustness ratio $\beta$:
\small
\begin{align*}
    \beta &= \frac{BL}{
        \frac{1}{2x} L 
        + \left( \frac{1}{2x} - \frac{1}{2Ax} \right) AL 
        + \left( 1 - \frac{1}{x} + \frac{1}{2Ax} \right) 
        \cdot \left( \frac{BL}{1+\ln(\nicefrac{U}{BL})} \right)
    }, \\
    &= \frac{1 + \ln(\nicefrac{U}{L})}{1 - \nicefrac{1}{x}}, \\ &\times 
    \Bigg( 
        1 - \frac{
            A(\ln(\nicefrac{U}{L})+1) + 2B - 2Bx 
            + 2x \left( 1 - \frac{1}{x} + \frac{1}{2Ax} \right) 
            \left( \frac{B (\ln(\nicefrac{U}{L})+1)}{1 + \ln(\nicefrac{U}{BL})} \right)
        }{
            A(\ln(\nicefrac{U}{L})+1) 
            + 2x \left( 1 - \frac{1}{x} + \frac{1}{2Ax} \right) 
            \left( \frac{B (\ln(\nicefrac{U}{L})+1)}{1 + \ln(\nicefrac{U}{BL})} \right)
        }
    \Bigg), \\
    &= \frac{1 + \ln(\nicefrac{U}{L})}{1 - \nicefrac{1}{x}}, \\ &- 
    \Bigg( 
        1 - \frac{
            2Bx - 2B
        }{
            A(\ln(\nicefrac{U}{L})+1) 
            + \left(2Bx - 2B + \frac{B}{A}\right) 
            \cdot \left( \frac{\ln(\nicefrac{U}{L})+1}{1 + \ln(\nicefrac{U}{BL})} \right)
        }
    \Bigg).
\end{align*}
\normalsize

\noindent Let $x$ be a constant (i.e., independent of $A$ and $B$), let $F(A,B,U,L,x) = 1 - \frac{2Bx - 2B}{A(\ln(\nicefrac{U}{L})+1) + \left(2Bx - 2B + \frac{B}{A}\right) \left( \frac{\ln(\nicefrac{U}{L})+1}{1 + \ln(\nicefrac{U}{BL})} \right)  }$, and consider the limit as $A \to \infty$, $B \to \infty$, $\nicefrac{B}{A} \to \infty$, and $\nicefrac{U}{L} \to \infty$.  Noting that $\ln(\nicefrac{U}{BL}) \approx \ln(\nicefrac{U}{L})$ as $U \to \infty$, we have that $F(A,B,U,L,x) \to o(1)$ under the above conditions.  Thus, we have the following, completing the lemma:
\small
\[
\beta \to \frac{1 + \ln(U / L)}{1 - 1/x} - o(1) \to \Omega \left( \frac{1 + \ln(U / L)}{1 - 1/x} \right) = \Omega \left( \frac{1 + \ln(U / L)}{1 - \lambda} \right).
\]

\normalsize

where $\lambda = 1/x \in (0,1)$.
\end{proof}
\noindent The statement of \autoref{thm:const-rob-LB} follows by combining the results of \sref{Lemma}{lem:gronwallLB} and \sref{Lemma}{lem:asymptoticLB}. 
% \hfill $\qed$
% We defer the proof of \autoref{thm:const-rob-LB} to \sref{Appendix}{apx:const-rob-LB}.  

%-------------------------------------------------------------------------------------------------------------------------------------------------------------------------------------
\subsubsection{Proof of \autoref{thm:greedy}}
\label{pr:ppa-n1}
\begin{proof}
    Denote the prediction received by the algorithm as $\vpr$, for any valid \OFKP instance $\mathcal{I}$.  

    Consider the following special instance in $\Omega$.
    \begin{subequations}
    \begin{align}
    \mathcal{I}: & \quad n = 2, \quad (v_1, w_1) = (L, 1), \quad (v_2, w_2) = (U, 1-\epsilon).
    \end{align}    
    \end{subequations}
    where $U \to +\infty$ and $\epsilon \to 0$. Note that the offline optimal return of this instance is $\OPT(\mathcal{I}) = U(1- \epsilon) + L (\epsilon)$, and $\vpr = L$.

    Observe that the naïve algorithm will receive the exact value of $\vpr$ and greedily accept any items with unit value at or above $\vpr$.  Then the first item with $(v_1, w_1) = (L, 1)$ will fill the online algorithm's knapsack, and the competitive ratio can be derived as
    \begin{align*}
        \frac{\OPT(\mathcal{I})}{\ALG(\mathcal{I})} = \frac{U(1- \epsilon) + L (\epsilon)}{\vpr} = \frac{U(1- \epsilon) + L (\epsilon)}{L}.
    \end{align*}
    As $\epsilon \to 0$, the right-hand side implies that the competitive ratio is bounded by $\nicefrac{U}{L}$.  Since an accurate prediction has $\vpr \in [L,U]$ by definition, this special instance also gives the worst-case competitive ratio over all instances.
\end{proof}

%-------------------------------------------------------------------------------------------------------------------------------
\subsubsection{Proof of \autoref{thm:ppb}}
\label{pr:ppb}
\begin{proof}
Before starting the proof, we define a new notation for the optimal offline solution. Let's assume that there are $q-1$ items with strictly greater values $v'_i > \vpr$ and items $q$ to $p-1$ are items with unit value $\vpr$ ($q$ can equal $p-1$, implying there are zero such items) . We can rewrite \eqref{eq:op2} as follows:
\begin{equation}
\label{eq:op3}
\OPT(\mathcal{I}) = \sum_{i=1}^{q-1} w'_i v'_i + \sum_{i=q}^{p-1} w'_i \vpr + \left(1-\sum_{i=1}^{p-1}w'_i\right)\vpr.
\end{equation}
using the above notation we define $\wpr$ as: 
\begin{equation}
\label{eq:op4}
\wpr := \sum_{i=q}^{r} w'_i.
\end{equation}

where $r$ is largest number which $v'_i = \vpr$ and is greater or equal to $p$. Recall that $\vpr := v'_p$.

With this notation in place, we can proceed with the proof.

As described in \sref{Algorithm}{alg:two}, each item $i$ with a value less than the prediction $\vpr$ is ignored. If the value is greater than the prediction, half of its weight is selected. If the prediction is equal to the value, we will select half of it and ensure that the sum of all selections with a value equal to the prediction doesn't exceed $1/2$. We first show that Algorithm \ref{alg:two} outputs a feasible solution, i.e., that $\sum_{i=1}^n x_i \le 1$. We derive the following equation:
\begin{equation}
\label{eq:t01}  
\sum_{i=1}^n x_i =  \left( \sum_{v_i<\vpr} x_i \right) + \left( \sum_{v_i=\vpr} x_i \right) + \left(\sum_{\vpr<v_i} x_i\right).
\end{equation}
The first sum is equal to zero, since the algorithm doesn't select any items with $v_i < \vpr$. The second sum considers $v_i = \vpr$, and the algorithm selects half of every weight $w_i$ unless $\wpr$ is greater than $1/2$. The algorithm ensures it doesn't select more than $1/2$ by definition in line 9, which checks whether to take half of an item and not exceed the remaining amount from $1/2$. For $v_i' \ge v_p'$, \sref{Algorithm}{alg:two} sets $x_i$ to $w_i/2$.
\begin{equation}
\label{eq:t02}  
\sum_{i=1}^n x_i = 0 + \min\left(\frac{\wpr}{2},\frac{1}{2}\right) + \left(\sum_{\vpr<v_i} \frac{w_i}{2}\right).
\end{equation}
The last sum is $\nicefrac{1}{2}$ times the of the sum of $w_i$s for any items with a value greater than $\vpr$. If we look at \eqref{eq:op3}, all of these $w_i$ are completely selected in the optimal offline solution (in the first part of the equation). Thus, their sum is less than or equal to 1, since the optimal solution is feasible: $\sum_{\vpr<v_i} w_i = \sum_{i=1}^{q-1} w_i' \le 1$.
\begin{equation}
\label{eq:t03}  
\sum_{i=1}^n x_i \leq \frac{1}{2}  + \frac{1}{2} \cdot \left(\sum_{\vpr<v_i} w_i\right) \leq  \frac{1}{2} + \frac{1}{2} = 1.
\end{equation}
Equation~\ref {eq:t03} proves that the solution from \sref{Algorithm}{alg:two} is feasible.
We next calculate the profit obtained by Algorithm \ref{alg:two} and bound its competitive ratio. The profit can be calculated using \eqref{eq:lp} as:
\begin{equation}
\label{eq:tt1}
\ALG(\mathcal{I})= \min\left(\frac{\wpr}{2},\frac{1}{2}\right)\cdot \vpr+ \left(\sum_{\vpr<v_i} \frac{w_i}{2} \cdot v_i \right).
\end{equation}
Looking at \eqref{eq:op3}, we claim that the second part of \eqref{eq:tt1} is $\frac{1}{2}\cdot \sum_{i=1}^{q-1} w'_i v'_i$.

To calculate the competitive ratio, we give a bound on $\OPT(\mathcal{I})/\ALG(\mathcal{I})$ by substituting \eqref{eq:op3} and \eqref{eq:tt1} into the definition of \CR (i.e. $\OPT/\ALG$), obtaining the following:
\begin{equation}
\label{eq:tt2}
\CR = \max_{\mathcal{I} \in \Omega}
\frac{\sum_{i=1}^{q-1} w'_i v'_i + \sum_{i=q}^{p-1} w'_i \vpr + \left(1-\sum_{i=1}^{p-1}w'_i\right)\vpr}
{\sum_{i=1}^{q-1}\frac{w'_i}{2}v'_i + \min\left(\frac{\wpr}{2},\frac{1}{2}\right)\cdot \vpr}.
\end{equation}
\begin{equation} 
\label{eq:tt3}
\CR = \max_{\mathcal{I} \in \Omega}\left(2 \cdot
\frac{\sum_{i=1}^{q-1} w'_i v'_i + \sum_{i=q}^{p-1} w'_i \vpr + \left(1-\sum_{i=1}^{p-1}w'_i\right)\vpr}
{\sum_{i=1}^{q-1} w'_i v'_i + \min\left(\wpr,1\right)\cdot \vpr}\right).
\end{equation}
Here we prove that the numerator is less than or equal to the denominator, which will give us \eqref{eq:tt4}.
\begin{equation} 
\label{eq:tt4}
\CR = \max_{\mathcal{I} \in \Omega} \left(2 \cdot
\frac{\sum_{i=1}^{q-1} w'_i v'_i + \sum_{i=q}^{p-1} w'_i \vpr + \left(1-\sum_{i=1}^{p-1}w'_i\right) \vpr}
{\sum_{i=1}^{q-1} w'_i v'_i + \min\left(\wpr,1\right)\cdot \vpr}\right) \leq 2.
\end{equation}

We argue two cases: first, if $\wpr < 1$, then Algorithm \ref{alg:two} will always select half of every item with value $\vpr$. Then, by rewriting the value of $\wpr$ we have:

\begin{align}
\label{eq:tt5}
\sum_{i=1}^{q-1} w'_i v'_i + \min\left(\wpr,1\right)\cdot \vpr  &= \sum_{i=1}^{q-1} w'_i v'_i +  \left( \sum_{i=q}^{u} w'_i \right) \vpr  = \sum_{i=1}^{q-1} w'_i v'_i +  \sum_{i=q}^{p-1} w'_i\vpr +w'_p \vpr +\sum_{i=p+1}^{u} w'_i \vpr .
\end{align}

Using the fact that the definition of $x^\star_p$ in \eqref{eq:op1} and LP constraint \eqref{eq:lp}, we claim $1-\sum_{i=1}^{p-1}w'_i =x^*_p \leq w_p$. Thus, \eqref{eq:tt5} is greater than the numerator in \eqref{eq:tt4}.

In the second case, if $\wpr \ge 1$, then the algorithm will stop selecting items with a value of $\vpr$ after it reaches a capacity of $1/2$ for those items. The optimal offline solution is feasible, so $\sum_{i=1}^{p-1} w_i' + 1 - (\sum_{i=1}^{p-1} w_i') \le 1$, which implies that the following is also true: $\sum_{i=q}^{p-1} w_i' + 1 - (\sum_{i=1}^{p-1} w_i') \le 1$. Using this, we can rewrite the numerator:
\begin{equation} 
\label{eq:tt6}
\sum_{i=1}^{q-1} w'_i v'_i + \sum_{i=q}^{p-1} w'_i \vpr + \left(1-\sum_{i=1}^{p-1}w'_i\right)\vpr \le \sum_{i=1}^{q-1} w'_i v'_i + 1\cdot \vpr.
\end{equation}

Which subsequently implies that the numerator is less than or equal to the denominator. 

Both cases have been proven, completing the proof of \eqref{eq:tt4} -- \ppa is $2$-competitive.
\end{proof}

\subsubsection{Proof of \autoref{thm:ppa-a}}
\label{pr:ppa}

Let $\mathcal{I}:=\{(w_i,v_i)\}_{i\in [n]}$ denote an instance for \OFKP, where the value density $v_i \ge \vpr, \forall i\in[n]$. It is without loss of generality to focus on $\mathcal{I}$ since both offline algorithm and \ppa ignore items with value density smaller than $\vpr$. 
% Let $\ALG(\mathcal{I})$ and $\OPT(\mathcal{I})$ denote the total values of online algorithm and offline algorithm under the instance $\mathcal{I}$. 
To distinguish the items with critical value density $\vpr$ and those with value density greater than $\vpr$. Define $\mathcal{N}_i^c := \{j\in [i]: v_j = \vpr\}$ and $\mathcal{N}_i^{o} := \{j\in [i]: v_j > \vpr\}$ as the sets of items (up to the $i$-th item) whose value densities are equal to and greater than $\vpr$, respectively. Then the offline optimal value under instance $\mathcal{I}$ can be shown as
\begin{align}\label{eq:opt-up}
    \OPT(\mathcal{I}) = \sum\nolimits_{i\in \mathcal{N}_n^{o}} v_i w_i + \vpr (1 -\sum\nolimits_{i\in \mathcal{N}_n^{o}} w_i).
\end{align}
where the first part is total value of items with value densities greater than $\vpr$, and the second part is the value of items with density $\vpr$, filling the remaining knapsack capacity.

Let $x_i$ denote the online admission solution of item $i$. We first show that the online solution by \ppa is feasible. 

\begin{lemma}\label{lem:feasibility}
The online solution of \ppa is feasible for \OFKP. 
\end{lemma}
\begin{proof}[Proof of Lemma~\ref{lem:feasibility}]
We use $s_i = \sum_{j\in[i]} x_j$ to denote the cumulative amount of admitted items by \ppa, up to item $i$. The goal is to prove $s_n \le 1$.
% Let $p_1, p_2, \dots, p_m \in [n]$ denote the index of the items with critical value density is $\vpr$. 
We claim that the cumulative amount of admitted items by \ppa is 
\begin{align}\label{eq:induction-feasible}
s_i = \frac{\wpr_i + \wpt_i}{1 + \wpr_i}, \forall i\in[n].    
\end{align}
% We claim that the cumulative amount of admitted items $s_i = \frac{\wpr_i + \wpt_i}{1 + \wpr_i}, \forall i\in[n]$. 
% \begin{proposition}
% The cumulative amount of admitted items by \ppa is $s_i = \frac{\wpr_i + \wpt_i}{1 + \wpr_i}, \forall i\in[n]$.
% \end{proposition}
where $\wpr_i = \min\{\sum_{j \in \mathcal{N}_i^c} w_j, 1\}$ denotes the cumulative amount of item weights with critical value density $\vpr$, upper bounded by $1$, and $\tilde{\omega}_i = \sum_{j \in \mathcal{N}_i^o} w_j$ denotes the cumulative amount of item weights with critical value density greater than $\vpr$. Note that $\tilde{\omega}_n \le 1$ by definition of the critical value. Thus, if \eqref{eq:induction-feasible} holds, the online solution of \ppa is feasible since  $s_n = \frac{\wpr_{n} + \wpt_{n}}{1 +  \wpr_{n}} \le 1$.

In the following, we prove \eqref{eq:induction-feasible} by induction. 

\textit{Base Case: $i=1$.} Initially, $\wpr_0 = 0$ and $s_0 = 0$. If item $1$ is not with critical value, i.e., $1\in \mathcal{N}_n^o$, then by \ppa, we have $\wpr_1 = \wpr_{0} = 0$ and $s_1 = x_1 = w_1 = \wpt_1$, which satisfies \eqref{eq:induction-feasible}. If item $1$ is with critical value, i.e., $1\in \mathcal{N}_n^c$, then $\wpr_1 = w_1$, $\wpt_1 = 0$, and $s_1 = x_1 = \frac{\wpr_1}{1 + \wpr_1}$, which satisfies \eqref{eq:induction-feasible}.

\textit{Induction Step: $i\ge 2$.} Suppose $s_{i-1} = \frac{\wpr_{i-1} + \wpt_{i-1}}{1 + \wpr_{i-1}}$, we next show that $s_{i} = \frac{\wpr_{i} + \wpt_{i}}{1 + \wpr_{i}}$. Consider the following two cases.

\textit{Case \textbf{(i)}:} If item $i\in \mathcal{N}_n^o$, we have $\wpr_i = \wpr_{i-1}$ and $x_i = \frac{w_i}{1 +  \wpr_{i-1}}$ by \ppa. This gives
\begin{align}
    s_i &= s_{i-1} + x_i = \frac{\wpr_{i-1} + \wpt_{i-1}}{1 + \wpr_{i-1}} + \frac{w_i}{1 + \wpr_{i-1}} = \frac{\wpr_{i} + \wpt_{i}}{1 + \wpr_{i}}.
\end{align}

\textit{Case \textbf{(ii)}:} If item $i\in \mathcal{N}_n^c$, then we have $\wpt_i = \wpt_{i-1}$ and $\wpr_i = \min\{\wpr_{i-1} + w_i,1\}$. In addition, $x_i = \frac{\min\{w_i,1-\wpr_{i-1}\}}{1 +  \wpr_{i}} - s_{i-1}\cdot\frac{ \min\{w_i,1-\wpr_{i-1}\}}{1 +  \wpr_{i}}. 
% = \frac{w_i}{1 +  \wpr_{i}} - \frac{\wpr_{i-1} + \wpt_{i-1}}{1 + \wpr_{i-1}}\cdot\frac{ w_i}{1 +  \wpr_{i}} 
$ Then we have 
\begin{align*} 
s_i &= s_{i-1} + x_i \\
    &= \frac{\wpr_{i-1} + \wpt_{i}}{1 + \wpr_{i-1}} + \frac{\min\{w_i, 1-\wpr_{i-1}\}}{1 + \wpr_{i}}  
       - \frac{\wpr_{i-1} + \wpt_{i}}{1 + \wpr_{i-1}} \cdot \frac{\min\{w_i, 1-\wpr_{i-1}\}}{1 + \wpr_{i}} \\
    &= \frac{\min\{w_i, 1-\wpr_{i-1}\}}{1 + \wpr_{i}} + \frac{\wpr_{i-1} + \wpt_{i}}{1 + \wpr_{i-1}} 
       \cdot \frac{1 + \wpr_{i} - \min\{w_i, 1-\wpr_{i-1}\}}{1 + \wpr_{i}} \\
    &= \frac{\min\{w_i, 1-\wpr_{i-1}\}}{1 + \wpr_{i}} + \frac{\wpr_{i-1} + \wpt_{i-1}}{1 + \wpr_{i}} \\
    &= \frac{\wpr_{i} + \wpt_{i}}{1 + \wpr_{i}}.
\end{align*}

where the last equality holds because from \ppa we have 
\begin{align}
  1 +  \wpr_{i} - \min\{w_i,1-\wpr_{i-1}\} = 1 + \wpr_{i-1}.  
\end{align}
This completes the proof.
% Since $\wpt_{n} \le 1$ by the definition of the critical value, the total amount of admitted items is 
% \begin{align}
%     s_n = \frac{\wpr_{n} + \wpt_{n}}{1 +  \wpr_{n}} \le 1,
% \end{align}
% and thus the solution of \ppa is feasible.    
\end{proof}

Next, we prove that \ppa achieves at least $1/(1+\wpr_n)$ of the offline optimal value $\OPT(\mathcal{I})$, where $\wpr_n = \min\{\wpr, 1\}$ by definition.

\begin{lemma}\label{lem:ppa-lb}
The total value of admitted items by \ppa is lower bounded
\begin{align}\label{eq:ppa-lb}
\ALG(\mathcal{I}) \ge \frac{\wpr_n \vpr}{1+\wpr_n} + \frac{\sum_{i\in\mathcal{N}_n^{o}} w_i v_i}{1+\wpr_n}.
\end{align}
\end{lemma}

Based on \eqref{eq:opt-up} and \eqref{eq:ppa-lb}, we can show \ppa is $(1+\wpr_n)$-competitive, i.e.,
\begin{align}
   \frac{\OPT(\mathcal{I})}{\ALG(\mathcal{I})} 
   &\le (1+\wpr_n) \cdot 
   \frac{\sum_{i\in \mathcal{N}_n^{o}} v_i w_i + \vpr \left(1 - \sum_{i\in \mathcal{N}_n^{o}} w_i\right)}
   {\sum_{i\in\mathcal{N}_n^{o}} w_i v_i + \wpr_n \vpr} \nonumber, \\
   &\le 1 + \wpr_n.
\end{align}
\normalsize

where the last inequality holds since $\wpr_n \ge 1 - \sum_{i\in\mathcal{N}_n^{o}} w_i$ by definition of the critical value density.
We complete the proof of \autoref{thm:ppa-a} by proving Lemma~\ref{lem:ppa-lb}.

\begin{proof}[Proof of Lemma~\ref{lem:ppa-lb}]
Let $\ALG_i(\mathcal{I})$ denote the total value of \ppa after processing the $i$-th item. We show  the lower bound of the online algorithm by showing the following inequality holds.
\begin{align}\label{eq:induction}
    \ALG_i(\mathcal{I}) \ge \frac{\wpr_i \vpr}{1+\wpr_i} + \frac{\sum_{j\in\mathcal{N}_i^{o}} w_j v_j}{1+\wpr_i}, \forall i\in[n].
\end{align}

\textit{Base Case: $i = 1$.} If item $1\in \mathcal{N}_{n}^{o}$, then $\wpr_1 = \wpr_0 = 0$ and $x_1 = w_1$, and thus $\ALG_1(\mathcal{I}) = v_1 x_1$, which satisfies \eqref{eq:induction}. If item $1\in \mathcal{N}_{n}^{c}$, then $\wpr_1 = w_1$ and $x_1 = \frac{\wpr_1}{1 + \wpr_1}$. Then we have $\ALG_1(\mathcal{I}) = \vpr x_1 = \frac{\vpr \wpr_1}{1 + \wpr_1}$, which satisfies \eqref{eq:induction}.
% \begin{align}
%  \ALG_1(\mathcal{I}) = v_1 x_1 = \frac{v_1 \wpr_1}{1 + \wpr_1}  
% \end{align}

\textit{Induction Step: $i \ge 2$.} Suppose \eqref{eq:induction} holds for $i-1$, we aim to show the inequality for $i$. 

\textit{Case \textbf{(i)}:} If item $i\in \mathcal{N}_{i}^{o}$, we have  $\wpr_i = \wpr_{i-1}$ and $x_i = \frac{w_i}{1 +  \wpr_{i}}$,
\begin{align*}
 \ALG_i(\mathcal{I}) 
 & = \ALG_{i-1}(\mathcal{I}) + v_i x_i, \\
 & \ge \frac{\wpr_{i} \vpr}{1+\wpr_{i}} 
     + \frac{\sum_{j\in\mathcal{N}_{i-1}^{o}} w_j v_j}{1+\wpr_{i}} 
     + \frac{w_i v_i}{1+\wpr_{i}}, \\
 & = \frac{\wpr_i \vpr}{1+\wpr_i} 
     + \frac{\sum_{j\in\mathcal{N}_i^{o}} w_j v_j}{1+\wpr_i}.
\end{align*}

\textit{Case \textbf{(ii)}:} If item $i\in \mathcal{N}_{i}^{c}$, we have $\wpr_i = \min\{\wpr_{i-1} + w_i,1\}$, then 
\small
\begin{align*}
 \ALG_i(\mathcal{I}) 
 &= \ALG_{i-1}(\mathcal{I}) + \vpr x_i, \\
 &\geq \frac{\wpr_{i-1} \vpr}{1+\wpr_{i-1}} + \frac{\sum_{j\in\mathcal{N}_{i-1}^{o}} w_j v_j}{1+\wpr_{i-1}} 
     + \frac{\vpr\min\{w_i,1-\wpr_{i-1}\}}{1+\wpr_{i}} - s_{i-1} \cdot \frac{\vpr\min\{w_i,1-\wpr_{i-1}\}}{1+\wpr_{i}}, \\
 &= \frac{\wpr_{i-1} \vpr}{1+\wpr_{i-1}} \cdot \frac{1 + \wpr_{i} - \min\{w_i,1-\wpr_{i-1}\}}{1+\wpr_{i}} 
     + \frac{\sum_{j\in\mathcal{N}_{i-1}^{o}} w_j v_j}{1+\wpr_{i-1}} 
     - \frac{\vpr\wpt_{i-1}}{1+\wpr_{i-1}} \cdot \frac{\min\{w_i,1-\wpr_{i-1}\}}{1+\wpr_{i}} 
     + \vpr \cdot \frac{\min\{w_i,1-\wpr_{i-1}\}}{1+\wpr_{i}}, \\
 &\geq \frac{\wpr_{i-1} \vpr}{1+\wpr_{i}} + \frac{\sum_{j\in\mathcal{N}_{i-1}^{o}} w_j v_j}{1+\wpr_{i}} 
     + \frac{\vpr\min\{w_i,1-\wpr_{i-1}\}}{1+\wpr_{i}}, \\
 &= \frac{\wpr_i \vpr}{1+\wpr_i} + \frac{\sum_{j\in\mathcal{N}_i^{o}} w_j v_j}{1+\wpr_i}.
\end{align*}

\normalsize
where the first inequality is obtained by using the induction hypothesis and substituting $x_i = \frac{\min\{w_i,1-\wpr_{i-1}\}}{1 +  \wpr_{i}} - s_{i-1} \frac{\min\{w_i,1-\wpr_{i-1}\}}{1 +  \wpr_{i}}$, the second equality is obtained by substituting $s_{i-1} = \frac{\wpr_{i-1} + \wpt_{i-1}}{1 +  \wpr_{i-1}}$ from \eqref{eq:induction-feasible}, and the last inequality holds because
% \begin{align}
%   1 - \frac{\min\{w_i,1-\wpr_{i-1}\}}{1 +  \wpr_{i}} = \frac{1 +  \wpr_{i} - \min\{w_i,1-\wpr_{i-1}\}}{1 +  \wpr_{i}} = \frac{1 +  \wpr_{i-1}}{1 +  \wpr_{i}},  
% \end{align}
$\sum_{j\in\mathcal{N}_{i-1}^{o}} w_j v_j \ge \vpr\wpr_{i-1}$ and $1 +  \wpr_{i} - \min\{w_i,1-\wpr_{i-1}\} = 1 + \wpr_{i-1}$. 
\end{proof}

%---------------------------------------------------------------------------------------------------------------------------------------------------------------------------------------------------
\subsubsection{Proof of \autoref{thm:ipa}}
\label{pr:ipa1}
\begin{proof}
First, we analyze the feasibility of the solution -- we show that $\sum_{i=1}^n x_i \le 1$. %It is worth mentioning that assuming \sref{Algorithm}{alg:ta} as a subalgorithm will give us $\alpha = 1 + \ln (u/\ell)$.
\begin{equation}
\label{eq:i01}  
\sum_{i=1}^n x_i =  \left( \sum_{v_i<\ell} x_i \right) + \left(\sum_{v_i \in [\ell,u]} x_i\right)  + \left(\sum_{v_i>u} x_i\right).
\end{equation}

By substituting sub-algorithm selections, we have the following:
\begin{equation}
\label{eq:i02}  
\sum_{i=1}^n x_i =  0 + \left(\sum_{v_i \in [\ell,u]} \frac{\alpha}{\alpha+1}\cdot x_i^A\right)  + \left(\sum_{v_i>u}  \frac{1}{\alpha+1}\cdot w_i\right).
\end{equation}

Using the fact that the sub-algorithm is also a feasible algorithm, we can say that:
\begin{equation}
\label{eq:i03}   
\left(\sum_{v_i \in [\ell,u]} \frac{\alpha}{\alpha+1}\cdot x_i^A\right) = 
\frac{\alpha}{\alpha+1}\cdot \left(\sum_{v_i \in [\ell,u]}  x_i^A\right) \le 
\frac{\alpha}{\alpha+1} \cdot 1.
\end{equation}

Also, from \eqref{eq:op1}, we know that the optimal solution will select every $w_i$ with $v_i > \vpr$, since $\vpr \in [\ell, u] $. We can say that $w_i$ for which $v_i > u$ implies $v_i > \vpr$. Moreover, \eqref{eq:op2} is a feasible result, we know that $\sum_{\vpr<v_i} w_i \le 1$. So, we claim that:
\begin{equation}
\label{eq:i04}   
\left(\sum_{v_i>u}  \frac{1}{\alpha+1}\cdot w_i\right) = 
\frac{1}{\alpha+1}\cdot \left(\sum_{v_i>u} w_i\right) \le 
\frac{1}{\alpha+1}\cdot \left(\sum_{\vpr<v_i} w_i\right) \le \frac{1}{\alpha+1}\cdot 1.
\end{equation}

Substituting \eqref{eq:i04} and \eqref{eq:i03} into \eqref{eq:i02}, we obtain:
\begin{equation}
\label{eq:i05}  
\sum_{i=1}^n x_i =  0 + \left(\sum_{v_i \in [\ell,u]} \frac{\alpha}{\alpha+1}\cdot x_i^A\right)  + \left(\sum_{v_i>u}  \frac{1}{\alpha+1}\cdot w_i\right) \le \frac{\alpha}{\alpha+1}+\frac{1}{\alpha+1} = 1.
\end{equation}

which completes the proof that the solutions are feasible.

We proceed to prove that the algorithm achieves a competitive ratio of $\alpha  + 1 $ (given sub-algorithm $\ta$, which has a competitive ratio $\alpha$). The profit can be calculated based on $x_i$ decisions, using \eqref{eq:i02}:
\begin{equation}
\label{eq:i11}
\ALG(\mathcal{I})= \left(\sum_{v_i \in [\ell,u]} \frac{\alpha}{\alpha+1}\cdot x_i^A \cdot v_i\right)  + \left(\sum_{v_i>u}  \frac{1}{\alpha+1}\cdot w_i \cdot v_i\right).
\end{equation}

For the first sum, we know that Algorithm $\ta$ guarantees $\alpha$-competitiveness, which we show as follows:
\begin{equation}
\label{eq:i12}
\left(\sum_{v_i \in [\ell,u]} x_i^A \cdot v_i\right) \times \alpha \ge  \OPT(\mathcal{I}_0).
\end{equation}

where $\mathcal{I}_0$ is all items in $\mathcal{I}$ such that $v_i \in [\ell, u]$. Also, we claim that:
\begin{equation}
\label{eq:i13}
\OPT(\mathcal{I}_0)= \sum_{u>v'_i>\ell, i<p'}  w'_i \cdot v'_i + \left( 1-\sum_{u>v'_i>\ell, i<p'} w'_i \right) \cdot v_p^{I_0} .
\end{equation}

Where $v_p^{I_0}$ represents the critical value for $\mathcal{I}_0$. We argue that $v_p^{I_0} \le \vpr$. If we denote $\vpr$ as the $p$th item in the sorted list, as defined in \eqref{eq:op2}, and $v_p^{I_0}$ as the $p'$th item in the sorted list of the instance $\mathcal{I}$, we assert that $p \le p'$. The rationale behind this assertion is rooted in the definitions. Specifically, $p$ is defined as the largest number for which $\sum_{i=1}^{p-1} w'_i < 1$. Now, let us assume $k$ is the first item with a value less than or equal to $u$. By definition, $p'$ represents the largest number for which $\sum_{i=k}^{p'-1} w'_i < 1$. If we were to assume that $p > p'$, this would contradict the definition of $p'$ as the largest number, because changing $p'$ to $p$ would yield a sum less than one, but we increased from the $p'$ to another larger number. Consequently, it is not valid to claim that $p > p'$; instead, we conclude that $p \le p'$. This implies $v_p^{I_0} \le \vpr$.

Using this observation, we can now compare $\OPT(\mathcal{I})$ and $\OPT(\mathcal{I}_0)$:
\begin{equation}
\label{eq:i14}
\begin{aligned}
\OPT(\mathcal{I}) &= \sum_{i=1}^{p-1} w'_i v'_i + \left(1-\sum_{i=1}^{p-1}w'_i\right)\vpr, \\
&= \sum_{v_i>u}  w'_i v'_i + \sum_{i=k}^{p-1} w'_i v'_i +\left(1-\sum_{i=1}^{p-1}w'_i\right)\vpr, \\
&\le \sum_{v_i>u}  w'_i v'_i  + \sum_{i=k}^{p-1} w'_i v'_i + \sum_{i=p}^{p'-1} w'_i v'_i  + \left(1-\sum_{i=k}^{p'-1}w'_i\right)v_p^{I_0}, \\
&= \sum_{v_i>u}  w_i v_i + \OPT(\mathcal{I}_0).
\end{aligned}
\end{equation}
where \eqref{eq:i14} holds because if $p = p'$, then we have $\vpr = v_{p}^{I_0}$, implying that the last sum of $\OPT(\mathcal{I})$ (which is $\left(1-\sum_{i=1}^{p-1}w'_i\right)\vpr$), is equal to $(\sum_{i=k}^{p'-1}w'_i)v_p^{I_0}$.

On the other hand, if $p < p'$, the last part of $\OPT(\mathcal{I})$ can be bounded by $w'_p v'_p$, which is subsumed within $\sum_{i=p}^{p'-1} w'_i v'_i$. This follows from the fact that $1 - \sum_{i=1}^{p-1}w'_i < w'_p$ due to the definition of a feasible answer.

Using \eqref{eq:i12} and \eqref{eq:i14}, we claim that:
\begin{equation}
\label{eq:i15}
\left(\sum_{v_i \in [\ell,u]} x_i^A \cdot v_i\right)  \ge  \frac{1}{\alpha}\cdot
(\OPT(\mathcal{I})-\sum_{v_i>u}  w_i v_i).
\end{equation}

Now, let us combine this with other parts in \eqref{eq:i11}:
\begin{equation}
\label{eq:i22}
\begin{aligned}
\ALG(\mathcal{I}) &= \left(\sum_{v_i \in [\ell,u]} \frac{\alpha}{\alpha+1}\cdot x_i^A \cdot v_i\right) + \left(\sum_{v_i>u}  \frac{1}{\alpha+1}\cdot w_i \cdot v_i\right), \\
&\ge \left(\frac{\alpha}{\alpha+1}\cdot \frac{1}{\alpha} \left( \OPT(\mathcal{I})-\sum_{v_i>u}  w_i v_i \right) \right)  + \left( \frac{1}{\alpha+1}\cdot \sum_{v_i>u} w_i \cdot v_i\right), \\
&=\frac{1}{\alpha+1}\OPT(\mathcal{I}) .
\end{aligned}
\end{equation}

Thus, we conclude that \ipa using \ta as the robust sub-algorithm is $\alpha+1$ competitive.
\end{proof}

\subsubsection{Proof of \sref{Lemma}{lem:gronwallLB}}
\label{app:gronwallLB}
% \begin{lemma}\label{lem:gronwallLB}
% Given an untrusted prediction of the critical value, any $\gamma$-robust learning-augmented algorithm for \OKP (where $\gamma \in [\ln(U/L)+1, \infty)$ is at least $\eta$-consistent, where $\eta$ is defined in \eqref{eq:lowerbound}.
% \end{lemma}
% \begin{proof}
To show this lower bound, we first construct a family of special instances and then show that the achievable consistency-robustness when given a prediction of the critical value is lower bounded under the constructed special instances.  Assume that item weights are infinitesimally small.
It is known that difficult instances for $\OKP$ occur when items arrive at the algorithm in a non-decreasing order of value density \cite{ZCL:08, SunZeynali:21}. We now formalize such a family of instances $\{ \mathcal{I}_x \}_{x \in [L, U]}$, where $\mathcal{I}_x$ is called an $x$-continuously non-decreasing instance.

\begin{definition} Let $N, m \in \mathbb{N}$ be sufficiently large, and $\delta := (U-L)/N$. For $x \in [L, U]$, an instance $\mathcal{I}_x \in \Omega$ is {$x$-continuously non-decreasing} if it consists of $N_x := \lceil (x - L)/\delta \rceil + 1$ batches of items and the $i$-th batch ($i\in [N_x]$) contains $m$ items with value density $L+(i-1)\delta$ and weight $\nicefrac{1}{m}$. 
\end{definition} \label{dfn:xinstance}

Note that $\mathcal{I}_{L}$ is simply a stream of $m$ items, each with weight $\nicefrac{1}{m}$ and value density $L$. See Fig. \ref{fig:x-instance} for an illustration of an $x$-continuously non-decreasing instance.

\newcommand{\cItem}[1]{  
	{ \vphantom{i}^{v = \nicefrac{#1}{m}}_{w = \nicefrac{1}{m}} }{}}

\begin{figure}[h]
    \vspace{-1em}
    \begin{align*}
        \underbrace{\cItem{L}, \dots, \cItem{L}}_{\text{Batch 0 with } m \text{ items}}, \ \ \  \underbrace{\cItem{(L + \delta)}, \dots, \cItem{(L + \delta)}}_{\text{Batch 1 with } m \text{ items}}, \ \ \ \dots,  \ \ \  \underbrace{\cItem{x}, \dots, \cItem{x}}_{\text{Batch $N_x$ with } m \text{ items}}
    \end{align*}
    \vspace{-1em}
    \caption{$\mathcal{I}_x$ consists of $N_x$ batches of items, arriving in increasing order of value density. }\label{fig:x-instance}
    \vspace{-1em}
\end{figure}

We will operate with two types of $x$-increasing instances as follows.  Let $\mathcal{I}_{x, x} = \mathcal{I}_x$ as defined in Def.~\ref{dfn:xinstance}.
Furthermore, let $\mathcal{I}_{x, U}$ denote a sequence defined as $\mathcal{I}_{x, U} = \mathcal{I}_x ; \left\{ U \times \left( \nicefrac{1}{\epsilon} - 1 \right) \right\}$.
In other words, we append $\left( \nicefrac{1}{\epsilon} - 1\right)$ items with value density $U$ to the end of the sequence $\mathcal{I}_x$. (this is the worst-case for consistency, observing that $\hat{v} = x$)

Suppose a learning-augmented algorithm $\ALG$ is $\gamma$-robust and $\eta$-consistent.  
Let $g(x) : [L,U] \rightarrow [0, 1]$ denote an arbitrary \textit{acceptance function}, which fully parameterizes the decisions made by $\ALG$ under the special instances. $g(x)$ gives the final knapsack utilization (total weight of accepted items) under the instance $\mathcal{I}_x$.
Note that for small $\delta$, processing $\mathcal{I}_{x+\delta}$ is equivalent to first processing $\mathcal{I}_x$, and then processing $m$ identical items, each with weight $\frac{1}{m}$ and value density $x+\delta$. Since this function $g(\cdot)$ is unidirectional (item acceptances are irrevocable) and deterministic, we must have $g(x+\delta) \geq g(x)$, i.e. $g(x)$ is non-decreasing in $[L, U]$. Once a batch of items with maximum value density $U$ arrives, the rest of the capacity should be used, i.e., $g(U) = 1$.  
$\ALG$ is $\gamma$-robust.  Observe that in order to be $\gamma$-competitive under the instance $\mathcal{I}_L$, we must have that $g(L) \geq \frac{1}{\gamma}$.

Furthermore, under the instance $\mathcal{I}_x$, the online algorithm with acceptance function $g$ obtains a value of $\ALG(\mathcal{I}_x) = g(L) L + \int_{L}^x u dg(u)$, where $u dg(u)$ is the value obtained by accepting items with value density $u$ and weight $dg(u)$.
Under the instance $\mathcal{I}_x$, the offline optimal solution obtains a total value of $\OPT(\mathcal{I}_x) = x$. 
Therefore, any $\gamma$-robust online algorithm must satisfy:
\[
\ALG(\mathcal{I}_x) = g(L) L + \int_{L}^x u dg(u) \geq \frac{x}{\gamma}, \quad \forall x \in [L, U].
\]
By integral by parts and Gr\"{o}nwall's Inequality (Theorem 1, p. 356, in \cite{Mitrinovic:91}), a necessary condition for the competitive constraint above to hold is the following:
\begin{equation}
g(x) \geq \frac{1}{\gamma} + \frac{1}{x} \int_{L}^x g(u)du \geq \frac{1}{\gamma} \left[ 1 + \ln \left( \frac{x}{L} \right) \right].\label{eq:gron}
\end{equation}
Assume that a learning-augmented algorithm $\ALG$ receives a prediction $\hat{v}$.  If the prediction is correct, we know that the items with value densities strictly greater than $\hat{v}$ have total weight less than 1.  If the actual best value density is $U$, $\ALG$ must satisfy $\ALG(\mathcal{I}_{\hat{v}, U}) \geq \OPT(\mathcal{I}_{\hat{v}, U}) / \eta$.
% {\color{blue} It seems we only consider $\mathcal{I}_{U, U}$ here.}
Note that $g(U) = 1$ by the structure of the problem.  This gives
\begin{align}\label{eq:cr1}
    g(U) U - \int_{L}^{U} g(u)du  &\geq \frac{U}{\eta}.
        % - \int_{L}^{U} g(u)du  &\geq \frac{U}{\eta} - U\\
    % \int_{L}^{U} g(u)du  &\leq U - \frac{U}{\eta}.
\end{align}
Furthermore, for $\ALG$ to be $\eta$-consistent on an instance where $\hat{v} = x$, recall that $\mathcal{I}_{x,x}$ denotes an instance where prices continually increase up to $x$, and $g(x)$ denotes the fraction of knapsack capacity filled with items of value density $\leq x$.  By the definition of consistency, it follows that $g(x)$ must satisfy $g(x) \geq \frac{1}{\eta}$.
Combining the above condition with the robustness condition on $g(x)$, an $\eta$-consistent and $\gamma$-robust algorithm must have $g(x) \ge \max\{\frac{1}{\gamma} [ 1 + \ln \left( \frac{x}{L} \right)], 1/\eta \}$.  Thus, we have the following:
\begin{equation}
    \max \left\{ \int_L^U \frac{1}{\eta} dx, \ \ \frac{1}{\gamma} \int_L^{U} \left[ 1 + \ln \left( \frac{x}{L} \right) \right] \right\} dx \leq \int_{L}^{U} g(u)du  \leq U - \frac{U}{\eta}.
\end{equation}
where the last inequality is based on \eqref{eq:cr1}. 
% This further gives the lower bound of the consistency $\eta$ in \eqref{eq:lowerbound}. 

Then the optimal $\eta$ is obtained when the inequality is binding, which gives:

\begin{minipage}{0.5\linewidth}
\begin{align*}
    \frac{1}{\gamma} \int_L^{U} \left[ 1 + \ln \left( \frac{x}{L} \right) \right] dx = U - \frac{U}{\eta},\\
    % \frac{U}{\gamma} \ln \left( \frac{U}{L} \right) = U - \frac{U}{\eta},\\
    % \frac{1}{\gamma} \ln \left( \frac{U}{L} \right) = 1 - \frac{1}{\eta},\\
    1 - \frac{1}{\gamma} \ln \left( \frac{U}{L} \right) = \frac{1}{\eta},\\
    \frac{1}{1 - \frac{1}{\gamma} \ln \left( \frac{U}{L} \right)} \leq \eta. \\
\end{align*}
\end{minipage}
\begin{minipage}{0.5\linewidth}
\begin{align*}
    \int_L^{U} \frac{1}{\eta} dx = U - \frac{U}{\eta}, \\
    % \left[ \frac{x}{\eta} \right]_L^U = U - \frac{U}{\eta}, \\
    \frac{U-L}{\eta} = U - \frac{U}{\eta}, \\
    % U - L = \eta U - U, \\
    2 - \frac{L}{U} \leq \eta.
\end{align*}
\end{minipage}
% \begin{equation}
%     \max \left\{ 2 - \frac{L}{U}, \ \ \frac{1}{1 - \frac{1}{\gamma} \ln \left( \frac{U}{L} \right)} \right\} \leq \eta
% \end{equation}

Combining the above two cases completes the lemma.
% \end{proof}

\subsubsection{Proof of \sref{Corollary}{cor:ipa}}
% \label{pr:ipa1}

% \noindent \textit{Proof.}
Recall that \ta is $(\ln(\nicefrac{U}{L}) + 1)$-competitive.  Letting $U = u$ and $L = \ell$, we have that $(\alpha + 1) = 2 + \ln (\nicefrac{u}{\ell})$.
As $\ell$ and $u$ approach each other, $\ln (\nicefrac{u}{\ell})$ approaches 0 -- in the case of $\ell = u$, this recovers the $2$-competitive result of \ppb (\sref{Algorithm}{alg:two}).  
% \qed %\md{Cameron suggests making this 2 line also preserving the last sentence/ I don't agree that makes this section too small and it's good to see the 2+ln(u/l)} \qed

%-----------------------------------------------------------------------------------------------------------------------------`

% \subsubsection{Proof of \autoref{thm:const-rob-LB}}\label{apx:const-rob-LB}

    % &= \frac{1 + \ln(\nicefrac{U}{L})}{1 - \nicefrac{1}{x}} \cdot \frac{2BL (-1 + x)}{\left(1 + \ln (\nicefrac{U}{BL}) \right) \left( AL + 2 \left( x - 1 + \frac{1}{2A} \right) \left[ \frac{BL}{1 + \ln ( \nicefrac{U}{BL} ) } \right] \right)},\\
    % &= \frac{1 + \ln(\nicefrac{U}{L})}{1 - \nicefrac{1}{x}} \cdot \frac{2BL (x - 1)}{ AL \left(1 + \ln (\nicefrac{U}{BL}) \right) + 2BL \left( x - 1 + \frac{1}{2A} \right)},\\

\subsubsection{Proof of \hyperref[alg:conv]{Theorem 5.2}}
\label{apx:conv}
We first prove that given the algorithm \conv is feasible, it is $\gamma \cdot \nicefrac{1+\delta}{1-\epsilon (\ceil{\log_{(1+\delta)}{\nicefrac{U}{L}}}+1)}$ competitive, and then prove its feasibility. 
Let $A_i[j]$ and $R_i[j]$ denote the cumulative total value of items in range $j$ (where $j = 0, \dots, \ceil{\log_{(1+\delta)}{\nicefrac{U}{L}}}$) after processing item $i$.
We claim the following inequality holds for all $i = 0,\dots, n$ and $j = 0, \dots, \ceil{\log_{(1+\delta)}{\nicefrac{U}{L}}}$:
\begin{align}\label{eq:induction-f2i}
    A_i[j] \geq R_i[j] \cdot \frac{1-\epsilon (\ceil{\log_{(1+\delta)}{\nicefrac{U}{L}}}+1)}{1+\delta}. 
\end{align}
% \begin{lemma}
%     After step, $A[j] \geq R[j] \cdot \nicefrac{1-\epsilon (\ceil{\log_{(1+\delta)}{\nicefrac{U}{L}}}+1)}{1+\delta}$.
% \end{lemma}

% \begin{proof}
    We prove this inequality by induction. Initially, $A_0[j] = R_0[j] = 0$, and thus the \eqref{eq:induction-f2i} holds. Suppose \eqref{eq:induction-f2i} holds for $i-1$, we show that it also holds for $i$ in the following two cases. 

    \textit{Case (i):} if $A_{i-1}[j] \ge R_i[j] \cdot \frac{1-\epsilon (\ceil{\log_{(1+\delta)}{\nicefrac{U}{L}}}+1)}{1+\delta}$, where $R_i[j] = R_{i-1}[j] + v_i w_i \tilde{x}_i$, then we have 
    % $A_{i-1}[j] \ge R_{i}[j] \cdot \nicefrac{1-\epsilon (\ceil{\log_{(1+\delta)}{\nicefrac{U}{L}}}+1)}{1+\delta}$ since the fractional algorithm cannot take value more than $v_i w_i$. Thus, we have
    \begin{align*}
        A_i[j] = A_{i-1}[j] \ge R_{i}[j] \cdot \frac{1-\epsilon (\ceil{\log_{(1+\delta)}{\nicefrac{U}{L}}}+1)}{1+\delta}.
    \end{align*}

    \textit{Case (ii):} if $A_{i-1}[j] < R_i[j] \cdot \frac{1-\epsilon (\ceil{\log_{(1+\delta)}{\nicefrac{U}{L}}}+1)}{1+\delta}$, then
    \begin{align*}
        A_i[j] = A_{i-1}[j] + w_i v_i &\ge A_{i-1}[j] + w_i v_i \tilde{x}_i, \\
        & \ge  R_{i-1}[j] \cdot \frac{1-\epsilon (\ceil{\log_{(1+\delta)}{\nicefrac{U}{L}}}+1)}{1+\delta} + w_i v_i \tilde{x}_i,\\
        &\ge R_{i}[j] \cdot \frac{1-\epsilon (\ceil{\log_{(1+\delta)}{\nicefrac{U}{L}}}+1)}{1+\delta}.
    \end{align*}
    where the first inequality is using the induction hypothesis and the second inequality is because the factor $\frac{1-\epsilon (\ceil{\log_{(1+\delta)}{\nicefrac{U}{L}}}+1)}{1+\delta} \le 1$.
    
    %  At each step, the simulated \ALG causes an increase in $R[j]$, and if $A[j] \geq R[j] \cdot \nicefrac{1-\epsilon (\ceil{\log_{(1+\delta)}{\nicefrac{U}{L}}}+1)}{1+\delta}$, the lemma is correct.
    % % \adaml{isn't this statement the opposite of the lemma?}\md{typo should be geq} \adaml{got it}
    % Otherwise, supposing \ma accepts an item, we know that $R[j]$ before the just-accepted item was $R'[j]$, which by the induction hypothesis satisfies $A'[j] < R'[j] \cdot \nicefrac{1-\epsilon (\ceil{\log_{(1+\delta)}{\nicefrac{U}{L}}}+1)}{1+\delta}$. 

    % If an item was accepted, \ma adds at most $w_i \cdot v_i$ to $R'[j]$, and we are adding $w_i \cdot v_i$ to $A'[j]$. Using the fact that $\nicefrac{1-\epsilon (\ceil{\log_{(1+\delta)}{\nicefrac{U}{L}}}+1)}{1+\delta}$ is less, we can say that:
    % \[
    % A[j] = A'[j] + w_i \cdot v_i \geq R'[j] \cdot \nicefrac{1-\epsilon (\ceil{\log_{(1+\delta)}{\nicefrac{U}{L}}}+1)}{1+\delta} + w_i \cdot v_i \geq R[j] \cdot \nicefrac{1-\epsilon (\ceil{\log_{(1+\delta)}{\nicefrac{U}{L}}}+1)}{1+\delta}
    % \]
    % which completes the induction proof for this case.
% \end{proof}

With~\eqref{eq:induction-f2i}, we further have
\begin{align}
 \sum_j A_n[j] \geq \sum_j R_n[j] \cdot \frac{1-\epsilon (\ceil{\log_{(1+\delta)}{\nicefrac{U}{L}}}+1)}{1+\delta} \ge \frac{\OPT}{\gamma} \cdot \frac{1-\epsilon (\ceil{\log_{(1+\delta)}{\nicefrac{U}{L}}}+1)}{1+\delta}.   
\end{align}
where the last inequality holds since the fractional algorithm is $\gamma$-competitive and the offline optimum of the fractional problem is no smaller than that of the integral problem.

In the following, we prove that the online solution of \conv is feasible. Define $w(A_n[j])$ and $w(R_n[j])$ as the weight of all items in range $j$ of \conv and \ALG after processing the last item $n$, respectively. Based on the value partitioning in definition~\ref{dfn:valuepartitioning}, the value density of items in range $j$ is lower bounded by $L \cdot (1+\delta)^j$ and upper bounded by $L \cdot (1+\delta)^{j+1}$. Thus, we have:
\[
w(A_n[j]) \cdot L \cdot (1+\delta)^j \leq A_n[j],\ \text{and}, \ R_n[j] \leq w(R_n[j]) \cdot L \cdot (1+\delta)^{j+1}.
\]

Let $i_j$ denote the last item admitted by \conv in the range $j$. Then we have
\begin{align*}
    A_{i_j-1}[j] &< R_{i_j}[j] \cdot \frac{(1-\epsilon (\ceil{\log_{(1+\delta)}{\nicefrac{U}{L}}}+1))}{(1+\delta)},\\
    &\le  w(R_{i_j}[j]) \cdot  L \cdot (1+\delta)^{j+1} \cdot \frac{(1-\epsilon (\ceil{\log_{(1+\delta)}{\nicefrac{U}{L}}}+1))}{(1+\delta)}.
\end{align*}
Since $A_{i_j-1}[j] \ge w(A_{i_j-1}[j]) \cdot L \cdot (1+\delta)^{j}$, combining with above equation gives
\begin{align}
 w(A_{i_j-1}[j]) \le w(R_{i_j}[j]) \cdot (1-\epsilon (\ceil{\log_{(1+\delta)}{\nicefrac{U}{L}}}+1)).
\end{align}
Then we further have
\begin{align}
 w(A_{i_j}[j]) \le \epsilon +  w(A_{i_j-1}[j]) \le \epsilon +  w(R_{i_j}[j]) \cdot (1-\epsilon (\ceil{\log_{(1+\delta)}{\nicefrac{U}{L}}}+1)).  
\end{align}
Summing the weights over all ranges gives
\begin{align}
    \sum_{j} w(A_{i_j}[j]) &\le \epsilon \cdot (\ceil{\log_{(1+\delta)}{\nicefrac{U}{L}}}+1) + \sum_{j}  w(R_{i_j}[j]) \cdot (1-\epsilon (\ceil{\log_{(1+\delta)}{\nicefrac{U}{L}}}+1)) \le 1.
\end{align}
where the last inequality holds since $\sum_{j}  w(R_{i_j}[j]) \le 1$. Thus, the solution of \conv is feasible.

\subsection{Additional Numerical Experiments}\label{apx:experiments}

%\subsection{Experimental Results}

% \begin{figure*}[t]
%     \small
%      \minipage[b]{\textwidth}
%      \includegraphics[width=\linewidth]{figs/u_four_legend.png}  
%      \endminipage\hfill\\
    
% 	\minipage[b]{0.25\textwidth}
% 	\includegraphics[width=\linewidth]{figs/u_300.png}
%     \centering{(a) U = 300}
% 	\endminipage\hfill
% 	\minipage[b]{0.25\textwidth}
% 	\includegraphics[width=\linewidth]{figs/u_1000.png}
%     \centering{(b) U = 1000}
% 	\endminipage\hfill
%     \minipage[b]{0.25\textwidth}
% 	\includegraphics[width=\linewidth]{figs/u_5000.png}
%     \centering{(c) U = 5000}
% 	\endminipage
%      \minipage[b]{0.25\textwidth}
% 	\includegraphics[width=\linewidth]{figs/u_20000.png}
%     \centering{(d) U = 20000}
% 	\endminipage
%     \caption{EMPTY }

%     \label{fig:ectexp}
%     \vspace{-1em}
% \end{figure*}

In this section, we discuss other sets of experiments and report additional experimental results of the proposed algorithms' performance.
\subsubsection{Other Experimental results setup}
We talked about part of the first set of experiments in \ref{sec:exp}.We should add that an \ma instance is created with uniformly random prediction intervals, where each instance makes a correct prediction with a fixed probability of \(1-\delta\) and chooses a wrong prediction otherwise. For \ipa, we used a uniformly random interval around the correct critical value since it is a trusted prediction.

The second set of experiments compares against the existing learning-augmented \OKP algorithm that uses frequency predictions (\textsc{sentinel})~\cite{sentinel21} using the synthetic data set from the original paper. This data set constructs a sequence of items with a small weight of $0.0001$ and values between 1 to 100, where each value appears (on average) $100(1+\nicefrac{\delta}{2})$ times in sequence. It has a lower bound and an upper bound with a fixed ratio $\nicefrac{\text{up}}{\text{low}} = (1+\delta)$, where $\delta$ is also a parameter showing the amount of prediction error as used in~\cite{sentinel21}.

In the third set of experiments, we use a historical data set of Bitcoin prices~\cite{Bitcoin} in 2017-2019 to evaluate algorithms under realistic data.  Each instance is constructed by randomly sampling 10,000 prices from one month of the data, setting all item sizes equal to $0.001$. We used the same notation of $\delta$ from \cite{sentinel21} to show the the accuracy of this experiment.

Finally, in the fourth set of experiments we follow related work~\cite{SunYang:22} and construct instances based on Google cluster traces~\cite{reiss2012google}. This data set records information about compute jobs on a cluster, with many short jobs and few long jobs.  
To construct values based on these job durations, we first scale each duration by a random number between $1$ and $250$,  followed by a resource scale factor randomly chosen from the set $\{0.01, 0.03, 0.05\}$.  Each instance includes 10,000 items generated as above, each with size $0.001$.

\subsubsection{experiment results}

To convey the benefits of succinct predictions, \autoref{fig:compare-sentinel} presents results on the synthetic data set and prediction method used in the original paper presenting the \textsc{sentinel} algorithm that uses frequency predictions.  We compare \ppa, \conv-\ppa, and \ta against \textsc{sentinel}. 
% Furthermore, in \autoref{fig:compare-sentinel}, we compare our \ppa, \conv-\ppa, and Sentinel algorithm~\cite{sentinel21}. 
As $\delta$(error parameter introduced by \cite{sentinel21}) grows, the number of items increases, and the prediction becomes worse. Here, it is worth mentioning that due to the way that they made their dataset, $\delta$ is also a parameter for a range of values. which make \ta to act worse as $\delta$ grows because it grows $\nicefrac{\text{up}}{\text{low}} = (1+\delta)$. For \ppa, we use the average of the upper and lower bounds in the frequency prediction to derive a single number indicating the average critical value $\vpr$ And we use this number as the prediction to \ppa. As shown in \autoref{fig:compare-sentinel}, a single prediction, in practice, outperforms complex predictions.

In \autoref{fig:compare-bit}, we use a real data set of Bitcoin prices for 2017-19. Each month consists of 10,000 prices.  To derive frequency predictions for \textsc{sentinel}, we generate two random numbers, \( a \) and \( b \), between 1 and \( 1 + \delta \) . Each upper bound is set to \( a \times s \), where \( s \) is the true frequency of a given price, and the lower bounds are set to \( \nicefrac{s}{b} \). 
%\adaml{this sentence needs to be rewritten, formally describe how the frequency prediction is constructed.  specifically with respect to whole actual value because that phrase hasn't been used elsewhere. }
As in the previous experiment, our critical value prediction is derived from this frequency prediction.
While \ppa, \conv-\ppa, and \textsc{sentinel} all perform well, achieving competitive ratios close to $1$, our algorithms outperform \textsc{sentinel} on average.

\autoref{fig:compare-goog} plots a similar comparison for the same algorithms on the real data set of Google cluster traces.  Frequency predictions for \textsc{sentinel} are generated using the same technique described above.  In this experiment, we find that \ppa and \conv-\ppa significantly outperform \textsc{sentinel} -- this is likely the case due to the underlying distribution of values (i.e., job duration) in the cluster traces, which make the frequencies (and hence the predictions) of values less robust to error.

\begin{figure}[t]
    \centering
    \small
    \vspace{-1em}
    \begin{minipage}[t]{0.31\textwidth}
        \centering
        \includegraphics[width=\linewidth]{figs/Fig_3.png}\vspace{-1em}
        \caption{\small Empirical competitive ratios of \ppa, \conv-\ppa, \textsc{sentinel}, and \ta on synthetic data from original \textsc{sentinel} paper~\cite{sentinel21}.}
        \label{fig:compare-sentinel}
    \end{minipage} \hfill
    \begin{minipage}[t]{0.31\textwidth} 
        \centering
        \includegraphics[width=\linewidth]{figs/Fig_5.png}\vspace{-1em}
        \caption{\small Empirical competitive ratios of \ppa, \conv-\ppa, and \textsc{sentinel} in experiments on real Bitcoin data~\cite{Bitcoin}.}
        \label{fig:compare-bit}
    \end{minipage} \hfill
    \begin{minipage}[t]{0.31\textwidth} 
        \centering
        \includegraphics[width=\linewidth]{figs/Fig_4.png}\vspace{-1em}
        \caption{\small Empirical competitive ratios of \ppa, \conv-\ppa, and \textsc{sentinel} in experiments on Google-Traces data.}
        \label{fig:compare-goog}
    \end{minipage}
    
\end{figure}

 %Here $1-\delta$ in the context of \ma algorithm is the probability of prediction being correct.
It is worth mentioning that for evaluating \ma performance we presented probabilistic input that has the probability of $1-\delta$ for the correctness of the prediction and the probability of $\delta$ for prediction being wrong.
In Figure~\ref{fig:f2}(a), we evaluate the performance of \ipa for different interval prediction widths, given as a percentage (higher is worse).  As shown in Theorem~\ref{thm:ipa}, we find that tighter prediction intervals yield better empirical performance.  Furthermore, all \ipa algorithms outperform the baseline robust \ta algorithm.  In Figure~\ref{fig:f2}(b), we evaluate the performance of \ma for \textit{untrusted predictions}.  We test regimes where $1 - \delta$ (probability of correct prediction) is 10\%, 20\%, and 50\%; we fix $\lambda = 0.9$, and the interval is $20\%$ of $[L, U]$.  We find that the performance of \ma smoothly degrades, and even bad predictions result in an algorithm that outperforms the robust baseline \ta.  Finally, in Figure~\ref{fig:f2}(c), we show a similar result for \ma -- we fix $\delta = 50\%$ and vary the trust parameter $\lambda \in \{0.3, 0.5, 0.9\}$, showing that when predictions are sufficiently good, \ma performs better when the predictions are trusted more (i.e., increasing $\lambda$). 

 \begin{figure*}[t]
    \small
    % \minipage[b]{\textwidth}
    % \includegraphics[width=\linewidth]{U_legend.png}  
    % \endminipage\hfill\\
    
	\minipage[t]{0.32\textwidth}
	\includegraphics[width=\linewidth]{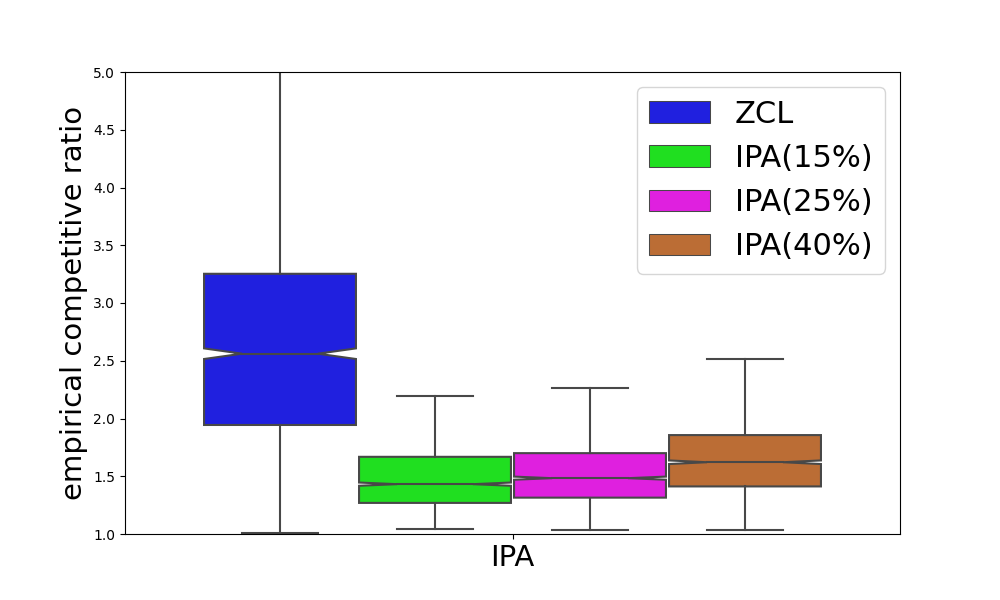}
    \centering{\textbf{(a)}}
    \label{fig:f2a}
	\endminipage\hfill
	\minipage[t]{0.32\textwidth}
	\includegraphics[width=\linewidth]{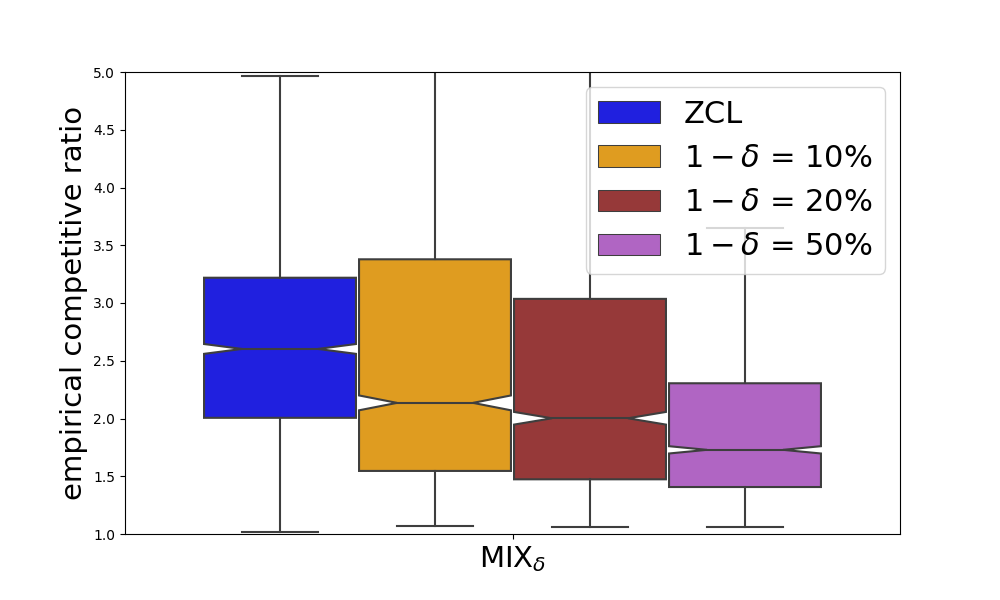}
    \centering{\textbf{(b)}}
    % \adaml{what's the $\lambda$ value here?}}
    \label{fig:f2b}
	\endminipage\hfill
    \minipage[t]{0.32\textwidth}
	\includegraphics[width=\linewidth]{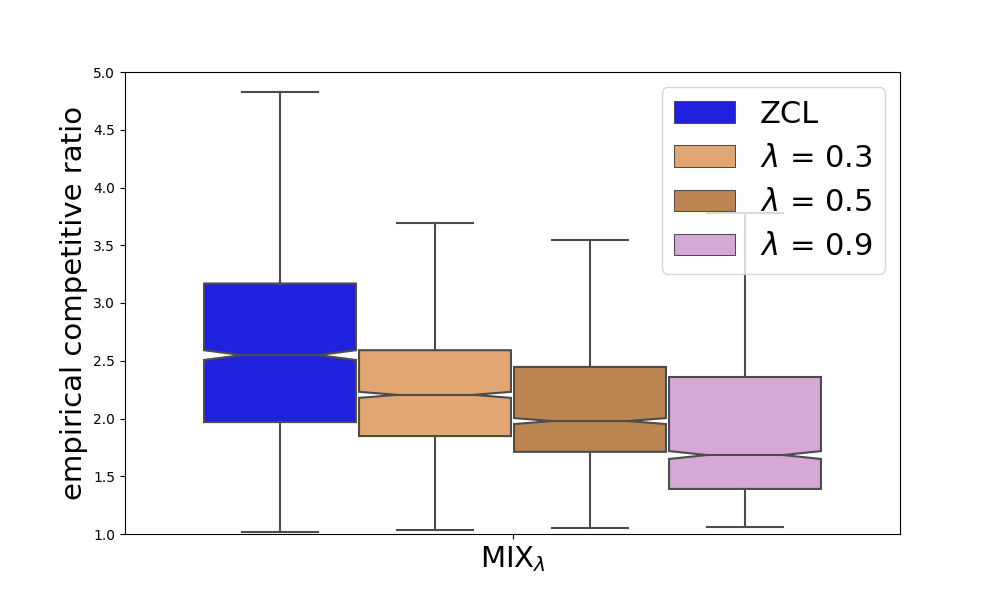}
    \centering{\textbf{(c)}}
    % \adaml{what's the $1-\delta$ value here?}}
    \label{fig:f2c}
	\endminipage
    \vspace{-2mm}
    % \caption{Examining variations of interval in interval-prediction-based algorithm (\ipa), and variations of the probability parameter $\delta$ and the trust parameter $\lambda$ in probabilistic-interval-prediction-based algorithm (\ma) against online threshold-based algorithm (\ta).}
    \caption{\small Interval-prediction-based algorithms with different interval sizes, probabilities $\delta$, and trust parameters $\lambda$: 
    \textbf{(a)} Competitive ratios of three interval widths in \ipa against baseline \ta; \textbf{(b)} Competitive ratio of three probability $\delta$ values in \ma against baseline \ta. $\lambda = 0.9$ and interval 20\%; and \textbf{(c)} Competitive ratio of three $\lambda$ values in \ma against \ta.  $\delta = 50\%$ and interval 20\%.}
        \label{fig:f2}
\end{figure*}
%~~~~~~~~~~~~~A1, Probability figures~~~~~~~~~~~

\begin{figure*}[t]
    \small
     % \minipage[b]{\textwidth}
     % \includegraphics[width=\linewidth]{figs/u_four_legend.png}  
     % \endminipage\hfill\\
	\minipage[b]{0.31\textwidth}
	\includegraphics[width=\linewidth]{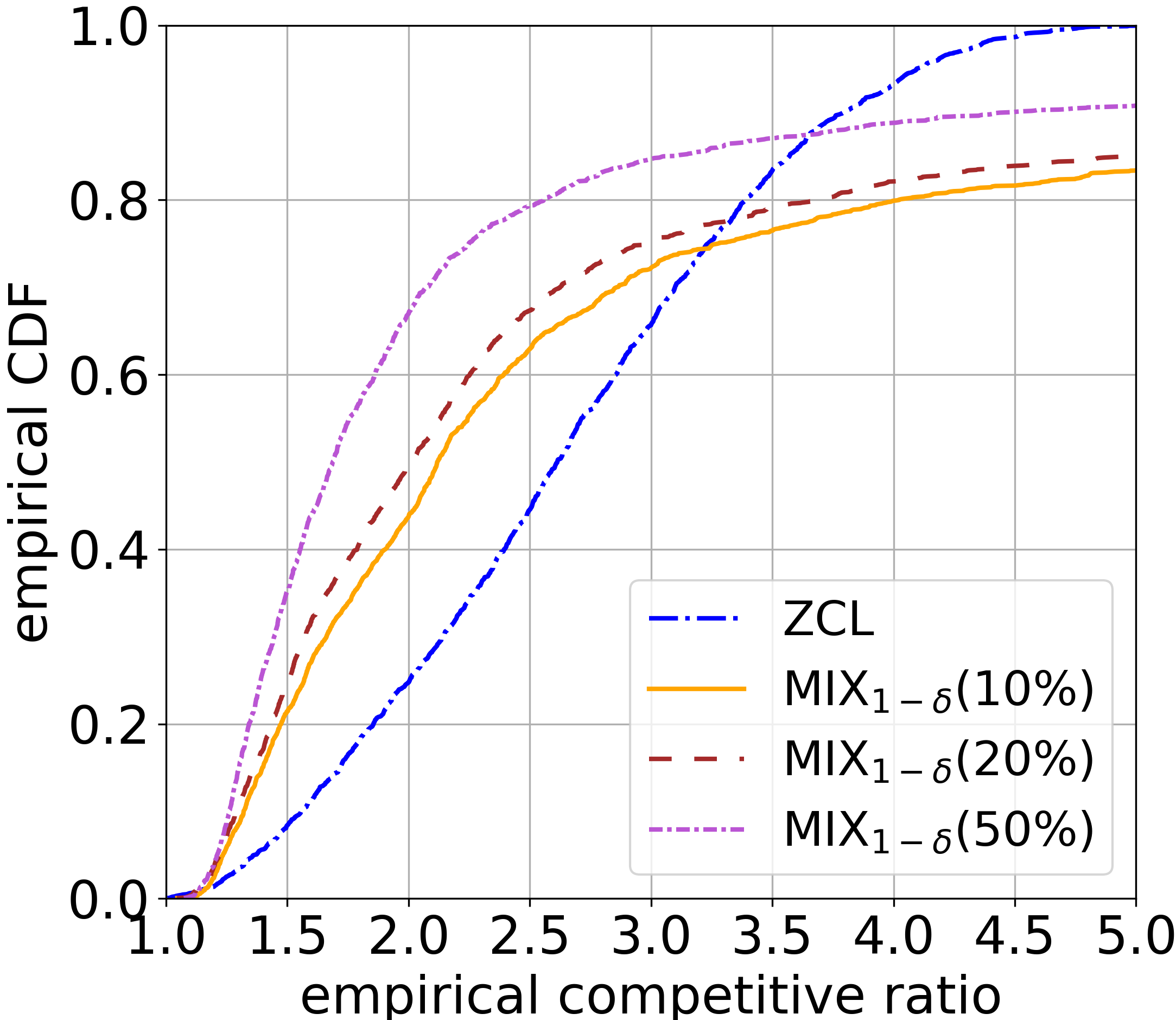}
    \centering{(a) interval: 20\%; \\
    $\lambda$: 0.9.}
\label{fig:fA1a}
	\endminipage\hfill
	\minipage[b]{0.31\textwidth}
	\includegraphics[width=\linewidth]{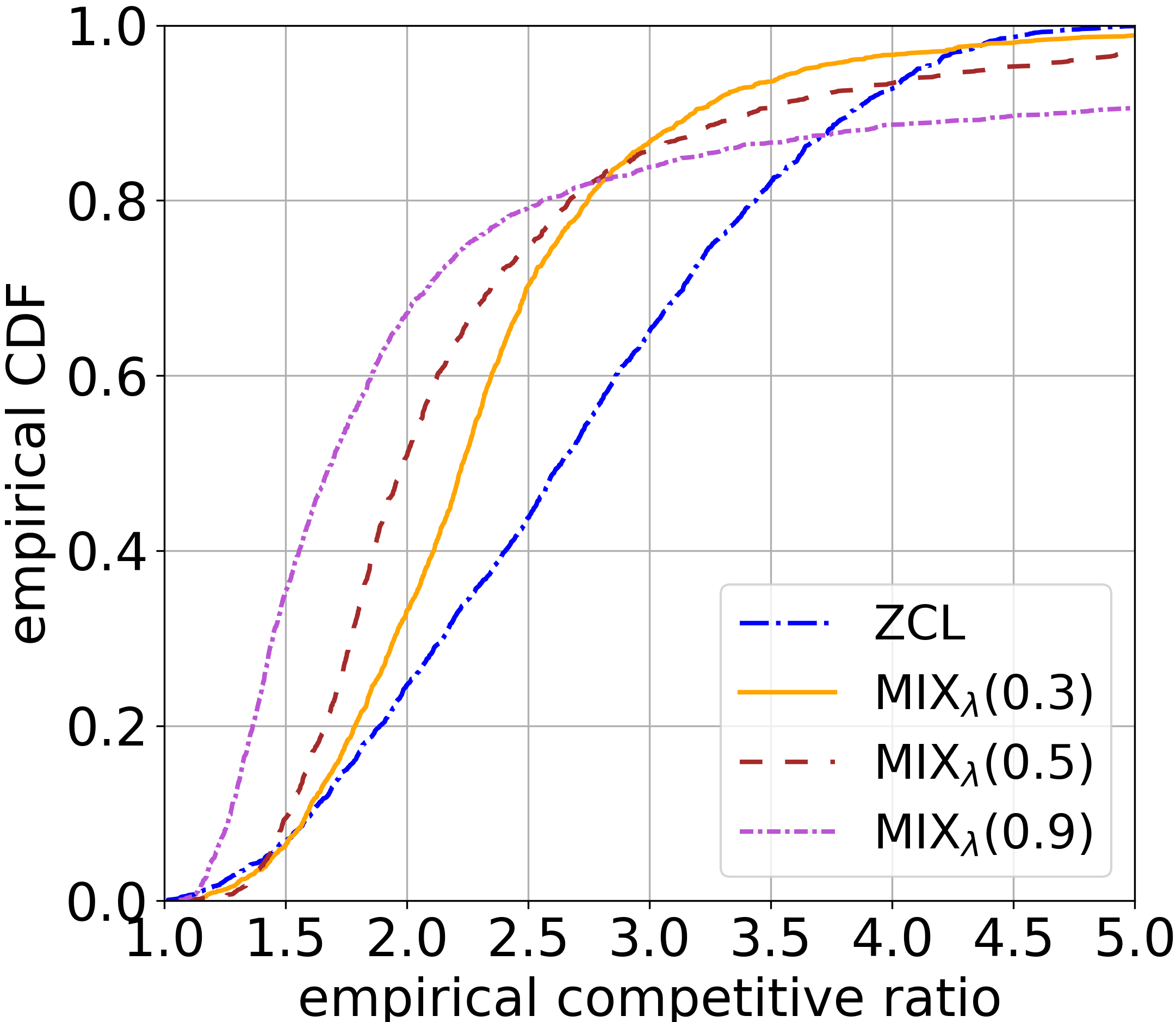}
    \centering{(b) 1-$\delta$: 50\%; \\
interval: 20\%.}
\label{fig:fA1b}
	\endminipage\hfill
    \minipage[b]{0.31\textwidth}
	\includegraphics[width=\linewidth]{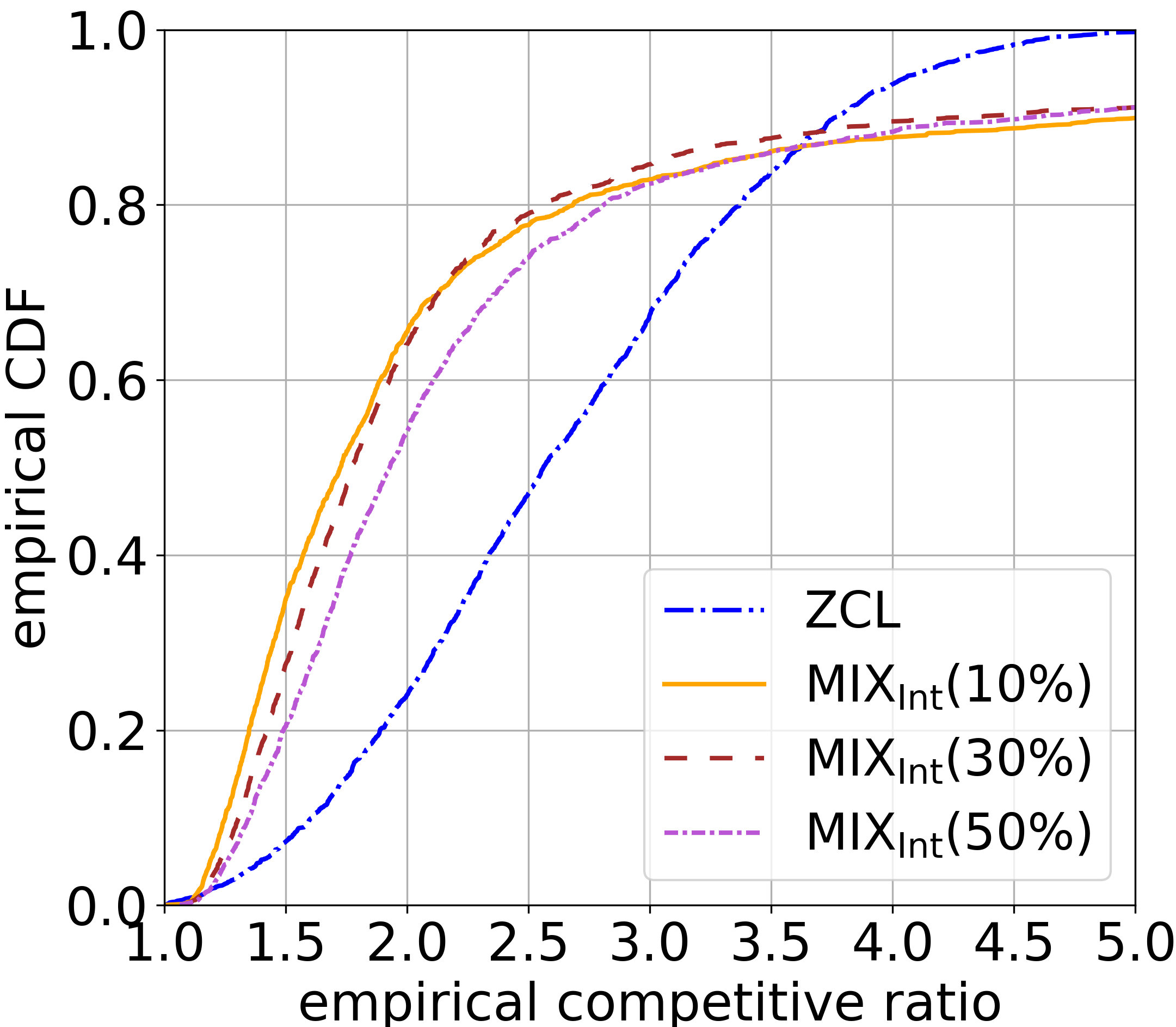}
    \centering{(c) 1-$\delta$: 50\%; \\
$\lambda$: 0.9.}
\label{fig:fA1c}
	\endminipage
 %     \minipage[b]{0.25\textwidth}
	% \includegraphics[width=\linewidth]{figs/u_20000.png}
 %    \centering{(d) U = 20000}
	% \endminipag
    \caption{Performance of the meta-algorithm (\ma) when provided with interval predictions, with varying parameters (probability of correct prediction $(1 - \delta)$, trust parameter $\lambda$, and interval size) against the robust threshold-based algorithm (\ta).} 
    \label{fig:A1}
\end{figure*}

In Figure~\ref{fig:A1}(a), we evaluate the performance of \ma for different values of $1-\delta$. This plot is a CDF plot version of Figure~\ref{fig:f2}(b), which illustrates how increasing the probability of correct predictions (corresponding to empirically more accurate machine-learned predictions), we consistently achieve a better competitive ratio, both on average and in the worst case.

In Figure~\ref{fig:A1}(b), we examine the trust parameter $\lambda$ with values ${0.3, 0.5, 0.9}$ for \ma. This figure represents a CDF plot of Figure~\ref{fig:f2}(c) and shows that as the algorithm increases its trust in predictions (i.e. by increasing $\lambda$), the average competitive ratio improves. However, the worst-case competitive ratio (represented in this plot by the \textit{tail} of the CDF) will deteriorate faster when placing more trust in predictions.

Similarly, Figure~\ref{fig:A1}(c) demonstrates that the width of the prediction interval (tested as 10\%, 30\%, and 50\% of the ``width'' of the interval $[L, U]$) also has an slight effect on the average and worst-case competitive ratio for \ma -- namely, tighter prediction intervals yield better empirical performance, which aligns with our expectations.

In Figure~\ref{fig:IPA}, which is a CDF plot of Figure~\ref{fig:f2}(a), we evaluate the empirical competitive ratio of \ipa for various interval widths, represented as a percentage (where higher values indicate worse performance).  We test intervals that are 15\%, 25\%, and 40\% as ``wide'' as $[L, U]$. Intuitively, reducing the interval size results in a better competitive ratio, because the bounds on the value of $\vpr$ are tighter.

%~~~~~~~~~~~~~A2 and A3 figures~~~~~~~~~~~

\begin{figure*}[ht]
    \centering
    \small
    \begin{minipage}[t]{0.45\textwidth}
        \centering
        \includegraphics[width=0.8\linewidth]{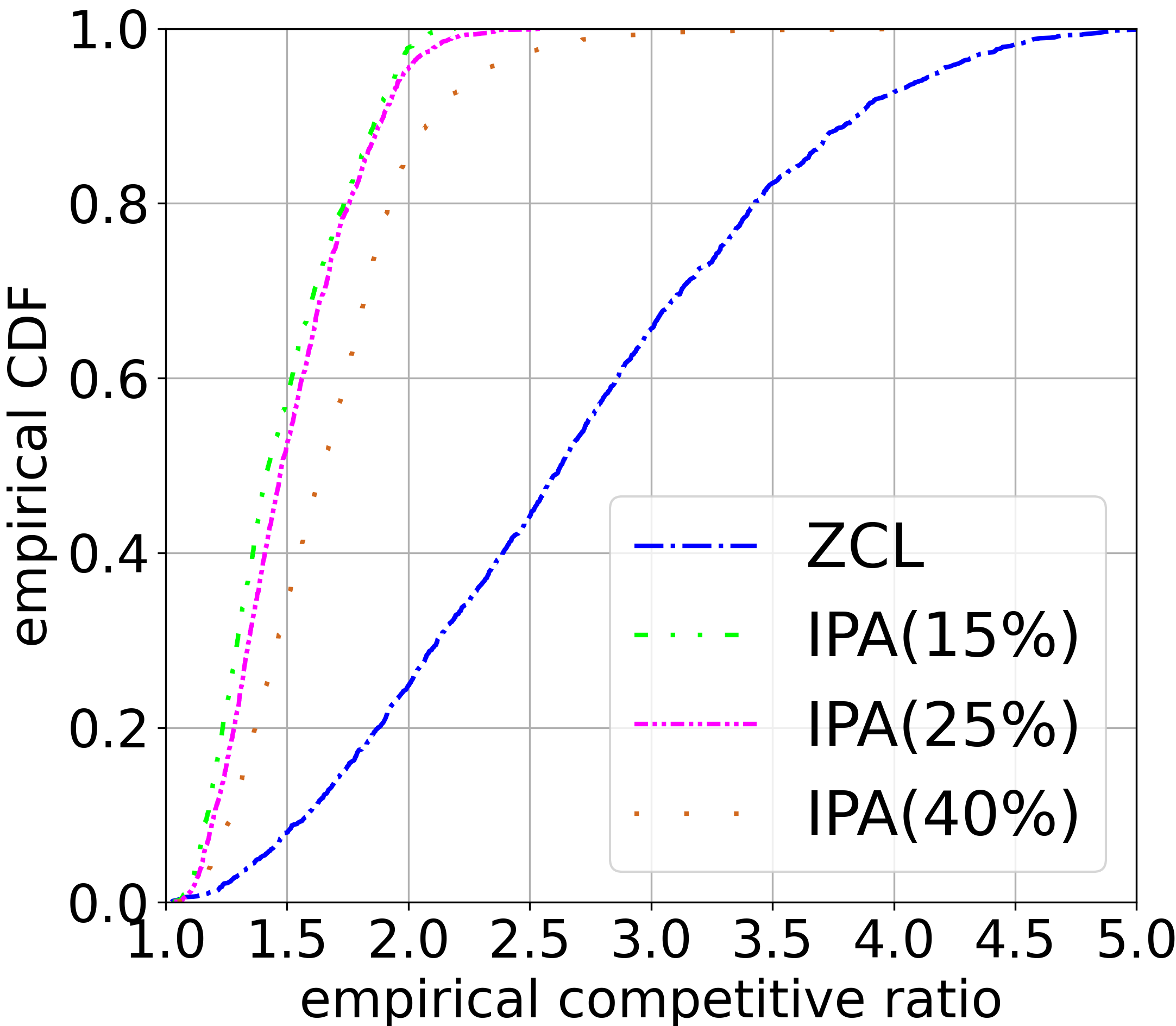}
        \caption{ The performance of interval-prediction-based algorithm (\ipa) with three intervals against online threshold-based algorithm (\ta). }
        \label{fig:IPA}
    \end{minipage}
    \hspace{2em}
    \begin{minipage}[t]{0.45\textwidth} 
        \centering
        \includegraphics[width=0.8\linewidth]{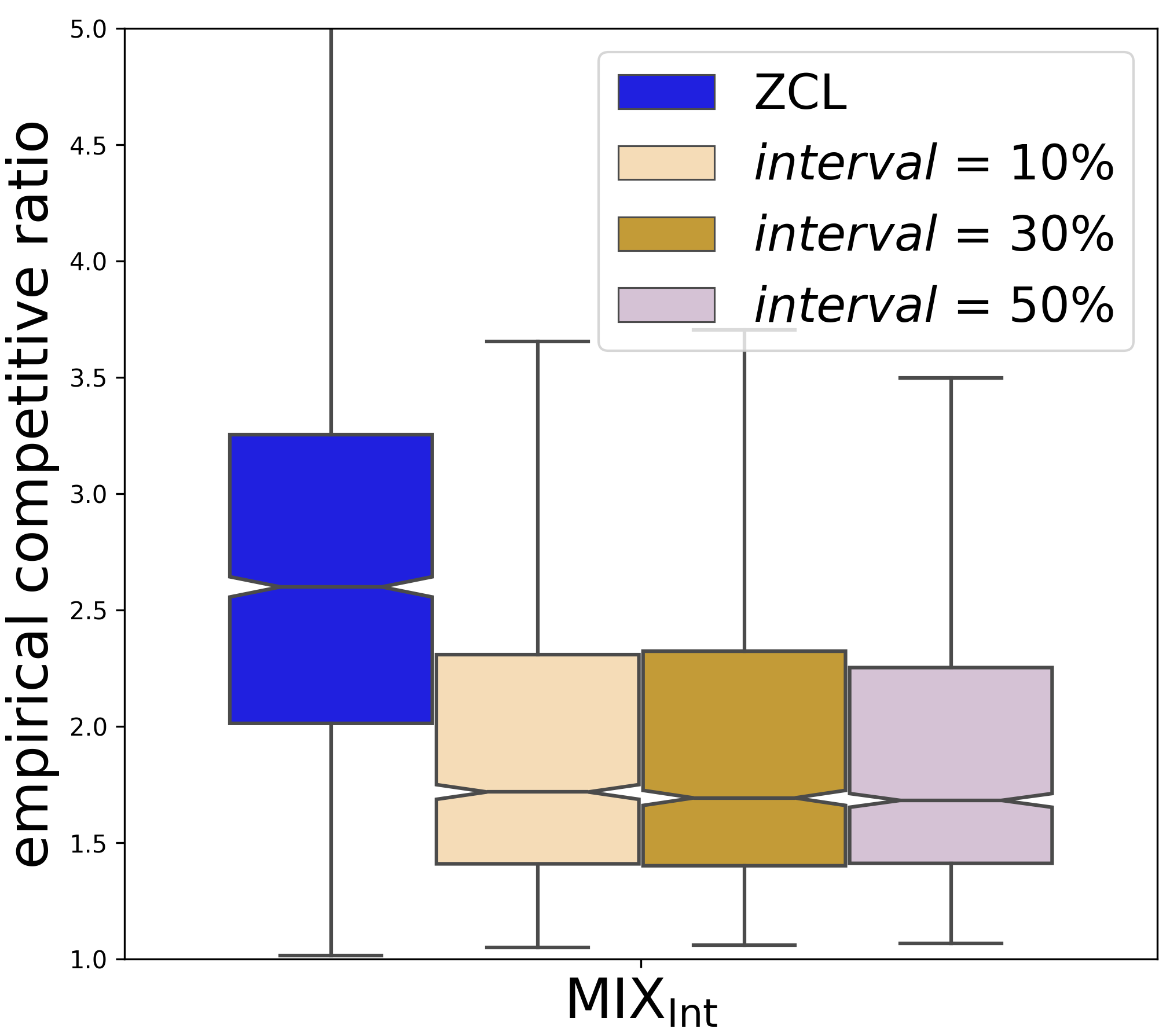}
        \caption{ Performance of meta-algorithm (\ma) when provided with interval predictions versus threshold-based algorithm (\ta). $\delta = 50\%, \lambda = 0.9$}
        \label{fig:PIPA_int}
    \end{minipage}
 
    \label{fig:A2}
    \vspace{-0.5em}
\end{figure*}

%~~~~~~~~~~~~~---------------~~~~~~~~~~~

Figure~\ref{fig:PIPA_int} is a box plot version of Figure~\ref{fig:f2}(c), illustrating that \ma's average-case performance is not significantly impacted by the width of the predicted interval, although tighter intervals are still intuitively better.%   that the interval size doesn't significantly impact performance, although it slightly favors smaller intervals.

\begin{figure*}[h!]
    \centering
    \small
    \begin{minipage}[t]{0.35\textwidth}
        \centering
        \includegraphics[width=\linewidth]{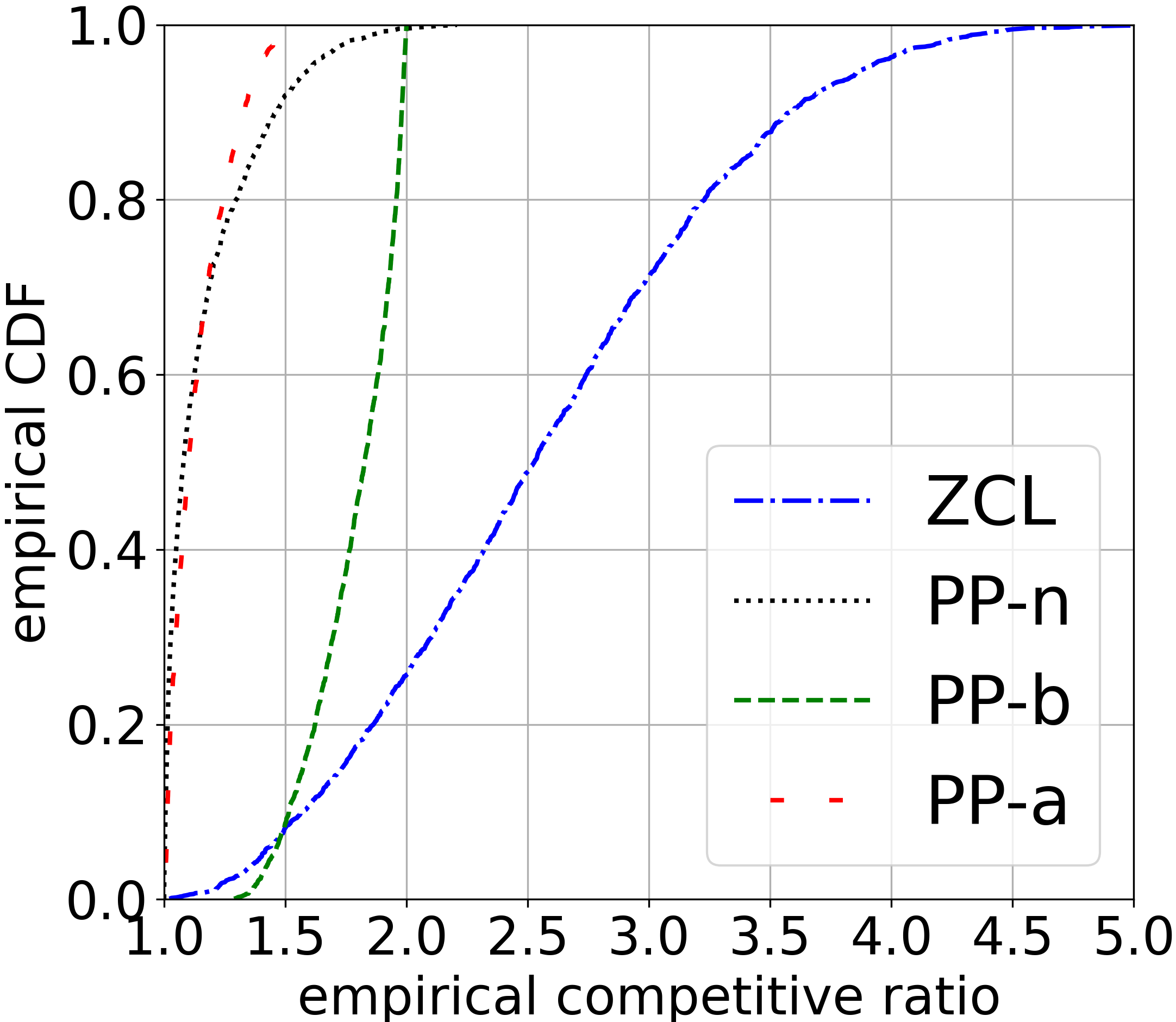}
        \caption*{$\wpr$ = 0.29.}
        \label{fig:wr1}
    \end{minipage}
    \hspace{4em}
    \begin{minipage}[t]{0.35\textwidth}
        \centering
        \includegraphics[width=\linewidth]{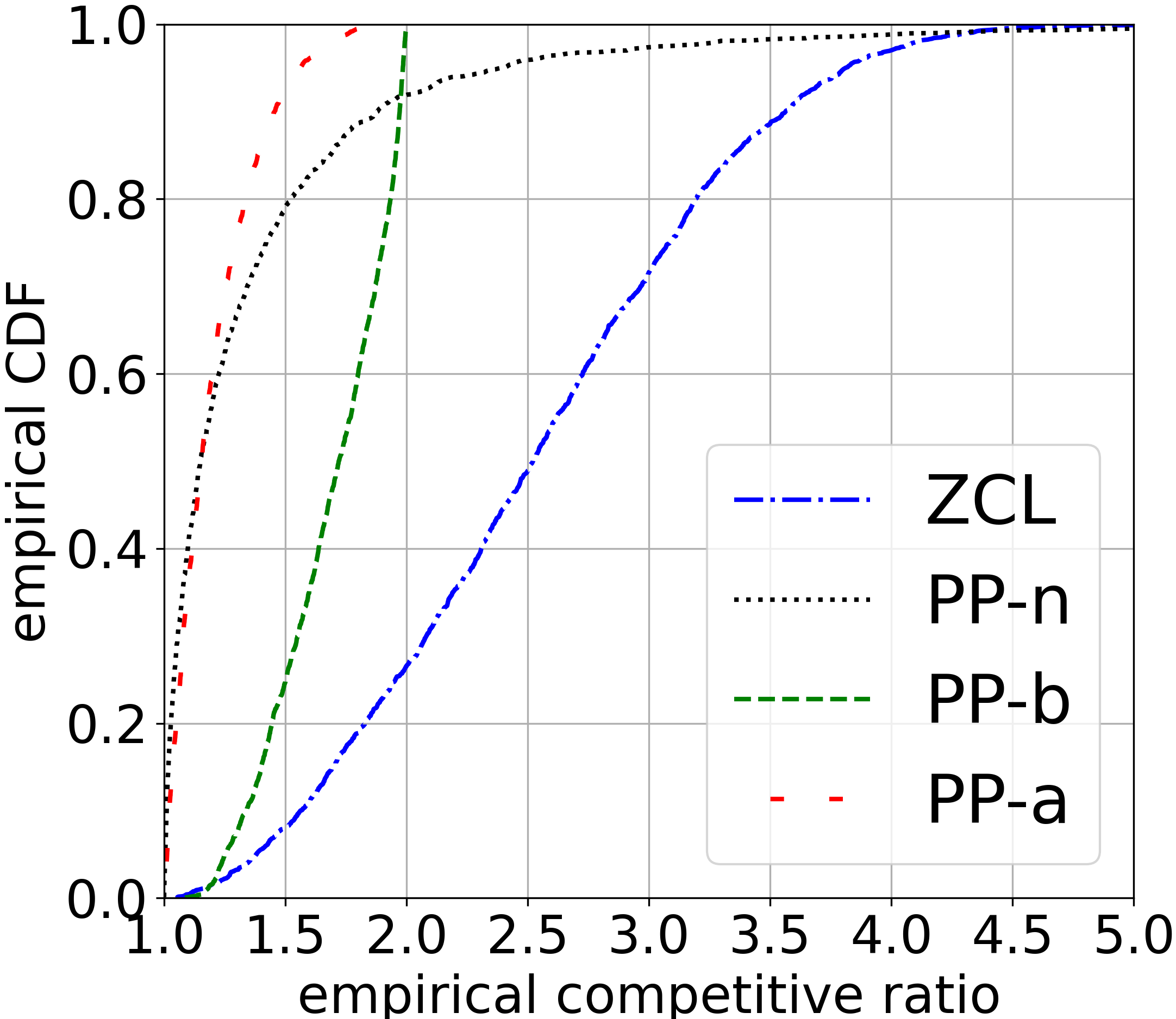}
        \caption*{$\wpr$ = 0.45.}
        \label{fig:wr2}
    \end{minipage}
    
    \vspace{-0.2em}
    
    \begin{minipage}[t]{0.35\textwidth}
        \centering
        \includegraphics[width=\linewidth]{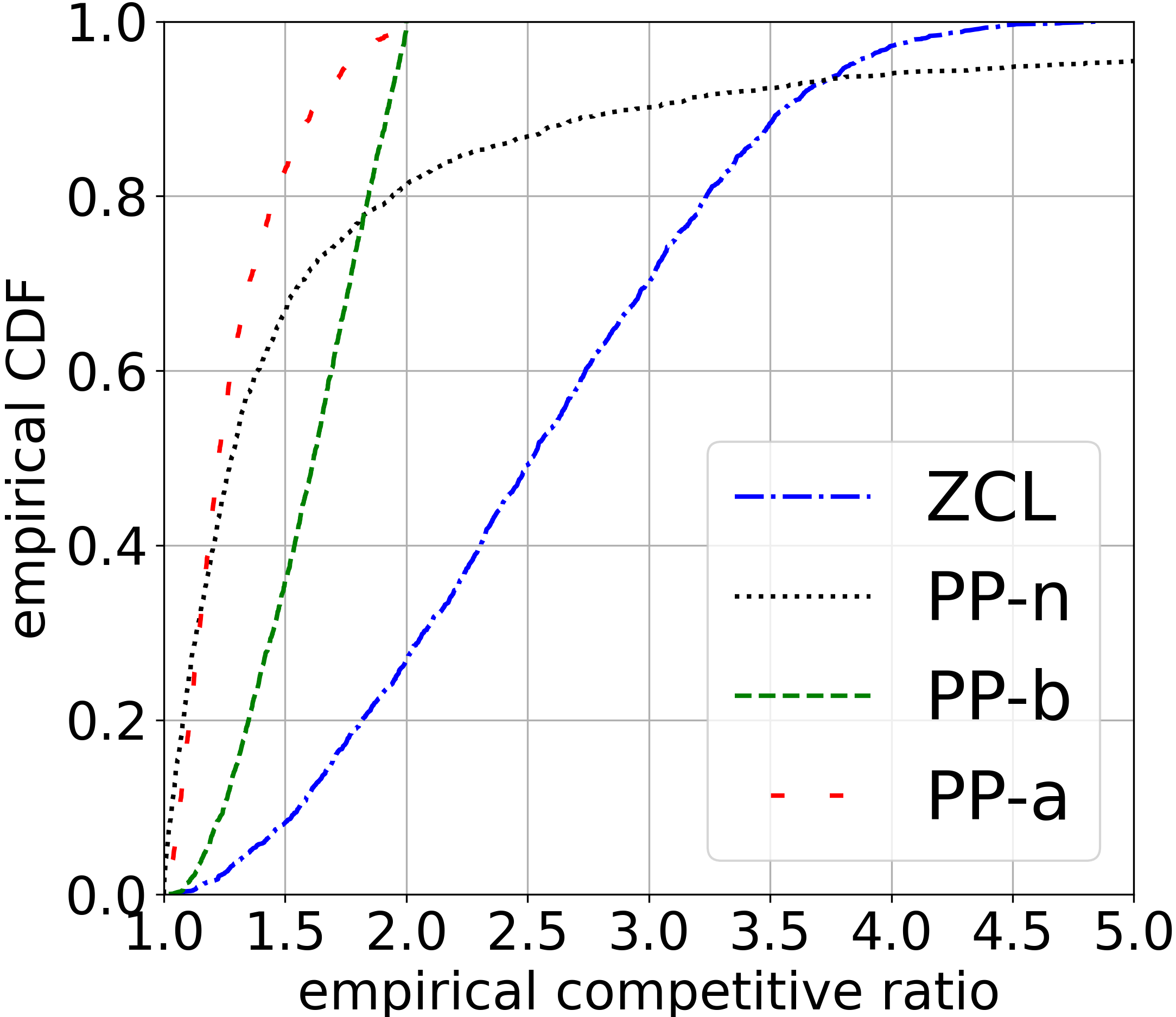}
        \caption*{$\wpr$ = 0.63.}
        \label{fig:wr3}
    \end{minipage}
    \hspace{4em}
    \begin{minipage}[t]{0.35\textwidth}
        \centering
        \includegraphics[width=\linewidth]{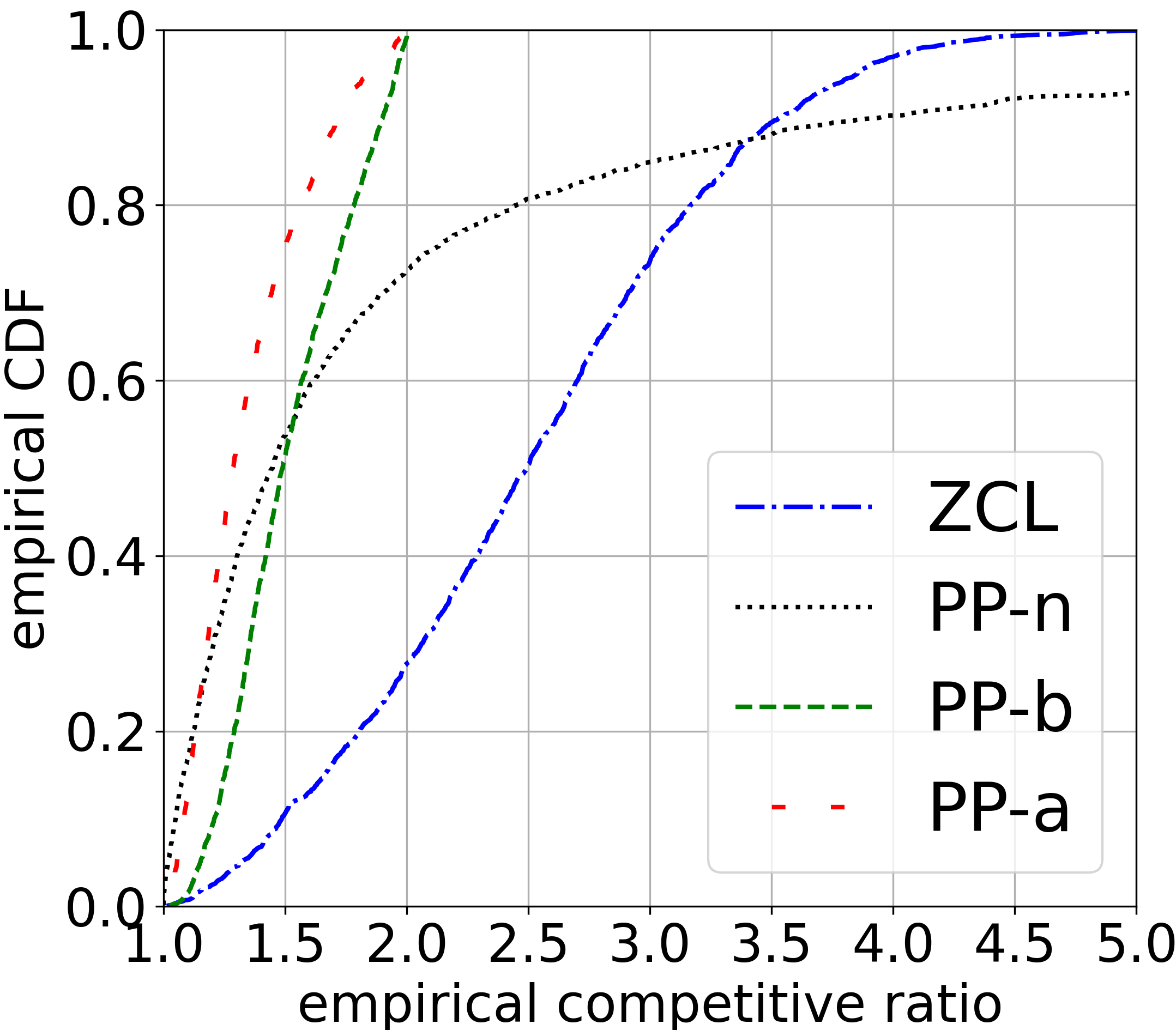}
        \caption*{$\wpr$ = 0.78.}
        \label{fig:wr4}
    \end{minipage}
 
    \caption{Performance comparison of naïve greedy algorithm (\ppn), basic 2-competitive algorithm (\ppb), and improved $1 + \min\{ 1, \wpr \}$-competitive algorithm (\ppa) against threshold-based algorithm (\ta) with varying $\wpr$.}
    \label{fig:wr}
    \vspace{-0.5em}
\end{figure*}

Finally, in Figure~\ref{fig:wr}, we vary the value of $\wpr$ in four CDF plots, one for each tested value.  In contrast to \ppn, \ppb, and \ta, these results show that the performance of \ppa substantially improves with smaller values of $\wpr$, confirming the results in Theorem~\ref{thm:ppa-a}, which establish that \ppa's competitive ratio depends on $\wpr$. These figures correspond to the CDF plot of Figure~\ref{fig:f1}(c). Another interesting observation is that as $\wpr$ decreases, \ppa's empirical performance worsens, but for high $\wpr$ values, it improves. This occurs because \ppa is designed to target 2-competitiveness, but performs well when selecting more items in $\vpr$ than the optimal solution. This occurs when $\wpr$ is high.

\end{document}